\theoremstyle{plain}
\newtheorem{theorem}{Theorem}[section]
\theoremstyle{definition}
\newtheorem{lemma}[theorem]{Lemma}
\newtheorem{assumption}[theorem]{Assumption}
\theoremstyle{remark}
\newcommand{\matrx}[1]{\ensuremath\boldsymbol{\rm #1}}
\newcommand{\vect}[1]{\ensuremath\boldsymbol{\rm #1}}
\newcommand{\set}[1]{\ensuremath{\mathcal #1}}
\newcommand{\graph}[1]{\ensuremath\boldsymbol{\mathcal #1}}
\newcommand{\twonorm}[1]{\ensuremath{\|#1\|_2}}
\newcommand{\fnorm}[1]{\ensuremath{\|#1\|_F}}
\newcommand{\inproduct}[1]{\ensuremath{\left \langle #1 \right \rangle}}
\newcommand*{\Scale}[2][4]{\scalebox{#1}{$#2$}}%
\newcommand\scalemath[2]{\scalebox{#1}{\mbox{\ensuremath{\displaystyle #2}}}}
\title{Mobilizing Personalized Federated Learning in Infrastructure-Less and Heterogeneous Environments via Random Walk Stochastic ADMM}
\author{
    Ziba Parsons \\
    CIS Department \\
    University of Michigan \\
    Dearborn, MI\\
    \texttt{zibapars@umich.edu}
\And
    Fei Dou \\
    School of Computing \\
    University of Georgia \\
    Athens, GA \\
    \texttt{fei.dou@uga.edu}
\And
    Houyi Du \\
    CIS Department \\
    University of Michigan \\
    Dearborn, MI\\
    \texttt{houyidu@umich.edu} 
\And
    Zheng Song \\
    CIS Department \\
    University of Michigan \\
    Dearborn, MI\\
    \texttt{zhesong@umich.edu} 
\And
    Jin Lu \\
    School of Computing \\
    University of Georgia \\
    Athens, GA \\
    \texttt{jin.lu@uga.edu} 
}
\begin{document}

\maketitle
\vspace{-10pt}
\begin{abstract}\label{sec:abstract}
\vspace{-6pt}
This paper explores the challenges of implementing Federated Learning (FL) in practical scenarios featuring isolated nodes with data heterogeneity, which can only be connected to the server through wireless links in an infrastructure-less environment. To overcome these challenges, we propose a novel mobilizing personalized FL approach, which aims to facilitate mobility and resilience. Specifically, we develop a novel optimization algorithm called Random Walk Stochastic Alternating Direction Method of Multipliers (RWSADMM). RWSADMM capitalizes on the server's random movement toward clients and formulates local proximity among their adjacent clients based on hard inequality constraints rather than requiring consensus updates or introducing bias via regularization methods. To mitigate the computational burden on the clients, an efficient stochastic solver of the approximated optimization problem is designed in RWSADMM, which provably converges to the stationary point almost surely in expectation. Our theoretical and empirical results demonstrate the provable fast convergence and substantial accuracy improvements achieved by RWSADMM compared to baseline methods, along with its benefits of reduced communication costs and enhanced scalability.

\end{abstract}
\vspace{-15pt}
\section{Introduction}
\label{sec:introduction}
\vspace{-5pt}
Federated Learning (FL) \cite{mcmahan2017communication,li2020federated,lim2020federated,banabilah2022federated} is a distributed machine learning paradigm that enables clients to learn a shared model without sharing their private data.
Unlike traditional machine learning approaches that rely on central servers for model training, FL allows clients to collaborate and train the model in a distributed manner, overcoming privacy issues related to passing data to a central server. Despite its advancements, real-world applications in environments with insufficient network support continue to face challenges. a) Maintaining consistent and reliable connections between the central server and clients becomes exceedingly challenging in environments lacking network infrastructures, e.g., natural disasters or military warzones. While intermittent connectivity may be available through satellite networks, the instability and limited capacity of such networks prevent the transmission of large data volumes, making it difficult to collect model updates from soldiers or first responders. b) The non-IID (non-independent and identically distributed) nature of clients' data, characterized by heterogeneity across the network, can hinder the generalization of the global model for each client.
Addressing these challenges is crucial for practical FL in such environments. In this paper, we propose RWSADMM, a novel FL scheme that uses Random Walk (RW) algorithm to enable server mobility among client clients.
These dynamic approach benefits scenarios with limited internet connectivity, where clients form clusters using local short-range transmission devices.

For instance, in the context of a warzone, we consider a scenario where soldiers are equipped with integrated visual augmentation systems (IVAS) \cite{handelman2022adaptive}. To facilitate the collection of model updates from the soldiers, a tactical vehicle equipped with a powerful computer 
navigates the warzone \cite{gilmore2015warfighter}, communicating with soldiers' locations through a satellite network.
Upon reaching a soldier, the tactical vehicle employs short-range communication technologies such as WiFi direct, Zigbee \cite{sadikin2020zigbee}, or Bluetooth to establish connections with nearby IVAS devices. Through these connections, the tactical vehicle collects model updates from the soldiers and distributes new models as necessary.
A graph-based representation is employed to determine the order of interactions with soldiers, where soldiers are depicted as nodes, and the edge between a soldier and its neighbor indicates that the neighbor is within the communication range of the vehicle that reaches the soldier.
This graph assists the tactical vehicle in making informed decisions about the order in which it engages with the soldiers.

Other applicable examples of such constrained network situations include ad hoc wireless learning \cite{mowla2019federated, li2021fedvanet, ochiai2022wireless}, wildlife tracking \cite{liu2015internet, toldov2018multi, arshad2020my}, Internet of Underwater Things (IoUT) \cite{kao2017comprehensive, victor2022federated}, natural disaster management \cite{ahmed2020active, imteaj2021fedresilience}, warzones \cite{bairwa2020mla}, or creating a digital democracy \cite{helbing2015society,gastil2017embracing} which 
assists in overcoming restrictions imposed by totalitarian regimes that prohibit internet access to civilians.



Specifically, to address challenge a), we propose an algorithmic framework called RWSADMM (Fig. \ref{fig:schema}), short for Random Walk Alternating Directional method of Multipliers, which considers a dynamic reachability graph among distributed clients using a movable vehicle as the central server. 
Clients are represented as nodes in the graph, with edges denoting neighborhood connections. Personal devices,  referred to as local clients, establish dynamic connectivity with the server when the server is nearby.
The server connects with a selected client and its neighbors while moving between locations using a non-homogeneous RW algorithm for probabilistic navigation. In each computation round, the vehicle broadcasts and gathers local model updates from residing clients, who rely on short-range communication to interact with the moving server, when it's within the communication range. The received updates are aggregated and used to update the global model iteratively.

To tackle the second challenge (b) arising from the heterogeneity of data distribution among clients, RWSADMM incorporates model personalization through local proximity among adjacent clients using hard inequality constraints, as opposed to unconstrained optimization with regularization techniques 
that may induce model bias. By formulating the problem with these constraints, RWSADMM reduces the computational complexity for clients, effectively mitigating the limitations of local computational power. This is achieved by designing the solver to the stochastic approximation of the minimization subproblem within the typical ADMM algorithm.

\begin{figure}[htp!] 
\setlength{\belowcaptionskip}{-5pt}
    \centering
    \begin{subfigure}{0.3\textwidth}
        \includegraphics[width=\textwidth]{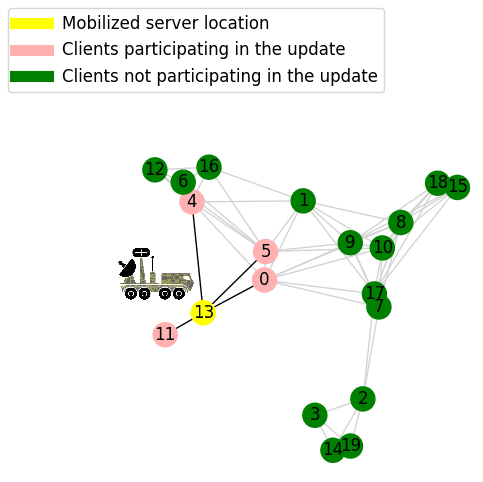}
        \caption{step $k$}
        \label{fig:schema-t=k}
    \end{subfigure}
  \begin{tikzpicture}
    \draw[dashed] (0,0) -- (0,4.5);
  \end{tikzpicture}
    \begin{subfigure}{0.3\textwidth}
        \includegraphics[width=\textwidth]{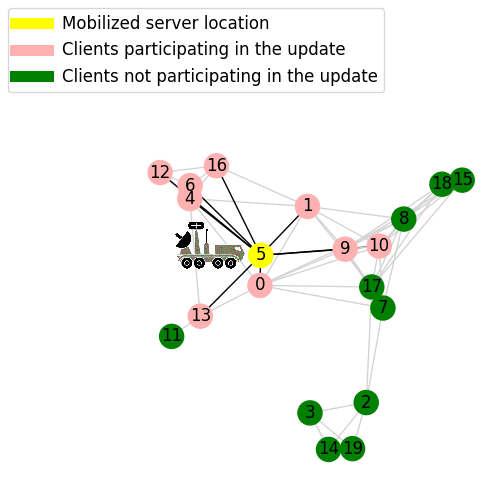}
        \caption{step $k+1$}
        \label{fig:schema-t=k+1}
    \end{subfigure}
  \begin{tikzpicture}
    \draw[dashed] (0,0) -- (0,4.5);
  \end{tikzpicture}
    \begin{subfigure}{0.3\textwidth}
        \includegraphics[width=\textwidth]{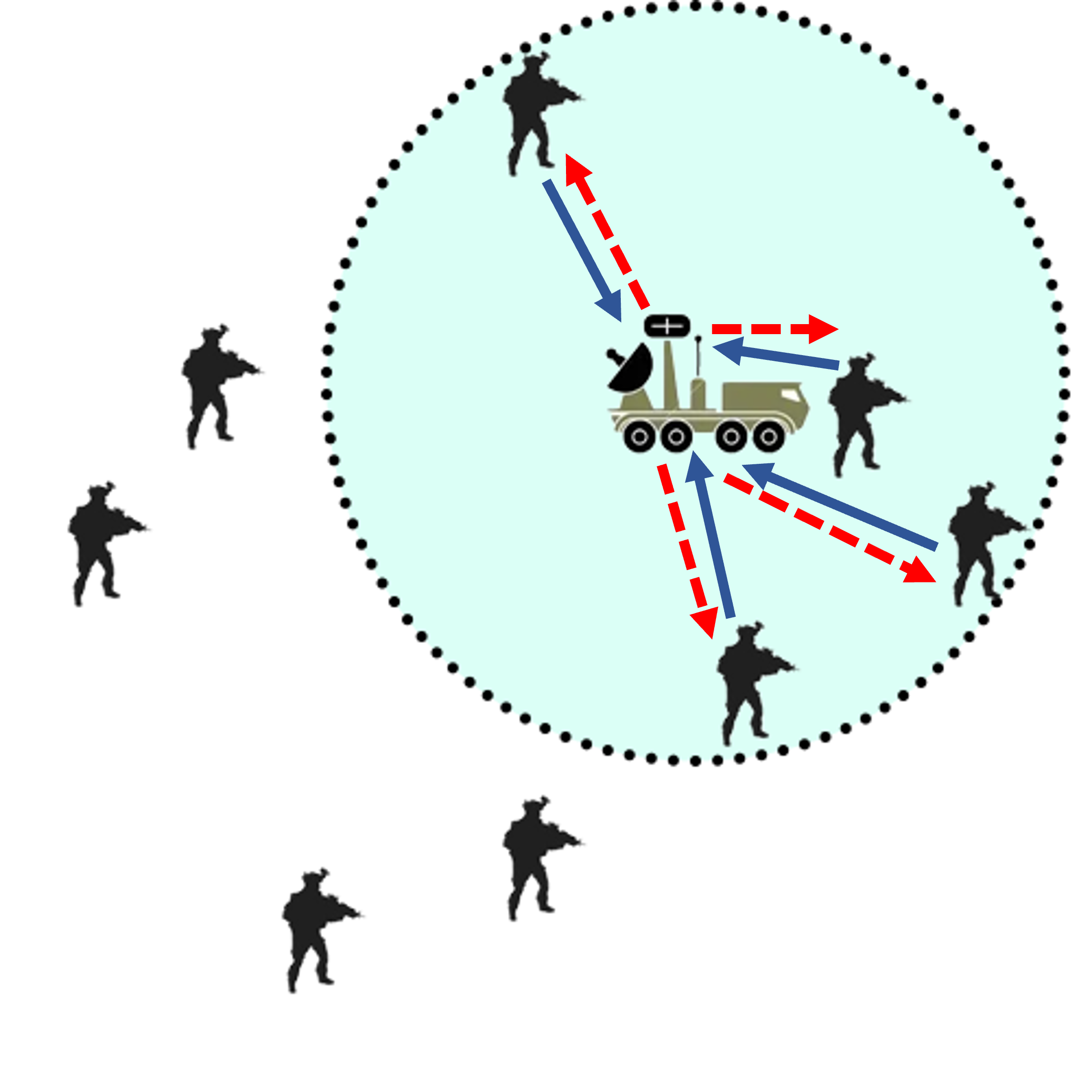}
        \caption{Mobilized FL between clients and the server}
        \label{fig:schema-t=k+1}
    \end{subfigure}    
    \caption{
    This illustration showcases the training process using the RWSADMM algorithm. A vehicle, serving as the mobile server, navigates between different clients using a random walk strategy. (a) In step $k$, the server moves to client $13$, covering the clients in $\mathcal{N}(13)$ for FL model training and completing the aggregation step. (b) In step $k+1$, client 5 is selected, and the vehicle moves to client 5. The training and aggregation steps occur within the zone encompassing $\mathcal{N}(5)$. 
    }
    \label{fig:schema}
\end{figure}
\vspace{4pt}
Our research makes three main contributions. Firstly, our proposed RWSADMM algorithm is \textbf{the first attempt to enable mobilizing FL with efficient communication and computation in an infrastructure-less setting}. The RWSADMM framework involves a server dynamically moving between different regions of clients and receiving updates from one or a few clients residing in the selected zone, which 
reduces communication costs and enhances system flexibility. 
Secondly, to address the issue of data heterogeneity among clients, RWSADMM \textbf{formulates local proximity among adjacent clients based on hard inequality constraints, 
avoiding the introduction of model bias via consensus updates}.
This approach provides an alternative realization of personalization, which is crucial when dealing with highly heterogeneous data distributions. Thirdly, to mitigate the computational burden on the clients, an efficient stochastic solver of the subproblem is designed in RWSADMM, which \textbf{provably converges to the stationary point almost surely in expectation} under mild conditions independent of the data distributions. Our theoretical and empirical results demonstrate the superiority of our proposed algorithm over state-of-the-art personalized Federated Learning (FL) algorithms, providing empirical evidence for the effectiveness of our approach.

\vspace{-6pt}
\section{Related Work}
\label{sec:related work}
\vspace{-5pt}
This paper is relevant to two distinct research areas, which are reviewed in two separate sections: FL frameworks that tackle data heterogeneity and ADMM-based FL frameworks.

\textbf{FL with data heterogeneity  }FL was initially introduced as FedAvg, a client-server-based framework that didn't allow clients to personalize the global model to their local data \cite{mcmahan2017communication}. This led to poor convergence due to local data heterogeneity, negatively impacting the global FL model's performance on individual clients. Recent works proposed a two-stage approach to personalize the global model. In the first stage, the FL global model is trained similarly to FedAvg. However, the second stage is included to personalize the global model for each local client through additional training on their local data. \cite{jiang2019improving} demonstrated that FedAvg is equivalent to Reptile, a new meta-learning algorithm, when each client collects the same amount of local data. To learn a global model that performs well for most participating clients, \cite{fallah2020personalized} proposed an improved version, Per-FedAvg. This new variant aims to learn a good initial global model that can adapt quickly to local heterogeneous data. An extension of Per-FedAvg, called pFedMe \cite{t2020personalized}, introduced an $\ell_2$-norm regularization term to balance the agreement between local and global models and the empirical loss. \cite{li2021ditto} proposed Ditto, a multi-task learning-based FL framework that provides personalization while promoting fairness and robustness to byzantine attacks. Ditto uses a regularization term to encourage personalized local models to be close to the optimal global model. \cite{deng2020adaptive} proposed to interpolate local and global models to train local models while also contributing to the global model. However, scaling these approaches can be challenging due to high communication costs, reliance on strong assumptions about network connectivity, or the requirement to compute second-order gradients. Additionally, there is a potential for enhancing the algorithms' overall performance. 

\textbf{ADMM-based FL   } The Alternating Direction Method of Multipliers (ADMM) is a widely recognized algorithm that effectively tackles optimization problems across multiple domains. In recent studies, ADMM has been successfully employed in distributed learning, as demonstrated by several works \cite{issaid2020communication, elgabli2020fgadmm, huang2019dp, graf2019distributed, zheng2018stackelberg, zhang2018improving, song2016fast, boyd2011distributed}.
In Federated Learning (FL) context, researchers have proposed various methods to address specific challenges. For handling falsified data in Byzantine settings, \cite{li2017robust} introduced a robust ADMM-based approach. To mitigate local computational burdens in FL, \cite{yue2021inexact} developed an inexact ADMM-based algorithm suitable for edge learning configurations. FL itself enables local training without the need to share personal data between clients and the server.
Despite the advantages of FL, there is still a concern regarding clients' privacy. Analyzing the parameter differences in the trained models uploaded by each client can compromise their privacy. To tackle this issue, \cite{ryu2021differentially} proposed an inexact ADMM-based federated learning algorithm that incorporates differential privacy (DP) techniques \cite{wei2020federated}. By leveraging DP, the algorithm enhances privacy protection during the FL process.
These ADMM-based frameworks also have high communication costs, ranging from $O(n)$ to $O(n^2)$ per iteration, depending on the network's density with $n$ clients. \cite{hong2017prox} introduced a Proximal Primal-Dual Algorithm (Prox-PDA) to enable network nodes to compute the set of first-order stationary solutions collectively. 
Moreover, these algorithms do not account for data heterogeneity in their framework designs, leading to performance deterioration in such scenarios. 
The most similar algorithm to RWSADMM is called Walkman \cite{mao2020walkman}. Walkman is an ADMM-based framework utilizing the random walk technique for distributed optimization. In Walkman, the communication and computation costs are reduced by activating only one agent at each step. 
Compared to other ADMM-based approaches, including Walkman, RWSADMM has several distinctive features. RWSADMM leverages stochastic approximation to reduce computation costs per iteration and enforces hard inequality constraints instead of consensus to manage heterogeneous data, resulting in increased robustness. Additionally, RWSADMM considers the dynamic graph, allowing it to adapt to changing network conditions and potentially improve communication efficiency. RWSADMM also incorporates a hard constraint parameter $\epsilon$ to promote local proximity among clients instead of using a regularization term as Walkman does to promote client consensus. This approach better balances personalization and global optimization. Finally, while Walkman is fully distributed without server involvement, RWSADMM is a server-based approach in which the server aggregates information from a small group of clients in each computation round. 
\vspace{-5pt}
\section{Random Walk Stochastic ADMM (RWSADMM)}
\label{sec:methods}
\vspace{-5pt}
Before delving into the specifics of the proposed algorithm, we present the key notation used throughout this research.
$\vect x\in \mathbb{R}^d$ represents a vector with length $d$ and $\vect e$ is defined as a vector with entries equal to 1 and $\matrx X\in \mathbb{R}^{l \times d}$ depicts a matrix with $l$ rows and $d$ columns. 
$[\vect x]_i$ represents the $i$th element of vector $\vect x$ and $[\matrx X]_{ij}$ is the $(i,j)$th element of matrix $\matrx X$. 
$[\matrx X]_i$, $[\matrx X]^j$ represent the $i$th row and $j$th column of matrix $\matrx X$, respectively. 
$(\nabla f(\vect{x}))_j$ is used to denote the $j$th entry of the gradient of $f(\vect{x})$. 
The inner product of $A$ and $B$ is shown as $\langle A, B\rangle$. 
$\mathbb{E}_t[.]$ indicates the expectation given the past $\xi_1,\ldots,\xi_{t-1} $. $\odot $ represents the Hadamard product/element-wise product and $\otimes $ represents the Kronecker product between two matrices. 
Finally, Norm p of vector $\vect{x}$ is denoted as $||\vect{x}||_p^p = \sum_{i=1}^{d}|x_i|^p$, $\vect x\in \mathbb{R}^d$ and Frobenius norm of matrix $\matrx{X}$ is written as $\fnorm{\matrx{X}} = \sqrt{\sum_{i=1}^n\sum_{j=1}^m\abs{x_{ij}}^2}$.

Let us first define our Mobilizing FL problem. Mobilizing FL, which involves a mobile server, can be formulated as an optimization problem on a connected graph $\graph{G}=(\set{V},\set{E})$. The graph comprises a set of $n$ clients, represented as $\set{V} = {v_1, v_2, \ldots, v_n}$, and a set of $m$ edges, denoted as $\set{E}$. The objective is to minimize the average loss function across all clients while adhering to inequality constraints that ensure local proximity among the clients' respective local models. The optimization problem can be formulated mathematically as follows:
\vspace{-3pt}
\begin{equation}
    \label{optimization problem}
\begin{aligned}
  &\scalemath{0.9}{ ~~~~~~~~~~~~~~\underset{\vect{x}_{1:n}\in\mathbb{R}^p}{\min} \; \frac{1}{n}\sum^n_{i=1}f_i(\vect{x}_i)} 
  ~~~~s.t.~ &  \scalemath{0.9}{\abs{\vect x_i - \vect x_j} \leq \vect \epsilon_i, \forall i\in \{1,\ldots,n\}, \forall j\in \mathcal{N}(i)/v_i.}
\end{aligned}
\end{equation}
where $f_i(\vect x_i)$ represents the local loss function with the model parameter as $\vect x_i$ for the client $i$, the vertex set $\mathcal{N}(i)$ contains client $i$ and its neighboring clients, and $\vect \epsilon_i$ is the non-consensus relaxation between local neighboring clients to replace model consensus requirement in typical FL. 
In our proposed FL method, we model the server's movement as a dynamic Markov Chain, introducing a dynamic element to the traditional ADMM-based approach. This work is the first to consider a dynamic mobile server within the ADMM-based FL framework.
In RWSADMM, client-server communication occurs only when the server is close to a client. The sequence of client indices that are updated, denoted as ${i_k}$, evolves based on a non-homogeneous Markov Chain with a state space of ${1,\ldots,n}$ \cite{ram2009incremental}. 
To describe the transition dynamics of the Markov Chain, we employ the non-homogeneous Markovian transition matrix $\matrx{P}(k)$, which represents the probabilities of transitioning between clients at time $k$. Specifically, the conditional probability of selecting client $j$ as the next client, given that client $i$ is the current client, is defined as:
\begin{equation}
    \label{the p_k eq}
    [\matrx{P}(k)]_{i,j} = Pr\;\{i_{k+1}=j | i_k=i\} \in [0,1]
\end{equation}
Additionally, it is assumed that the server determines the probability of all possible locations for its next destination based on the transition matrix $\matrx{P}(k)$ at time $k$. This provides a probabilistic approach to server navigation, allowing it to move around the network more effectively. 
To guarantee convergence, RWSADMM depends on the frequency of revisiting each agent. This quality is described by the \textit{mixing time} of the algorithm. An assumption for the mixing time is as follows:
\begin{assumption}
    \label{irreducible aperiodic markov chain}
The random walk $(i_k)_{k\geq0}, v_{i_k}\in \set{V}$ forms an irreducible and aperiodic (ergodic) Markov Chain with transition probability matrix of $\matrx{P}(k)\in\mathbb{R}^{n\times{n}}$ defined in Eq. \eqref{the p_k eq} and stationary distribution $\vect{\pi}$ satisfying  $\lim_{k\rightarrow\infty}\vect{\pi}^T\matrx{P}(k) =\vect{\pi}^T$.
The mixing time (for a given $\delta>0$) is defined as the smallest integer $\tau(\delta)$ such that $\forall i\in V$,
\vspace{-3pt}
\begin{equation} \label{mixing time ineq1}
\scalemath{0.9}{
    \norm{[\matrx{P}(k)^{\tau(\delta)}]_{i} - \vect{\pi^T}} \leq \delta\vect{\pi}_* 
    } 
\end{equation}
where $\vect{\pi}_* := \min_{i\in \set{V}}\vect{\pi}_i$. This inequality states the fact that regardless of the current state $i$ and time $k$, the probability of visiting each state $j$ after $\tau(\delta)$ steps is $(\delta\vect{\pi}_*)$-close to $\vect{\pi}_j$, that is, $\forall i,j\in \set{V}$,
\begin{equation}\label{mixing time ineq2}
\scalemath{0.9}{   
\norm{[\matrx{P}(k)^{\tau(\delta)}]_{ij} - \vect{\pi}_j} \leq \delta\vect{\pi}_* 
}
\end{equation}
\end{assumption}
Eq. \eqref{mixing time ineq2} is used to prove the sufficient descent of a Lyapunov function $L_\beta$ in Section 3.1.
Let's also define
\begin{equation}
    \label{p_max eq}
    \scalemath{0.9}{
    \matrx{P}_{max}=\lim_{k=+\infty}\{\matrx P | [\matrx P]_{ij}=\max_k[\matrx P(k)]_{ij} \},
    }
\end{equation}
from which one can further obtain $\norm{\matrx{P}(k)} \leq \norm{\matrx{P}_{max}}$ for all $k$.
Namely, the matrix $P_{max}$ is computed as the element-wise maximum matrix among all the matrices $P(k)$, for $k=0,\ldots,\infty$.
Therefore, the mixing time requirement in Eq. \eqref{mixing time ineq1} is guaranteed to hold for
\begin{equation}\label{mixing time ineq3}
\begin{aligned}
  \scalemath{0.9}{  
  \tau(\delta)  = \big\lceil \frac{1}{1-\sigma(\matrx{P})} \ln \frac{\sqrt{2}}{\delta\pi_*}  \big\rceil \overset{(a)}{\leq} \big\lceil \frac{1}{1-\sigma(\matrx{P}_{max})} \ln \frac{\sqrt{2}}{\delta\pi_*}  \big\rceil
  }
\end{aligned}
\end{equation}
where $ \sigma(\matrx P) := \sup \{\norm{f^T\matrx{P}}/\norm{f}:f^T\matrx{1}=0, f\in\mathbb{R}^n \}  $.  Using Eq. \eqref{p_max eq}, we have $\forall \matrx{P},\; \sigma(\matrx P) \leq \sigma(\matrx P)_{max}$ and the inequality $(a)$ can be inferred.
\vspace{-5pt}
\subsection{Algorithm}
\label{problem formulation}
\vspace{-5pt}
In this section, we derive RWSADMM by integrating random walk and stochastic inexact approximation techniques into ADMM. 
Considering $\matrx{X} := row(\vect{x}_1, \vect{x}_2,\ldots,\vect{x}_n) \in \mathbb{R}^{p \times n},\; F(\matrx{X}) := \sum^n_{i=1} f_i(\vect{x}_i) $, 
where the operation $row(.)$ refers to row-wise stacking of vectors $\vect{x}_i$'s. The mobilizing FL problem \eqref{optimization problem} can be expressed as:
\begin{equation}
    \label{problem formulation eq2}
\begin{aligned}
&\scalemath{0.9}{
~~~~~~~~~~\underset{\vect{y}_{1:n},\matrx{X}}{\min} \;\; \frac{1}{n}F(\matrx{X})} 
   &\scalemath{0.9}{ 
   ~\text{s.t.} \;\; \abs{\vect{1}\otimes \vect{y}_i - \matrx{X}_{\mathcal{N}(i)}} \leq \vect{1}\otimes \vect \epsilon_i/2, \forall i=1,\ldots,n }
\end{aligned}
\end{equation}
where $\vect{1} = [1 \;1 \ldots 1] \in \mathbb{R}^{n_i}$, $n_i$ denotes the volume of the vertex set $\set{N(i)}$. The constraint implies that $\abs{\vect{x}_i - \vect{x}_j} \leq \vect \epsilon_i$, $\forall i = 1 \ldots n$ and $\forall j \in \mathcal{N}(i)$ through the triangle inequality. $\vect y_{i}$ stored on the server is necessarily introduced as a local proximity of $\mathcal{N}(i)$.
We can obtain the augmented Lagrangian for problem \eqref{problem formulation eq2} 
\begin{align}
    \scalemath{0.85}{  L_\beta(\vect{y}_{1:n},\matrx{X},  \matrx{Z}_{1:n})= }&\scalemath{0.85}{ \frac{1}{n}\big[ F(\matrx{X}) + \sum_{i=1}^n\inproduct{\matrx{Z}_i, \abs{\vect{1}\otimes \vect{y}_i - \matrx{X}_{\mathcal{N}(i)}} - \vect \varepsilon_i} } 
    \scalemath{0.85}{ + \frac{\beta}{2}\sum_{i=1}^n\fnorm{\abs{\vect{1}\otimes \vect{y}_i - \matrx{X}_{\mathcal{N}(i)}} - \vect \varepsilon_i}^2   \label{problem formulation eq3} \big] } 
\end{align}
where 
$\vect \varepsilon_i=\vect \epsilon_i/2$ and $\matrx{Z}_i \in \mathbb{R}^{n_ip}$ are the dual variable and $\beta > 0$ is the barrier parameter. 
The RWSADMM algorithm minimizes the augmented Lagrangian $L_\beta(\vect{y}_{1:n},\matrx{X}, \matrx{Z}_{1:n})$ in an iterative manner. At each iteration $k$, only a subset of clients covered by the mobilized server, the clients in $\mathcal{N}(i_k)$, participate in the federated update. The following updates are performed:
\begin{equation}
\nonumber
\begin{aligned}
\vect x_{i_{k}}= 
&\scalemath{0.9}{
~~\arg\min_{\vect x_{i_{k}}} L_\beta (\vect y'_{i_{k}}, \matrx{x}_{i_k},  {\matrx{z}'_{i_{k}}}), } 
\scalemath{0.9}{
~~~~\vect y_{i_{k}}= 
\arg\min_{\vect y_{i_{k}}} L_\beta (\vect y_{i_{k}}, \matrx{X}_{\mathcal{N}(i_k)},  {\matrx{Z}'_{\mathcal{N}(i_{k})}}),}
\end{aligned}
\end{equation}
where $\vect y'_{i_{k}}$, $\vect x_{i_{k}}$, and ${\matrx{z}'_{i_{k}}}$ denote the groups of variables of the local parameters stored by client $i_k$ at the $(k-1)th$ update. 
After solving these subproblems, we update the multiplier $\matrx z_{i_{k}}$ as follows:
\begin{equation}
\nonumber
\begin{aligned}
\matrx z_{i_{k}}=
&\scalemath{0.9}{
\vect{z}'_{i_{k}} + \beta (\abs{ \vect{y}_{i_k} - \matrx{x}_{i_k}} - \vect \varepsilon_i), }
\end{aligned}
\end{equation}
Next, we derive the solver of each subproblem. The three steps are noted as Updating $\vect x_{i_{k}}$, Updating $\vect y_{i_{k}}$, and Updating $\matrx z_{i_{k}}$.

Updating $\vect x_{i_{k}}$:
\vspace{-25pt}
\begin{align}
\scalemath{0.9}{\min_{\vect x_{i_{k}}} \bigg[ f_{i_k}(\vect x_{i_{k}}) +  \inproduct{\matrx{z}'_{i_k}, \abs{\vect{y}'_{i_k} - \matrx{x}_{i_k}} - \vect \varepsilon_{i_k}} 
  + \frac{\beta}{2}\twonorm{ \abs{\vect{y}'_{i_k} - \matrx{x}_{i_k}} - \vect \varepsilon_{i_k} }^2     \bigg] }
  \hspace{-20pt} 
  \label{eq:x_subproblem_initial}
\end{align}
The Problem (\ref{eq:x_subproblem_initial}) can be solved iteratively, consuming significant computational resources for the local clients. Furthermore, the computational complexity increases as the local dataset grows, as is often true in real-world applications. By utilizing the stochasticity and first-order subgradient expansion, we arrive at a more computationally efficient approximation of the original problem in Eq. (\ref{eq:x_subproblem_initial2}).
\begin{align}
\scalemath{0.9}{\min_{\vect x_{i_{k}}} \bigg[ g_{i_k}(\vect{x}'_{i_k},\vect{\xi}_{i_k}) (\vect x_{i_{k}}-\vect x'_{i_{k}}) +  \inproduct{\matrx{z}'_{i_k}, \abs{ \vect{y}'_{i_k} - \matrx{x}_{i_k}} - \vect \varepsilon_{i_k}}  
   +  \frac{\beta}{2}\twonorm{ \abs{\vect{y}'_{i_k} - \matrx{x}_{i_k}} - \vect \varepsilon_{i_k} }^2     \bigg] }
   \label{eq:x_subproblem_initial2}
\end{align}

In Eq. \eqref{eq:x_subproblem_initial2}, $\vect{\xi}_{i_k}$  denotes one or a few samples randomly selected by client $i_k$ from its feature set and their ground truth labels in pairs at the $k$-th iteration. The function $g_{i_k}(\vect{x}'_{i_k},\vect{\xi}_{i_k}) $ is defined as the stochastic gradient of $f_{i_k}(\vect x'_{i_{k}})$ at $ \vect{x}'_{i_k}$. The stochastic approximation can tremendously reduce memory consumption and save computational costs in each iteration.  By setting the subgradient of the objective function in Eq. (\ref{eq:x_subproblem_initial2}) to zero, we can derive the closed-form solution in Eq. (\ref{eq:closeform_x}).
\begin{align}
\scalemath{0.8}{\vect{x}_{i_k} =} &\scalemath{0.8}{\vect{y}'_{i_k} +\frac{1}{\beta}\vect{z}'_{i_k}\odot sgn(\vect t')-\frac{1}{\beta}sgn(\vect t')\odot  \big( \vect \varepsilon_i +g_{i_k}(\vect{x}'_{i_k},\vect{\xi}_{i_k})  \big)} 
= \scalemath{0.8}{\vect{y}'_{i_k} +\frac{1}{\beta}sgn(\vect t')\odot (\vect{z}'_{i_k}-\vect \varepsilon_i-g_{i_k}(\vect{x}'_{i_k},\vect{\xi}_{i_k}) )}
\label{eq:closeform_x}
\end{align}
where the signum function $sgn(\cdot)$ extracts the signs of a vector and $\vect t'_{i_k}=\vect{y}'_{i_k} - \matrx{x}'_{i_k}$.

Updating $\vect y_{i_{k}}$:
We solve the following problem
\begin{equation}
\begin{aligned}
\min_{\vect y_{i_{k}}} 
& \scalemath{0.9}{
\inproduct{\matrx{Z}_{\mathcal{N}(i_k)}, \abs{\vect{1}\otimes \vect{y}_{i_k} - \matrx{X}_{\mathcal{N}({i_k})}} -\vect{1}\otimes  \vect \varepsilon_{i_k}} 
+\frac{\beta}{2}\fnorm{ \abs{\vect{1}\otimes \vect{y}_{i_k} - \matrx{X}_{\mathcal{N}({i_k})}} - \vect{1}\otimes \vect \varepsilon_{i_k}}^2 }
\end{aligned}
\label{eq:y_subproblem}
\end{equation}

one can readily derive a closed-form solution for the problem \eqref{eq:y_subproblem} as:
\begin{equation}
\label{eq:y_solution1}
\scalemath{0.9}{\vect y_{i_{k}} = \frac{1}{n_{i_k}}\sum _{j\in \mathcal{N}_{i_k}} \big[\vect x_{i_k} -(\frac{\vect z_{i_k} }{\beta} + \vect \varepsilon_{i_k} ) \odot sgn(\vect t_{i_k})  \big] }
\end{equation}
where $\vect t_{i_k}=\vect{y}'_{i_k} - \matrx{x}_{i_k}$ is similar to that of Eq. \eqref{eq:closeform_x} except the updated $\matrx{x}$. Specifically, via mathematical induction, we can attain the new updated form of $\vect y_{i_{k}} $ below, which can also reduce the communication cost from $O(n)$ to $O(1)$:
\vspace{-5pt}
\begin{equation}
\label{eq:y_solution2}
\begin{aligned}
\vect y_{i_{k}} = 
&\scalemath{0.9}{
\vect y'_{i_{k}}+\frac{1}{n_{i_k}}\big[\vect x_{i_k} -(\frac{\vect z_{i_k} }{\beta} + \vect \varepsilon_{i_k} ) \odot sgn(\vect t_{i_k})  \big] 
- \big[\vect x'_{i_k} -(\frac{\vect z'_{i_k}}{\beta} + \vect \varepsilon_{i_k} ) \odot sgn(\vect t'_{i_k})  \big] }
\end{aligned}
\end{equation}

Updating $\vect z_{i_{k}}$:
The Lagrangian multiplier $\vect{z}_{i_k}$ can be updated strictly following the standard ADMM scheme below:
\vspace{-5pt}
\begin{equation}\label{z update eq}
 \scalemath{0.9}{           
 \vect{z}_{i_k} = \vect{z}'_{i_k} + \kappa\beta\big[\vect{x}_{i_k} -\vect{y}'_{i_k} -\vect \varepsilon_{i_k} \big] }
\end{equation}

The $\kappa$ coefficient used in Eq. \eqref{z update eq} is decayed in each process step to achieve better convergence.

Please refer to Appendix \ref{appendix: algorithm} for the entire RWSADMM algorithm framework.
\vspace{-6pt}
\section{Theoretical Analysis} \label{sec: convergence}
\vspace{-5pt}
In this section, we present the theoretical convergence guarantee of RWSADMM. To ensure its convergence, certain common assumptions are made regarding the properties of the loss functions. The assumptions are as follows:
\begin{assumption}
    \label{f coercive and bounded below}
The objective function $f(\vect{x})$ is bounded from below 
and coercive over $\mathbb{R}^p$, that is, for any sequence $\{\vect{x}^k\}_{k\geq0}\subset\mathbb{R}^p$, 
\begin{equation}
 \scalemath{0.9}{   
 \text{if} \;\norm{\vect{x}^k}\xrightarrow[]{k\rightarrow\infty}\infty \Rightarrow \frac{1}{n}\sum^n_{i=1}f_i(\vect{x})\rightarrow\infty }
\end{equation}
\end{assumption}
\begin{assumption}
    \label{f is lsmooth and l-lipschitz}
The objective function $f_i(\vect{x})$'s are L-smooth, that is, $f_i$ are differentiable, and its gradients are L-Lipschitz, that is, $\forall \vect{u}, \vect{v} \in \mathbb{R}^p$ \cite{mao2020walkman},
\begin{equation}\label{l-lipschitz}
 \scalemath{0.9}{   
 \norm{\nabla f_i(\vect{u}) -  \nabla f_i(\vect{v})} \leq   L\norm{\vect{u}-\vect{v}},\; \; \forall i= 1,\ldots,n }
\end{equation}
\end{assumption}
Remark: In consequence it also holds that $\forall \vect{u},\vect{v} \in \mathbb{R}^p$
\begin{align} \label{l-lipschitz-2}
 \scalemath{0.9}{   
 f_i(\vect{u}) -  f_i(\vect{v}) } 
 & \scalemath{0.9}{
 \leq \nabla f_i(\vect{v})^T(\vect{u} - \vect{v}) + \frac{L}{2}\norm{\vect{u}-\vect{v}}^2,  
\; \; \forall i = 1,\ldots,n. }  
\end{align}
\begin{assumption}
    \label{f is m-lipschitz}
The objective function $f$ is M-Lipschitz, that is, $\forall \vect{u},\vect{v}\in\mathbb{R}^p$ \cite{li2019convergence},
\begin{equation}
\scalemath{0.9}{    
|f(\vect{u}) - f(\vect{v})| \leq M\norm{\vect{u}-\vect{v}} }
\end{equation}
\end{assumption}
\begin{assumption}
    \label{expectation of estimation}
The first-order stochastic gradient is sampled, which returns a noisy but unbiased estimate of the gradient of $f$ at any point $\vect{x}\in\mathbb{R}^p$, that is, $\forall \vect{x}\in\mathbb{R}^p$,
\begin{equation}\label{Eg=gF}
  \scalemath{0.9}{  
  \mathbb{E}_\xi[g(\vect{x},\xi)]=\nabla f(\vect{x}) }
\end{equation}
\end{assumption}
Remark: Substituting Eq. \eqref{Eg=gF} into Eq. \eqref{l-lipschitz}, one can obtain that for $i=1,\ldots,n$, we have
\begin{equation}\label{l-lipschitz for expectation of estimation - 1}
\begin{aligned}
    &   \scalemath{0.9}{
    \norm{\mathbb{E}_\xi [g(\vect{u},\xi)] - \mathbb{E}_\xi[g(\vect{v},\xi)]} \leq L\norm{\vect{u}-\vect{v}} }
\end{aligned}
\end{equation}
Substituting Eq. \eqref{Eg=gF} into Eq. \eqref{l-lipschitz-2}, for $i=1,\ldots,n$, we can obtain
\begin{align}
    &  \scalemath{0.9}{ 
    f_i(\vect{u}) -  f_i(\vect{v}) \leq \mathbb{E}_\xi[g(\vect{v},\xi)]^T(\vect{u} - \vect{v}) + \frac{L}{2}\norm{\vect{u}-\vect{v}}^2,} 
    \label{l-lipschitz for expectation of estimation - 2}
    \end{align}
\begin{assumption}
        \label{noise variance}
The noise variance of the stochastic gradient is bounded as:
\begin{equation}
     \scalemath{0.9}{ \mathbb{E}_\xi(\norm{\nabla f(\vect{x}) - g(\vect{x},\xi)}^2)\leq \exp(1), \; \text{for~all~} \vect{x}.}
\end{equation}
\end{assumption}
This condition bounds the expectation of $\norm{\nabla f(\vect{x}_t)-g(\vect{x}_t,\xi_t)}^2$. Using Jensen's inequality, this condition implies a bounded variance \cite{li2019convergence}.

We revisit the related crucial properties of the Markov Chain. The first time that the Markov Chain $(i_k)_{k\geq0}$ hits agent $i$ is denoted as $T_i:=\min\{k:i_k=i\}$, and maximum value of $T$ over all clients is defined as $\scalemath{0.9}{T := \max\{T_1,\ldots,T_n\}}$.
For $\scalemath{0.9}{k>T}$, let $\scalemath{0.9}{\tau(k,i)}$ denote the iteration of the last visit to agent $i$ before $k$, mathematically we have
\begin{equation} \label{tau def}
    \tau(k,i) = \max\{k^\prime: i_{k^\prime}=i,k^\prime < k \}.
\end{equation}
To prove the convergence of our proposed algorithm, two Lyapunov functions defined for RWSADMM are required to be investigated:
\begin{equation}\label{lyapEquations}
    \scalemath{0.9}{ 
    L_\beta^k := L_\beta(\vect{y}^k,\matrx{X}^k;\matrx{Z}^k),~~
    M_\beta^k := L_\beta^k + \frac{L^2}{n}\sum^{n}_{i=1}\norm{\vect{y}_i^{\tau{(k,i)+1}}-\vect{y}_i^{\tau{(k,i)}}}^2 }
\end{equation}
where $\scalemath{0.9}{L_\beta(\vect{y}^k,\matrx{X}^k;\matrx{Z}^k)}$ is defined in Eq. \eqref{problem formulation eq3}. The $M_\beta^k$ is utilized in the convergence analysis.
To guarantee the convergence of our algorithm, first, we refer to the asymptotic analysis of the nonhomogeneous Markov chain presented in \cite{nedic2008distributed}. Define $\scalemath{0.9}{\matrx{\Phi}(k,l)}$ with $\scalemath{0.9}{k\geq l}$ as the product of the transition probability matrices for the Markov chain from time $l$ to $k$, i.e., $\scalemath{0.9}{\matrx{\Phi}(k,l) = \matrx{P}(k)\ldots \matrx{P}(l)$ with $k\geq l}$. Then we have the following convergence result:
\begin{lemma}
    \label{transition matric lemma}
    Consider
    \vspace{-5pt}
    \begin{enumerate}
        \item $\forall s$, $ \lim_{k\rightarrow \infty}\matrx{\Phi}(k,l) = \frac{1}{n}\vect{e}\vect{e}^T$. \vspace{-5pt}
        \item The convergence of $\matrx{\Phi}$ is geometric and the rate of convergence considering $\forall k,l, \text{with } k\geq l \geq0$, is given by
        \begin{equation}
            \label{phi bound eq}
             \scalemath{0.9}{
             \big| [\matrx{\Phi}(k,l)]_{i,j} - \frac{1}{n}\big| \leq \big( 1 - \frac{\eta}{4n^2}\big)^{{\lceil\frac{k-l+1}{Q}\rceil}-2}}
        \end{equation}
    \end{enumerate}
    \vspace{-5pt}
\end{lemma}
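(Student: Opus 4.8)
The statement is a standard property of products of doubly stochastic matrices, so the plan is to verify that the transition matrices $\matrx{P}(k)$ fit the non-homogeneous averaging framework of \cite{nedic2008distributed} and then run an ergodic-contraction argument. First I would record the structural facts we need about each $\matrx{P}(k)$: it is row-stochastic by Eq.~\eqref{the p_k eq}, and---because the random walk is constructed so that its stationary law is uniform, i.e.\ $\vect{\pi}=\frac{1}{n}\vect{e}$---doubly stochastic, so that $\frac{1}{n}\vect{e}\vect{e}^T$ is a two-sided fixed point, $\matrx{P}(k)\frac{1}{n}\vect{e}\vect{e}^T = \frac{1}{n}\vect{e}\vect{e}^T = \frac{1}{n}\vect{e}\vect{e}^T\matrx{P}(k)$. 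A short induction using $\vect{e}^T\vect{e}=n$ then yields the telescoping identity
\begin{equation}
\matrx{\Phi}(k,l) - \frac{1}{n}\vect{e}\vect{e}^T = \prod_{t=l}^{k}\big(\matrx{P}(t) - \frac{1}{n}\vect{e}\vect{e}^T\big),
\end{equation}
which reduces both claims to showing that this product contracts geometrically to the zero matrix.

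Next comes the key combinatorial step. I would group the interval $[l,k]$ into consecutive windows of length $Q$, where $Q$ is the bounded-connectivity horizon over which the union of the active communication graphs of the $\matrx{P}(t)$'s is connected. Using Assumption~\ref{irreducible aperiodic markov chain} (irreducibility and aperiodicity) together with the uniform lower bound $\eta$ on the positive diagonal and active off-diagonal entries, I would show that a window product $\matrx{\Phi}(t+Q-1,t)$ is \emph{scrambling}: every entry is bounded below by a positive constant of order $\frac{\eta}{4n^2}$, because the self-loops retain mass on each agent while connectivity across the window guarantees a directed path of positive weight between any ordered pair of agents. This is the step I expect to be the main obstacle, since it requires tracking how the single-step lower bound $\eta$ degrades along paths through a window and pinning down the exact constant $\frac{\eta}{4n^2}$.

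With the scrambling bound in hand I would pass to the Dobrushin coefficient of ergodicity $\delta(\matrx{A})=1-\min_{i,i'}\sum_j \min([\matrx{A}]_{ij},[\matrx{A}]_{i'j})$. A scrambling window product with entries $\geq \frac{\eta}{4n^2}$ satisfies $\delta \leq 1-\frac{\eta}{4n^2}$, and submultiplicativity $\delta(\matrx{A}\matrx{B})\leq \delta(\matrx{A})\delta(\matrx{B})$ gives $\delta(\matrx{\Phi}(k,l)) \leq \big(1-\frac{\eta}{4n^2}\big)^{r}$, where $r=\lceil\frac{k-l+1}{Q}\rceil-2$ counts the complete windows remaining after discarding the (at most two) partial windows at the two ends of $[l,k]$.

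Finally I would convert the coefficient of ergodicity into the entrywise bound of Eq.~\eqref{phi bound eq}. Because $\matrx{\Phi}(k,l)$ is doubly stochastic, each column sums to $1$, so the entries of a fixed column average to $1/n$; a small coefficient of ergodicity forces all rows to be nearly identical, so every entry of a column lies within $\delta(\matrx{\Phi}(k,l))$ of its mean $1/n$ (up to a universal factor that can be absorbed into the per-window contraction constant). This produces $|[\matrx{\Phi}(k,l)]_{ij}-\frac{1}{n}|\leq \big(1-\frac{\eta}{4n^2}\big)^{\lceil\frac{k-l+1}{Q}\rceil-2}$, establishing the geometric rate of part~2; letting $k\to\infty$ drives the exponent to $+\infty$ and the bound to $0$, which gives $\matrx{\Phi}(k,l)\to\frac{1}{n}\vect{e}\vect{e}^T$ and hence part~1.
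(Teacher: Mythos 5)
The paper itself contains no proof of this lemma: it is imported verbatim (constants included --- note that $Q$ and $\eta$ are never even defined in the paper) from \cite{nedic2008distributed}, so your attempt has to be judged against the standard proof of that cited result rather than against anything in the paper's appendix. Your preliminary reductions are sound: the limit $\frac{1}{n}\vect{e}\vect{e}^T$ does force the uniform stationary law, hence double stochasticity of each $\matrx{P}(k)$, and the telescoping identity $\matrx{\Phi}(k,l)-\frac{1}{n}\vect{e}\vect{e}^T=\prod_{t}\big(\matrx{P}(t)-\frac{1}{n}\vect{e}\vect{e}^T\big)$ is correct for doubly stochastic factors. The genuine gap is exactly the step you flagged as ``the main obstacle'': the claim that a single window product $\matrx{\Phi}(t+Q-1,t)$ is scrambling with entries of order $\frac{\eta}{4n^2}$ is false. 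Connectivity of the union graph over one window only lets mass propagate one hop along that graph per window; full overlap of rows needs on the order of $(n-1)$ consecutive windows. Concretely, take $\matrx{P}(t)\equiv\matrx{P}$ to be doubly stochastic Metropolis weights on a line graph with $n\geq 4$ nodes: every single-step graph is connected, so $Q=1$, yet the rows of $\matrx{P}$ indexed by the two endpoints have disjoint supports, so the Dobrushin coefficient of the one-window product equals $1$ and your contraction chain never starts. Moreover, even over $(n-1)Q$ steps, path-product lower bounds on entries are of order $\eta^{(n-1)Q}$ --- exponentially small in $n$ and $Q$ --- so a Dobrushin argument can only deliver geometric convergence with a base like $1-n\eta^{(n-1)Q}$ per $(n-1)Q$ steps. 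That suffices for part 1, but it cannot produce the polynomial base $1-\frac{\eta}{4n^2}$ per $Q$-window asserted in Eq. \eqref{phi bound eq}.

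The constant $\frac{\eta}{4n^2}$ in the cited result arises from a different mechanism: a quadratic Lyapunov (sample-variance) contraction. For doubly stochastic $\matrx{A}$ and $V(\vect{x})=\sum_i(x_i-\bar{x})^2$ one has $V(\matrx{A}\vect{x})= V(\vect{x})-\sum_{i<j}[\matrx{A}^T\matrx{A}]_{ij}(x_i-x_j)^2$, and connectivity of the union graph over a $Q$-window together with the minimum weight $\eta$ forces a per-window decrease $V\mapsto\big(1-\frac{\eta}{2n^2}\big)V$. Applying this to the initial condition $\vect{x}=\vect{e}_j$, whose mean $\frac{1}{n}$ is preserved by double stochasticity, the $j$th column of $\matrx{\Phi}(k,l)$ satisfies $\sum_i\big([\matrx{\Phi}(k,l)]_{ij}-\frac{1}{n}\big)^2\leq\big(1-\frac{\eta}{2n^2}\big)^{r}$ with $r$ the number of complete windows; taking square roots and using $(1-a)^{1/2}\leq 1-\frac{a}{2}$ yields exactly the entrywise base $1-\frac{\eta}{4n^2}$, with the offset $-2$ absorbing the partial windows at the ends and the initial variance. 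If you replace your scrambling/Dobrushin step with this variance-decrease argument, the rest of your skeleton --- double stochasticity, telescoping, and deducing part 1 from part 2 --- goes through unchanged.
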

Using Lemma \ref{transition matric lemma}, the convergence analysis of the algorithm is as follows.
\begin{lemma}
    \label{convergence lemma}
Under Assumptions \ref{f coercive and bounded below} and \ref{f is lsmooth and l-lipschitz}, if $\scalemath{0.9}{\beta>2L^2+L+2$, $(M_\beta^k)_{k\geq0}}$ is lower bounded and convergent, the iterates $(\vect{y}^k,\matrx{X}^k,\matrx{Z}^k)_{k\geq0}$ generated by RWSADMM is bounded.
\end{lemma}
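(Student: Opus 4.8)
The plan is to follow the standard augmented-Lagrangian descent template, adapted to the random-walk delays in the spirit of \cite{mao2020walkman}. First I would establish a \emph{dual-bounded-by-primal} estimate. From the closed-form optimality condition for the $\vect{x}_{i_k}$-subproblem in Eq.~\eqref{eq:closeform_x} together with the multiplier update in Eq.~\eqref{z update eq}, I would express the dual increment $\vect{z}_{i_k}^{k}-\vect{z}_{i_k}^{k-1}$ through the stochastic gradient $g_{i_k}$. Invoking $L$-smoothness (Assumption~\ref{f is lsmooth and l-lipschitz}) and unbiasedness (Assumption~\ref{expectation of estimation}) via \eqref{l-lipschitz for expectation of estimation - 1}, this yields a bound of the form $\norm{\vect{z}_{i_k}^{k}-\vect{z}_{i_k}^{k-1}}^{2}\le L^{2}\norm{\vect{x}_{i_k}^{k}-\vect{x}_{i_k}^{k-1}}^{2}$ in conditional expectation. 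This is the estimate that prevents the dual from blowing up and is what forces the threshold on $\beta$.

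Next I would prove \emph{sufficient descent} of $L_\beta$ over one iteration. I would decompose $L_\beta^{k+1}-L_\beta^{k}$ into the contributions of the three updates: the $\vect{x}_{i_k}$- and $\vect{y}_{i_k}$-steps each decrease $L_\beta$ by a negative quadratic in $\norm{\matrx{X}^{k+1}-\matrx{X}^{k}}$ and $\norm{\vect{y}^{k+1}-\vect{y}^{k}}$ (by optimality of the subproblems and strong convexity of the penalized objective for $\beta>0$), while the $\vect{z}$-step raises it by at most $\tfrac{1}{n\beta}\norm{\matrx{Z}^{k+1}-\matrx{Z}^{k}}^{2}$. Substituting the dual bound from the first step converts this ascent into an $O(L^{2}/\beta)$ multiple of the primal movement; collecting terms and absorbing the delayed (stale-coordinate) contributions into the correction term $\tfrac{L^{2}}{n}\sum_i\norm{\vect{y}_i^{\tau(k,i)+1}-\vect{y}_i^{\tau(k,i)}}^{2}$ of $M_\beta^{k}$ (Eq.~\eqref{lyapEquations}) produces $M_\beta^{k+1}\le M_\beta^{k}-c_1\norm{\matrx{X}^{k+1}-\matrx{X}^{k}}^{2}-c_2\norm{\vect{y}^{k+1}-\vect{y}^{k}}^{2}$ with $c_1,c_2>0$ precisely when $\beta>2L^{2}+L+2$.

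I would then establish \emph{lower boundedness}. Since the correction term is nonnegative, it suffices to bound $L_\beta^{k}$ from below: regrouping the linear multiplier term in \eqref{problem formulation eq3} with the quadratic penalty (completing the square) and again using the dual bound to control the resulting cross term, the remaining expression is $\tfrac1n F(\matrx{X}^{k})$ plus nonnegative terms, which is bounded below by Assumption~\ref{f coercive and bounded below}. Hence $M_\beta^{k}\ge\underline{M}>-\infty$. Combined with the monotone descent just shown, $(M_\beta^{k})$ is convergent, and telescoping additionally gives $\sum_k(\norm{\matrx{X}^{k+1}-\matrx{X}^{k}}^{2}+\norm{\vect{y}^{k+1}-\vect{y}^{k}}^{2})<\infty$.

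Finally, \emph{boundedness of the iterates} follows because convergence of $M_\beta^{k}$ makes it bounded above by $M_\beta^{0}$, and nonnegativity of the correction term gives $L_\beta^{k}\le M_\beta^{0}$; were $\norm{\matrx{X}^{k}}\to\infty$ along a subsequence, coercivity of $F$ (Assumption~\ref{f coercive and bounded below}) would drive $L_\beta^{k}\to\infty$, a contradiction, so $\matrx{X}^{k}$ is bounded, and then the controlled constraint residual together with the dual bound yield boundedness of $\vect{y}^{k}$ and $\matrx{Z}^{k}$. The main obstacle is the sufficient-descent step: because each iteration updates only client $i_k$ and its neighbors, the usual one-step descent is unavailable and the stale coordinates must be accounted for; showing that the correction term correctly absorbs these delayed terms relies on every client being revisited with geometrically decaying frequency, which is exactly the content of Lemma~\ref{transition matric lemma} and the mixing estimate \eqref{mixing time ineq2}. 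Tracking the constants so that the final coefficient condition collapses to $\beta>2L^{2}+L+2$ is the delicate bookkeeping at the core of the argument.
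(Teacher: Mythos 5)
Your proposal follows essentially the same route as the paper's own proof: a dual-increment-bounded-by-primal-increment lemma derived from the $\vect{x}$-optimality condition and the multiplier update, sufficient descent of the corrected Lyapunov function $M_\beta^k$ (with the stale, delayed terms absorbed by the correction term) under $\beta>2L^2+L+2$, lower-boundedness of $M_\beta^k$ via the dual-equals-gradient identity together with Assumption \ref{f coercive and bounded below}, and finally boundedness of $(\vect{y}^k,\matrx{X}^k,\matrx{Z}^k)$ from coercivity and the convergent Lyapunov sequence. The only slip is notational: the dual bound should read $\norm{\matrx{Z}^{k+1}-\matrx{Z}^{k}}\leq L\norm{\matrx{X}^{\tau(k,i_k)+1}-\matrx{X}^{\tau(k,i_k)}}$, i.e.\ it involves the \emph{delayed} primal difference rather than the consecutive one --- which is precisely the staleness you later (correctly) identify as the reason the correction term in $M_\beta^k$ is needed.
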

The proof sketch and the detailed convergence proof are presented in Appendix \ref{appendix: convergence}. Using Lemma \ref{convergence lemma} and \ref{convergence lemma2}, we can present the convergence of RWSADMM in Theorem \ref{convergence theorem}. 
\begin{theorem}
    \label{convergence theorem}
Let Assumption \ref{noise variance} hold. 
For $\beta>2L^2+L+2$, it holds that any limit point $(\vect{y}^*, \matrx{X}^*, \matrx{Z}^*)$ of the sequence $(\vect{y}^k, \matrx{X}^k, \matrx{Z}^k)$ generated by RWSADMM satisfies $\vect{y}^* = \vect{x_i}^*, \; i = 1,\ldots,n $ where $\vect y^*$ is a stationary point of Eq. \eqref{problem formulation eq2}, with probability $1$, that is,
\vspace{-5pt}
\begin{equation}
    \label{convergence theorem eq1}
\begin{aligned}
 \scalemath{0.9}{Pr\big(0 \in \frac{1}{n}\sum^n_{i=1}\grad f_i(\vect{y^*}) \big) = 1}
\end{aligned} 
\end{equation} \vspace{-5pt}
If the objective function of Eq. \eqref{problem formulation eq2} is convex, then $\vect y^*$ is a minimizer. 
\end{theorem}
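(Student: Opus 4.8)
The plan is to combine the descent and boundedness results (Lemmas \ref{convergence lemma} and \ref{convergence lemma2}) with the ergodicity of the random walk (Assumption \ref{irreducible aperiodic markov chain} and Lemma \ref{transition matric lemma}) and then pass to the limit in the per-agent update equations. First I would use the sufficient-descent inequality for $M_\beta^k$ supplied by Lemma \ref{convergence lemma2} together with the fact from Lemma \ref{convergence lemma} that $(M_\beta^k)_{k\ge0}$ is bounded below and convergent: telescoping the descent bound over $k$ forces the expected per-step decrements to be summable, so the successive differences of the iterates vanish in expectation, $\mathbb{E}\norm{\vect{y}^{k+1}-\vect{y}^k}^2\to0$, and likewise for the dual and primal residuals. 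Since Lemma \ref{convergence lemma} also guarantees that $(\vect{y}^k,\matrx{X}^k,\matrx{Z}^k)_{k\ge0}$ is bounded, Bolzano--Weierstrass yields a convergent subsequence and hence a limit point $(\vect{y}^*,\matrx{X}^*,\matrx{Z}^*)$.

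Next I would pass to the limit in the closed-form updates \eqref{eq:closeform_x}, \eqref{eq:y_solution2}, and \eqref{z update eq}. Stationarity of the dual variable (vanishing increment in \eqref{z update eq}) together with the limiting $\vect{x}$- and $\vect{y}$-updates makes the $sgn(\cdot)$, dual, and $\vect{\varepsilon}_i$ contributions cancel, which simultaneously forces $\vect{y}^*=\vect{x}_i^*$ for every $i$ and removes the relaxation from the resulting optimality condition. To convert the limiting $\vect{x}$-update \eqref{eq:closeform_x} into a genuine gradient condition, I would invoke the unbiasedness of the stochastic gradient (Assumption \ref{expectation of estimation}) and the bounded variance (Assumption \ref{noise variance}): the extra term $\frac{L^2}{n}\sum_i\norm{\vect{y}_i^{\tau(k,i)+1}-\vect{y}_i^{\tau(k,i)}}^2$ in $M_\beta^k$ is engineered precisely to absorb the gap between successive visits, so that summability of the decrements controls the accumulated noise and lets me replace $g_{i_k}(\vect{x}_i^*,\vect{\xi}_i)$ by $\nabla f_i(\vect{y}^*)$ along the subsequence. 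Summing the per-agent first-order conditions over $i$ and using the averaging structure of the $\vect{y}$-update then leaves $0\in\frac1n\sum_{i=1}^n\nabla f_i(\vect{y}^*)$.

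The probability-$1$ statement is assembled from the two independent sources of randomness. The ergodicity of Assumption \ref{irreducible aperiodic markov chain}, quantified by the geometric mixing of Lemma \ref{transition matric lemma}, guarantees that each agent $i$ is visited infinitely often almost surely, so $\tau(k,i)\to\infty$ and every agent's stationarity condition is attained at the limit rather than only on average; the bounded-variance Assumption \ref{noise variance} then lets me upgrade the expected convergence of the residuals to almost-sure convergence through a supermartingale argument on $M_\beta^k$. Finally, if the objective $\frac1n F$ of problem \eqref{problem formulation eq2} is convex, the stationarity condition $0\in\frac1n\sum_i\nabla f_i(\vect{y}^*)$ is also sufficient for global optimality, giving the minimizer claim.

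I expect the main obstacle to be the joint control of the two randomness sources in the probability-$1$ step: showing that the Markov-chain sampling bias (the walk samples from $\vect{\pi}$ only asymptotically, not at each step) and the stochastic-gradient noise both vanish almost surely rather than merely in expectation. Handling the non-smooth $\abs{\cdot}$ terms via subgradients and the $sgn(\cdot)$ factor at the limit, and verifying that the delay term in $M_\beta^k$ dominates the accumulated error so that the telescoping argument closes, is the technical crux.
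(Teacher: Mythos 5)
Your overall skeleton (telescoped descent of $M_\beta^k$ plus its lower bound to get summable residuals, boundedness and Bolzano--Weierstrass for limit points, then a first-order condition at the limit) matches the paper's, but the step you use to obtain $\vect{y}^*=\vect{x}_i^*$ and to eliminate $\vect\varepsilon_i$ would fail. You infer consensus from ``stationarity of the dual variable (vanishing increment in \eqref{z update eq}).'' In Algorithm \ref{alg:RWSADMM} the coefficient $\kappa$ is decayed geometrically ($\kappa\leftarrow 0.99\,\kappa$), so the increments $\vect{z}_{i_k}-\vect{z}'_{i_k}=\kappa\beta\big[\vect{x}_{i_k}-\vect{y}'_{i_k}-\vect\varepsilon_{i_k}\big]$ tend to zero for free and carry no information about the bracket; and even with constant $\kappa$, a vanishing bracket would give $\vect{x}_i^*=\vect{y}^*+\vect\varepsilon_i$, not $\vect{y}^*=\vect{x}_i^*$. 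The paper never uses the dual increment this way: dual information enters through the optimality condition of the $\vect{x}$-subproblem taken in expectation, which identifies the dual variable with the expected stochastic gradient (Eq. \eqref{update dual bound update primal lemma-eq3}), and consensus comes from the vanishing of $\grad_{\vect{z}_j}L_\beta^{k+1}=\frac{1}{n}(\vect{y}^{k+1}-\vect{x}_j^{k+1})$ (Eq. \eqref{proof of convergence lemma2 - eq5}) along an almost surely convergent subgradient subsequence, combined with outer semicontinuity of the limiting subdifferential (\cite{rockafellar2009variational}, Def. 8.3). Relatedly, bounded variance alone does not let you ``replace $g_{i_k}$ by $\nabla f_i$ along the subsequence'': the stochastic-gradient noise does not vanish along any subsequence; the replacement is legitimate only because the identification above is an identity between expectations (Assumption \ref{expectation of estimation}).

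Second, the probability-one claim is precisely where you defer to ``a supermartingale argument'' and ``each agent is visited infinitely often,'' and that is the missing idea rather than a routine technicality. Infinitely-often visits do not by themselves connect the decay of the currently visited agent's residual to the full gradient at an arbitrary limit point of the whole sequence, because only one block is touched per iteration and the remaining blocks are stale by $k-\tau(k,i)$ steps. The paper's device (proof of Lemma \ref{convergence lemma2}) is quantitative: it assembles $g^k\in\partial L_\beta^{k+1}$, bounds the visited block $q_{i_k}^{k-\tau(\delta)-1}$ by the summable residuals, and then uses the mixing-time inequality \eqref{mixing time ineq2} to obtain the conditional lower bound $\mathbb{E}\big(\norm{q_{i_k}^{k-\tau(\delta)-1}}^2\,\big|\,\chi^{k-\tau(\delta)}\big)\geq(1-\delta)\pi_*\norm{g^{k-\tau(\delta)-1}}^2$ (Eq. \eqref{eq:bound_of_q_no_E}), which converts per-visit decay into $\mathbb{E}\norm{g^k}^2\to0$; Markov's inequality plus Borel--Cantelli then yield the almost surely vanishing subsubsequence, and closedness of $\partial L_\beta$ finishes the proof. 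Without this mixing-time bound (or an equivalent pathwise substitute, which you would have to construct from scratch since the paper's lower bound on $M_\beta^k$ in Lemma \ref{lemma - lower bound for lyapunov function} holds only in expectation), your telescoping argument does not close to give Eq. \eqref{convergence theorem eq1} with probability $1$.
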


Next, Theorem \ref{convergence rate theorem} further presents that the algorithm converges sublinearly. This is comparable to the convergence rate of other FL methods \cite{karimireddy2020scaffold,yu2019parallel,li2021ditto,deng2020adaptive}, but the existing methods didn’t consider the dynamic graph and infrastructure-less environment. The detailed proof is offered in Appendix \ref{appendix: convergence rate}. 


\begin{theorem}
    \label{convergence rate theorem}
Under Assumptions \eqref{irreducible aperiodic markov chain}, \eqref{f coercive and bounded below}, and \eqref{f is lsmooth and l-lipschitz}, with given $\beta$ in Lemma \ref{convergence lemma}, and local variables initiated as $\grad f_i(\vect{x}_i^0) = \beta \vect{x}_i^{0} = \vect{z}_i^{0}, \forall{i} \in \{1,\ldots, n\}$, there exists a sequence $\{g^k\}_{k\geq0}$ with $\{g^k\}\in\partial{L^{k+1}_\beta}$ satisfying 
\vspace{-2pt}
\begin{equation}
    \label{convergence rate eq}
    \scalemath{0.9}{\underset{k\leq K}{\textrm{min}} \mathbb{E}\norm{g^k}^2 \leq \frac{C}{K} (L^0_\beta - \underset{-}{f}), \quad \forall K > \tau(\delta) + 2}
\end{equation}
where $C$ is a constant depending on $\beta$, $L$, and $\gamma$, $n$, and $\tau(\delta)$. 
\end{theorem}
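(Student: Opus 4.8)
The plan is to establish the $O(1/K)$ rate by telescoping a sufficient-descent inequality for the Lyapunov function and then converting the resulting average bound into a bound on the running minimum of $\mathbb{E}\norm{g^k}^2$. The backbone is the descent estimate already developed for Lemma \ref{convergence lemma} together with Lemma \ref{convergence lemma2}: for the augmented Lagrangian $L_\beta^k$ and its delay-augmented variant $M_\beta^k$ from Eq. \eqref{lyapEquations}, the choice $\beta > 2L^2+L+2$ guarantees that each accepted $(\vect{x},\vect{y},\vect{z})$ update decreases $M_\beta^k$ by an amount controlled from below by a stationarity measure built from the subproblem optimality conditions. First I would make this measure explicit by reading off the first-order conditions of the $\vect{x}_{i_k}$- and $\vect{y}_{i_k}$-updates (Eqs. \eqref{eq:closeform_x} and \eqref{eq:y_solution1}) together with the multiplier update \eqref{z update eq}; these yield an element $g^k \in \partial L_\beta^{k+1}$ whose squared norm aggregates the primal residual, the dual residual, and the gradient mismatch.

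Next I would convert the per-step decrease into an inequality in expectation over both sources of randomness, the stochastic gradient $\vect{\xi}_{i_k}$ and the random walk $i_k$. Taking the conditional expectation $\mathbb{E}_\xi[\cdot]$ and invoking Assumptions \ref{f is lsmooth and l-lipschitz}, \ref{expectation of estimation}, and \ref{noise variance} replaces $g_{i_k}(\vect{x}'_{i_k},\vect{\xi}_{i_k})$ by $\grad f_{i_k}(\vect{x}'_{i_k})$ up to an additive noise term bounded by $\exp(1)$, which is absorbed into the descent constant. The delicate step is the expectation over the chain: a single iteration only improves the activated block $\mathcal{N}(i_k)$, so the raw decrease reflects progress at one agent, whereas the target $\norm{g^k}^2$ aggregates over all $n$ agents. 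Here I would use Assumption \ref{irreducible aperiodic markov chain} and Lemma \ref{transition matric lemma}: after $\tau(\delta)$ steps the visiting probability of every agent is within $\delta\vect{\pi}_*$ of $1/n$ by Eq. \eqref{mixing time ineq2}, and the geometric bound \eqref{phi bound eq} lets me sum the per-agent contributions with a multiplicative factor that is uniform in $k$ and depends only on $n$ and $\tau(\delta)$. The delay term in $M_\beta^k$ is precisely what absorbs the staleness $\norm{\vect{y}_i^{\tau(k,i)+1}-\vect{y}_i^{\tau(k,i)}}^2$ between successive visits, so that the net inequality takes the form $\mathbb{E}[M_\beta^{k+1}] \leq \mathbb{E}[M_\beta^k] - c\,\mathbb{E}\norm{g^k}^2$ for all $k > \tau(\delta)+2$, with $c>0$ built from $\beta$, $L$, $\gamma$, $n$, and $\tau(\delta)$.

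Finally I would telescope this inequality from $k = \tau(\delta)+2$ up to $K$. Assumption \ref{f coercive and bounded below} ensures $L_\beta^k$, hence $M_\beta^k$, is bounded below by some $\underline{f}$, and Lemma \ref{convergence lemma} already supplies convergence and lower boundedness of $M_\beta^k$, so the telescoped sum is controlled by $M_\beta^0 - \underline{f}$. The initialization $\grad f_i(\vect{x}_i^0) = \beta\vect{x}_i^0 = \vect{z}_i^0$ makes the delay term vanish at $k=0$, so that $M_\beta^0 = L_\beta^0$ and the bound can be stated in terms of $L_\beta^0$. Dividing by the number of summed terms and bounding the running minimum by the average gives $\min_{k\leq K}\mathbb{E}\norm{g^k}^2 \leq \frac{C}{K}(L_\beta^0 - \underline{f})$ with $C$ proportional to $1/c$, which is the claim.

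The main obstacle is the chain-expectation step: relating the single-agent per-iteration decrease to the all-agent stationarity measure $\norm{g^k}^2$ without forfeiting the $O(1/K)$ rate. One must show that the error incurred by replacing the true visiting distribution $[\matrx{P}(k)^{\tau(\delta)}]_{ij}$ by $1/n$ stays summable and is dominated by, rather than accumulating alongside, the descent term. This is exactly what the geometric mixing rate \eqref{phi bound eq} and the delay-augmented Lyapunov function $M_\beta^k$ are engineered to guarantee, and it is also why the bound is only asserted for $K > \tau(\delta)+2$.
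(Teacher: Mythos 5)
Your overall architecture (a stationarity measure $g^k\in\partial L_\beta^{k+1}$ built from the update optimality conditions, Lyapunov descent, telescoping, then min $\leq$ average) is the right one, but the pivotal inequality you rely on --- the per-step bound $\mathbb{E}[M_\beta^{k+1}]\le \mathbb{E}[M_\beta^k]-c\,\mathbb{E}\norm{g^k}^2$ --- is not obtainable and is in general false. The reason is structural: a single RWSADMM step only moves the variables of the activated block $\mathcal{N}(i_k)$, so the one-step descent $M_\beta^k-M_\beta^{k+1}$ is controlled by the successive differences produced at that step alone, whereas $\norm{g^k}^2$ aggregates the stationarity residuals of all $n$ agents, including agents whose blocks have not been touched for many iterations and whose residuals are therefore unaffected by step $k$. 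If the chain lingers near a few agents whose blocks are already near-optimal while some unvisited agent's block gradient stays large, the expected one-step descent is negligible while $\mathbb{E}\norm{g^k}^2$ remains bounded away from zero, so no constant $c>0$ depending only on $\beta$, $L$, $\gamma$, $n$, $\tau(\delta)$ can make your inequality hold. The delay term in $M_\beta^k$ records only the last-visit increment of each agent; it does not convert one-step descent into control of the full subgradient norm, contrary to what your ``absorbs the staleness'' step asserts.

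What the mixing argument actually yields --- and what the paper uses --- is a \emph{windowed} inequality. The conditional-expectation bound \eqref{eq:bound_of_q_no_E}, conditioning $\tau(\delta)$ steps in the past, lower-bounds the expected activated-block residual at the delayed iterate $k-\tau(\delta)-1$ by $(1-\delta)\pi_*\norm{g^{k-\tau(\delta)-1}}^2$, while the drift bound \eqref{eq:bound_of_q} upper-bounds that same block residual by the sum of successive differences over the entire window $[k-\tau(\delta)-1,k]$. Combining these with Lemmas \ref{update dual bound update primal} and \ref{lemma - sufficient descent in Lyapunov functions} gives Eq. \eqref{convergence rate proof eq3}, namely $\mathbb{E}\norm{g^k}^2 \le C'\big(\mathbb{E}L_\beta^k - \mathbb{E}L_\beta^{k+\tau(\delta)+2}\big)$: a difference of the Lyapunov function across a window of length $\tau'=\tau(\delta)+2$, not across one step. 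Consequently term-by-term telescoping over all $k$ is unavailable; the paper instead sums along the arithmetic subsequence $\{K-l\tau'\}_{1\le l\le\lfloor K/\tau'\rfloor}$, where the windowed differences do telescope (Eq. \eqref{convergence rate proof eq4}), and then bounds the running minimum by the average over that subsequence. This subsampling is exactly where the extra factor $\tau'+1$ in the final constant $C=C'(\tau'+1)=O\big(\frac{\tau(\delta)^2+1}{(1-\delta)n\pi_*}\big)$ comes from, and it is also why the statement is restricted to $K>\tau(\delta)+2$. A secondary correction: the initialization $\grad f_i(\vect{x}_i^0)=\beta\vect{x}_i^{0}=\vect{z}_i^{0}$ is used in the paper to make the optimality identity \eqref{update dual bound update primal lemma-eq3} --- and hence Lemmas \ref{update dual bound update primal}--\ref{lemma - sufficient descent in Lyapunov functions} --- valid for all $k\ge 0$ rather than only for $k>T$; it is not needed to make the delay term of $M_\beta^0$ vanish, as you suggest.
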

\textbf{Communication Complexity    }
Using Theorem \ref{convergence rate theorem}, the communication complexity of RWSADMM for nonconvex nonsmooth problems is as follows. To achieve ergodic gradient deviation $\scalemath{0.9}{E_t := \underset{k\leq K}{min}   \mathbb{E}\norm{g^k}^2 \leq \omega$ for any $K > \tau(\delta) +2}$, it is sufficient to have
\vspace{-5pt}
\begin{equation}
    \label{communication complexity eq1}
    \begin{aligned}
        \scalemath{0.9}{\frac{C}{K} (L_\beta^0 - \underset{-}{f}) \leq \omega} \overset{(a)}{\longrightarrow}  \scalemath{0.9}{K \sim O \big( \frac{1}{\omega} . \frac{\tau(\delta)^2+1}{(1-\delta)n\pi_*}\big)}
    \end{aligned}
\end{equation}
\vspace{-5pt}
(a) is achieved by taking $L_\beta^0$ and $\underset{-}{f}$ as constants and independent of $n$ and the network structure.
Using the $\tau(\delta)$ definition from \eqref{mixing time ineq3}, by setting $\delta=1/2$ and assuming the reversible Markov chain with $P(k)^T = P(k)$, the communication complexity is
\begin{equation}
    \label{communication complexity}
    \begin{aligned}
        \scalemath{0.9}{O\big( \frac{1}{\omega} . \frac{ln^2 n}{(1-\lambda_2(\matrx P(k)))^2} \big)}
    \end{aligned}
\end{equation}
\textbf{Communication Comparison    }
Among the baseline frameworks, Per-FedAvg \cite{fallah2020personalized} and APFL \cite{deng2020adaptive} have addressed the communication complexity of their respective frameworks. By assuming that Assumption \ref{irreducible aperiodic markov chain} holds and utilizing Eq. \eqref{communication complexity}, we can determine the communication complexity of RWSADMM as $O(\omega^{-1})$ for $K$ iterations. In comparison, Per-FedAvg exhibits a higher communication complexity of $O(\omega^{-3/2})$.
In the case of APFL, all clients are assumed to be used in each computation round to ensure convergence in nonconvex settings. The communication complexity of APFL is determined as $O(n^{3/4}\omega^{-3/4})$, where $n$ represents the total number of clients. Consequently, when $n$ is large, APFL exhibits a significantly higher communication rate than RWSADMM. Overall, the communication complexity analysis suggests that RWSADMM offers superior scalability and communication efficiency compared to existing methods.
\vspace{-6pt}
\section{Experimental Results}\label{sec:experiments} 
\vspace{-5pt}
\textbf{Setup   } We evaluate the performance of RWSADMM using heterogeneous data distributions.  All the experiments are conducted on a workstation with Threadripper Pro 5955WX, 64GB DDR4 RAM, and NVidia 4090 GPU. All frameworks are performed on standard FL benchmark datasets (MNIST \cite{lecun1998mnist}, Synthetic \cite{t2020personalized}, and CIFAR10 \cite{krizhevsky2009learning}) with 10-class labels and convex and non-convex models. Multinomial logistic regression (MLR), multilayer perceptron network (MLP), and convolutional neural network (CNN) models are utilized for strongly convex and two non-convex settings, respectively.
We create a moderately dynamic connected graph of randomly placed nodes where each node has at least $5$ neighboring nodes at $k$-th update. We set the probability transition matrix $\matrx P(k)$ as $[\matrx P(k)]ij=1/deg(i_k)$ and set up the experiments for $N=20$ clients with a regeneration frequency of 10 steps for the dynamic graph. The data is split among clients using a pathological non-IID setting. The data on each client contains a portion of labels (two out of ten labels), and the allocated data size for each client is variable. For the Synthetic data, we use the same data generative procedures of \cite{t2020personalized} with 60 features and 100 clients. 
All local datasets are split randomly with $75\%$ and $25\%$ for training and testing, respectively. The models' details, the rationale behind graph construction, and hyperparameter tuning for $\beta$, $\kappa$, and selected $\varepsilon$ value are further described in Appendix \ref{appendix: experiments-rest}. 

\textbf{Performance Comparison  }
The performance of RWSADMM is compared with FedAvg \cite{mcmahan2017communication} as a benchmark and several state-of-the-art personalized FL algorithms such as Per-FedAvg \cite{fallah2020personalized}, pFedMe \cite{t2020personalized}, APFL \cite{deng2020adaptive}, and Ditto \cite{li2021ditto}. The test accuracy and training loss for the MNIST dataset is depicted in Fig. \ref{fig: mnist-performance}. 
\begin{figure}[htp!]
    \centering
    \begin{subfigure}{0.25\textwidth}
        \includegraphics[trim=0pt 5pt 5pt 0pt, clip, width=\textwidth]{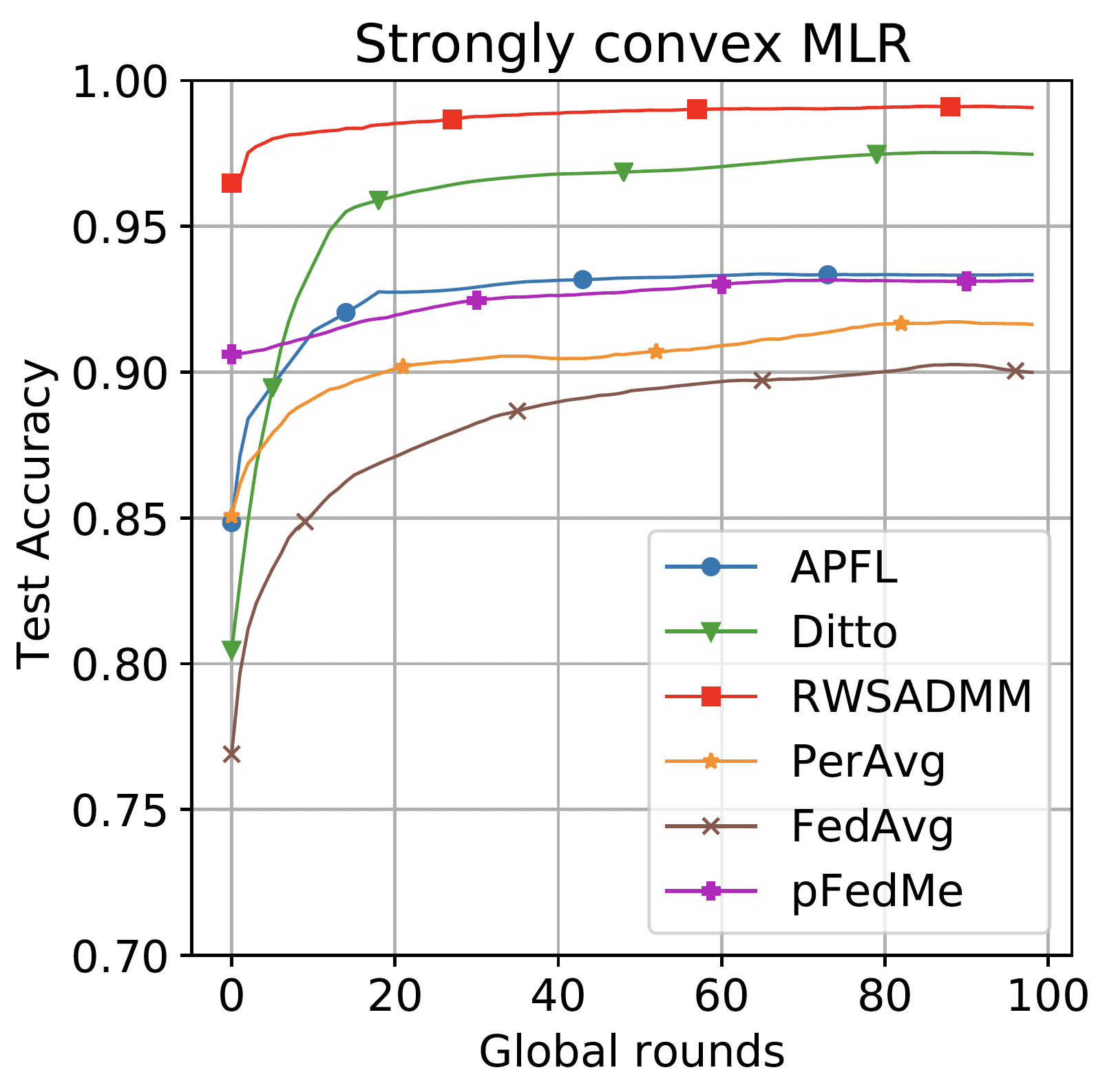}
        \caption{MLR acc}
        \label{fig:mlr-acc-mnist-all}
    \end{subfigure}
    \begin{subfigure}{0.25\textwidth}
        \includegraphics[trim=0pt 5pt 5pt 0pt, clip, width=\textwidth]{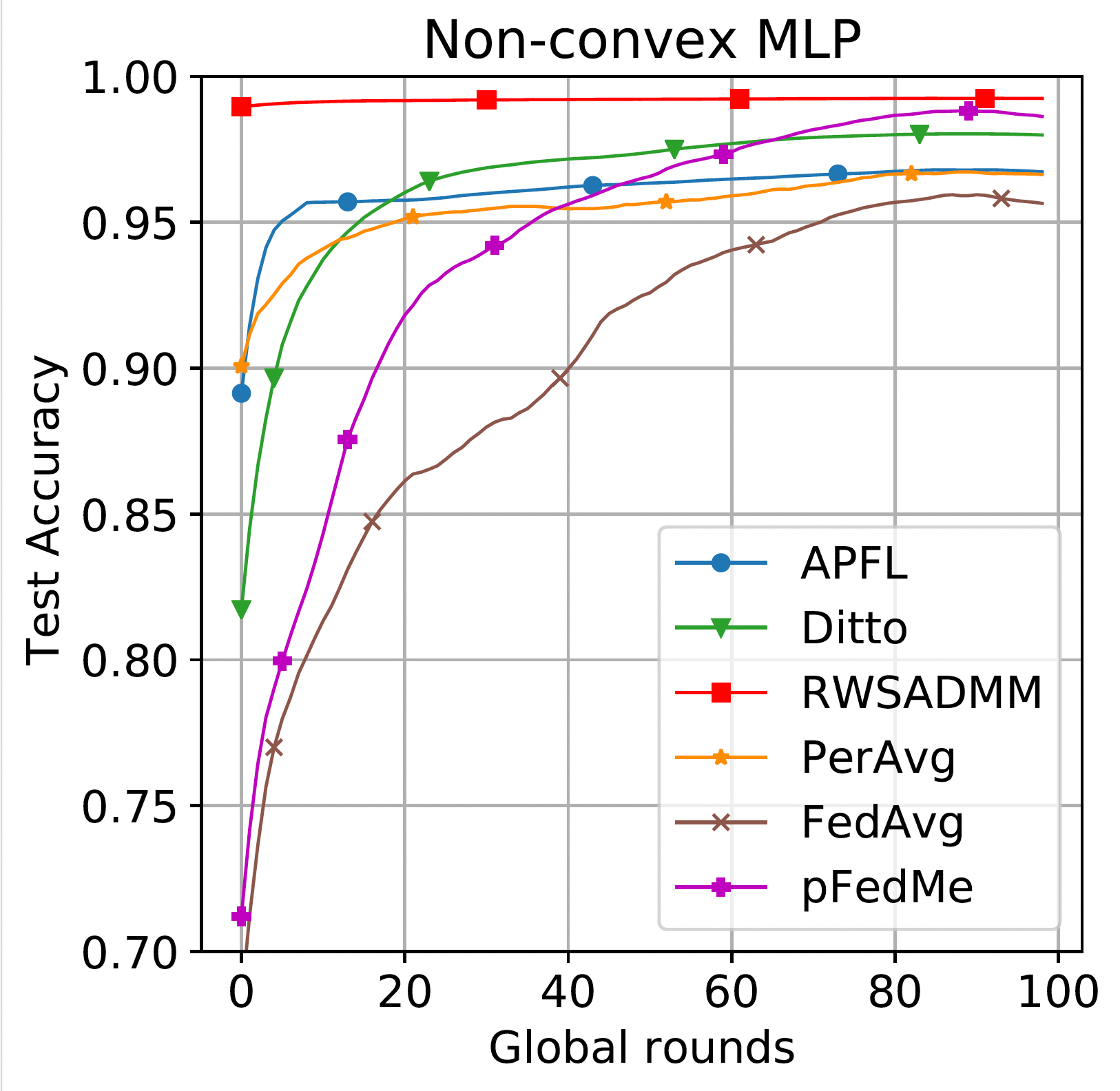}
        \caption{MLP acc}
        \label{fig:mlp-acc-mnist-all}
    \end{subfigure}
    \begin{subfigure}{0.25\textwidth}
        \includegraphics[trim=5pt 5pt 5pt 5pt, clip, width=\textwidth]{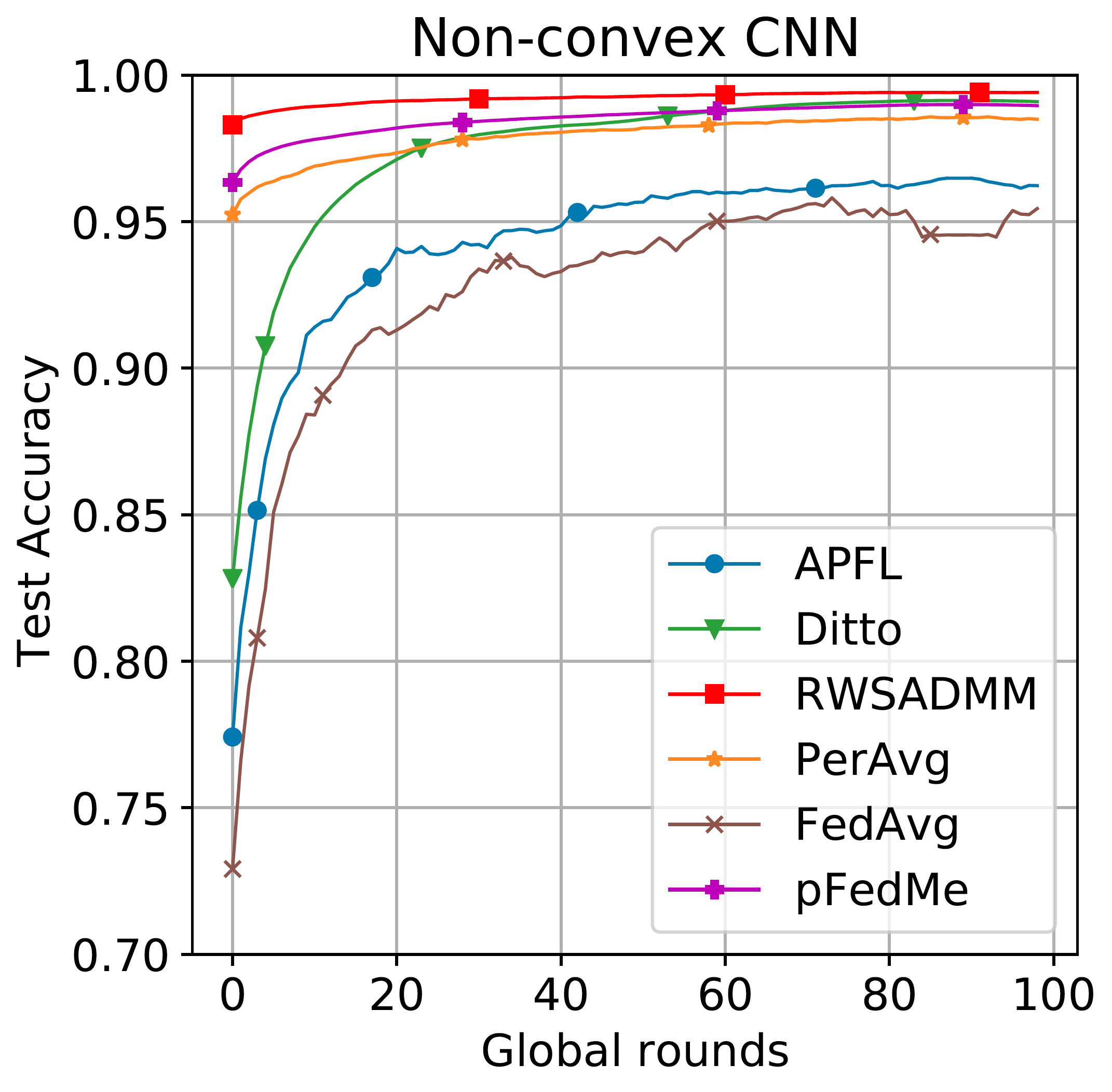}
        \caption{CNN acc}
        \label{fig:cnn-acc-mnist-all}
    \end{subfigure}
    \vskip\baselineskip
    \vspace{-8pt}
    \begin{subfigure}{0.25\textwidth}
        \includegraphics[trim=0pt 5pt 5pt 0pt, clip, width=\textwidth]{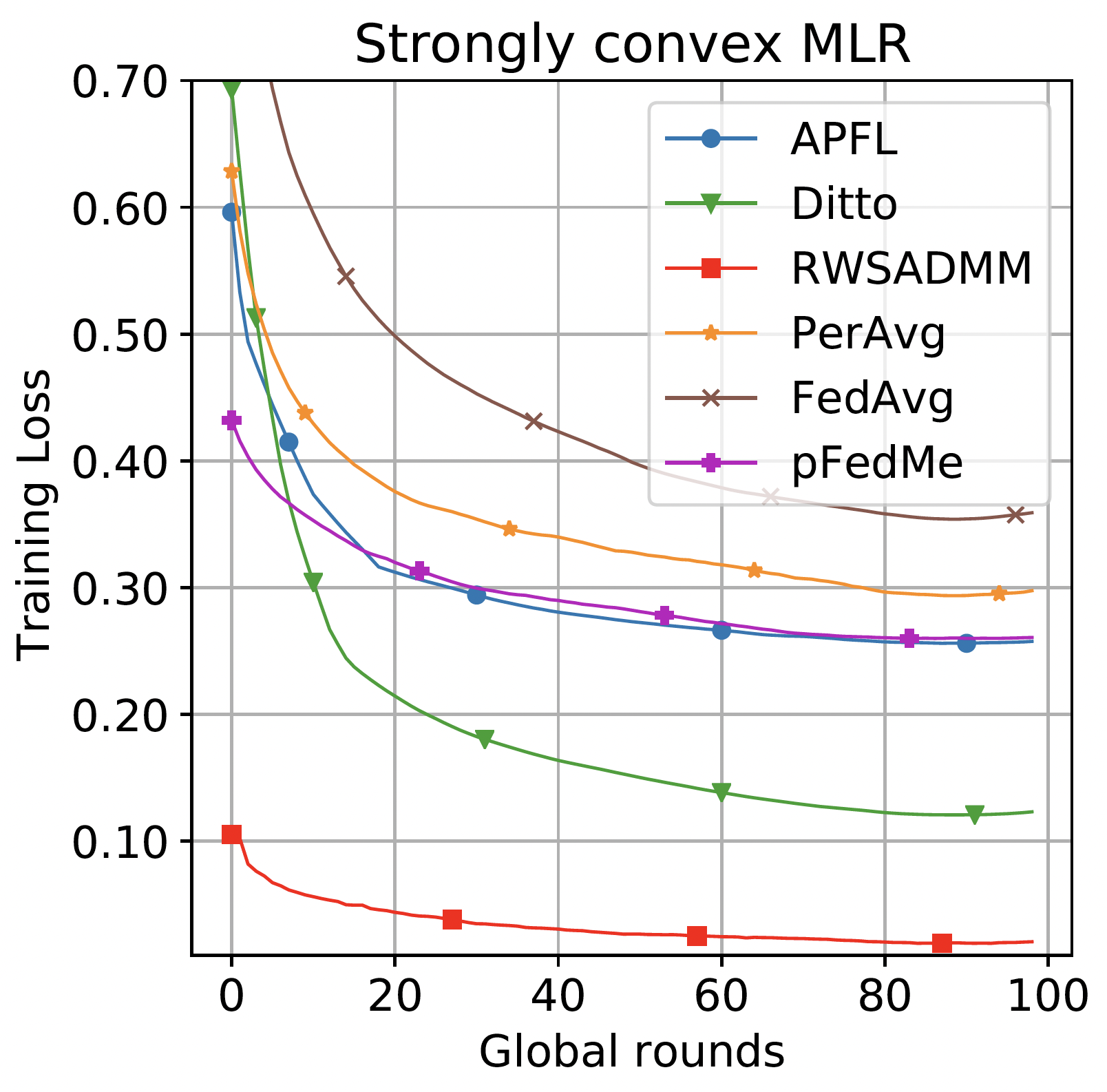}
        \caption{MLR loss}
        \label{fig:mlr-loss-mnist-all}
    \end{subfigure}
    \begin{subfigure}{0.25\textwidth}
        \includegraphics[trim=0pt 5pt 5pt 0pt, clip,width=\textwidth]{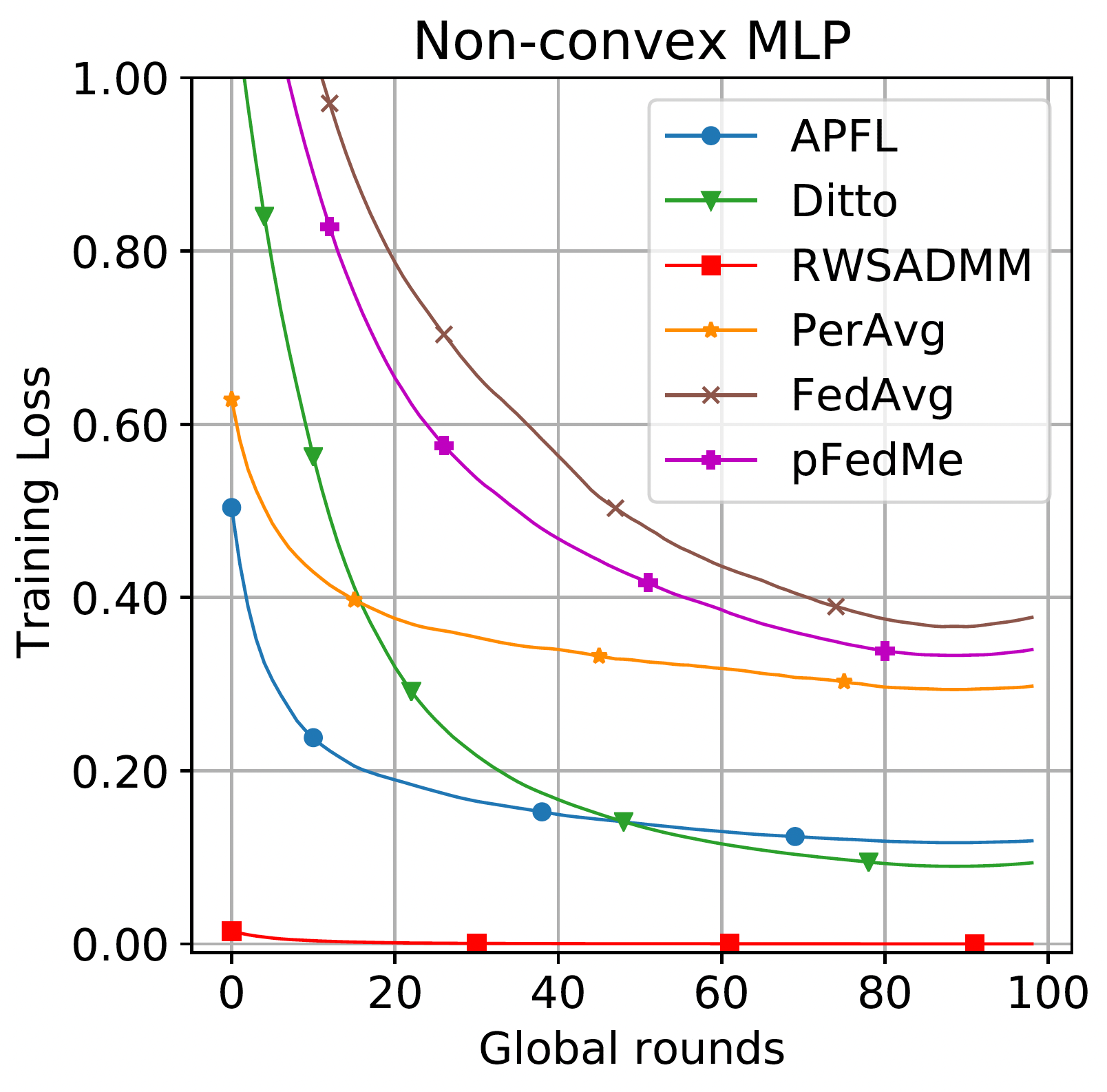}
        \caption{MLP loss}
        \label{fig:mlp-loss-mnist-all}
    \end{subfigure}
    \begin{subfigure}{0.25\textwidth}
        \includegraphics[trim=0pt 5pt 5pt 0pt, clip, width=\textwidth]{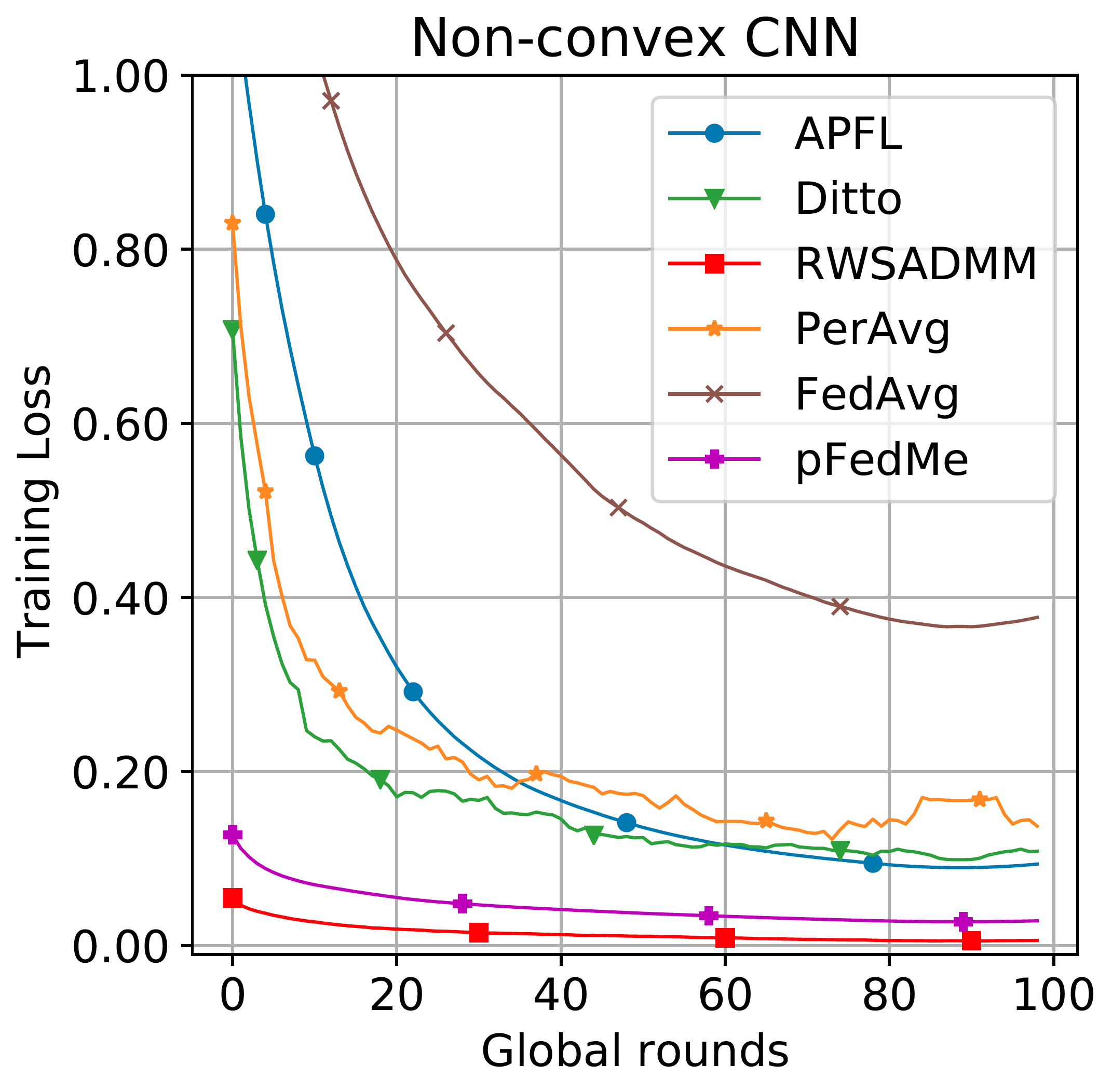}
        \caption{CNN loss}
        \label{fig:cnn-loss-mnist-all}
    \end{subfigure}
    \caption{Performance comparison (test accuracy and training loss) of RWSADMM, pFedMe, Per-Avg, FedAvg, APFL, and Ditto for MNIST dataset for the MLR (\ref{fig:mlr-acc-mnist-all}, \ref{fig:mlr-loss-mnist-all}), MLP (\ref{fig:mlp-acc-mnist-all}, \ref{fig:mlp-loss-mnist-all}), and CNN (\ref{fig:cnn-acc-mnist-all}, \ref{fig:cnn-loss-mnist-all}) models. The first 100 iterations are plotted to show the convergence progress better. }
   
    \label{fig: mnist-performance}
\end{figure}
(Synthetic and CIFAR10 figures are presented in Appendix D). Test accuracy and time cost for all the
datasets are reported in Table \ref{tab: accuracies-times}. 

The test accuracy progress curves of RWSADMM for all the models (\ref{fig:mlr-acc-mnist-all}-\ref{fig:cnn-acc-mnist-all}) have a significantly faster convergence. 
For the non-convex models (\ref{fig:mlp-acc-mnist-all}), RWSADMM reaches convergence after 200 iterations, while the rest of the algorithms, except Ditto, work toward convergence until 600 iterations. The performance curves are shown for 100 iterations for consistency. When tested on MNIST with MLP, RWSADMM demonstrated comparable performance against pFedMe. In the test on Synthetic data with MLR models, RWSADMM exhibited a significant advantage over the other methods, with an improved margin of $14.95\%$. Regarding computational efficiency, RWSADMM is slower than FedAvg and Per-FedAvg, but faster than pFedMe. Furthermore, RWSADMM converges in fewer iterations (200 iterations) than pFedMe (600 iterations). RWASDMM is also run for more extensive networks with 50 and 100 nodes as a separate set of experiments. The performance comparison results and diagrams are also in Appendix \ref{appendix: experiments-rest}.


\begin{table}[ht!] 
    \centering
    \scalebox{0.65}{%
    \begin{tabular}{|l|ll|ll|ll|ll|ll|ll|}
    \toprule
    \multirow{3}{*}{Frameworks}  & \multicolumn{6}{c|}{\textbf{MNIST}} & \multicolumn{4}{c|}{\textbf{Synthetic}}  \\ \cline{2-11}
     &  \multicolumn{2}{c|}{MLR}  & \multicolumn{2}{c|}{MLP} & \multicolumn{2}{c|}{CNN}  & 
     \multicolumn{2}{c|}{MLR}  & \multicolumn{2}{c|}{MLP}  \\ \cline{2-11}
      & acc(\%) & t(s)  & acc(\%) & t(s) & acc(\%) & t(s)
      & acc(\%) & t(s) & acc(\%) & t(s)  \\
     \toprule
    FedAvg  & $93.96\pm 0.02$  & 384 & $98.79\pm 0.03$  & 464 & $97.83\pm 0.15$ & 7965  & $77.62\pm 0.11$ & 592 & $83.64\pm 0.22$ & 680\\
    PerAvg  & $94.37\pm 0.04$ & 472 & $98.90\pm 0.02$ & 608 & $98.97\pm 0.08$ & 7296 & $81.49\pm 0.09$ & 800 & $85.01\pm 0.10$ & 808\\
    pFedMe  & $95.62\pm 0.04$  & 1344 &  \textbf{99.46$\pm$ 0.01}  & 2096 & $99.05\pm 0.06$ & 16623  & $83.20\pm 0.06$ & 760 & $86.36\pm 0.15$ & 4240\\
    Ditto  & $97.37 \pm 0.02$ & 828 & $97.79 \pm 0.03 $ & 1268 & $99.20 \pm 0.11$ & 9820 &  $86.24 \pm 0.03$ & 216 & $85.26 \pm 0.10$ &  236 \\
    APFL  &  $92.64 \pm 0.03 $ & 913 & $97.74 \pm 0.02$ & 1598 & $98.58 \pm 0.03$ & 17800 & $83.40 \pm 0.04 $ & 284 & $82.52 \pm 0.15 $ & 332  \\
    $\mathtt{RWSADMM}\text{ (our method)}$  & \textbf{98.63 $ \pm $ 0.01} & 500 & \textbf{99.29 $ \pm $ 0.02}  & 884 & \textbf{99.52 $ \pm $ 0.04} & 11570  & \textbf{96.44 $ \pm $ 0.12} & 1420 & \textbf{97.17 $ \pm $ 0.18} & 2076 \\
    \bottomrule
    \end{tabular}
    }    
    \scalebox{0.65}{%
    \begin{tabular}{|l|ll|ll|ll|}
    \toprule
    \multirow{3}{*}{Frameworks} & \multicolumn{6}{c|}{\textbf{CIFAR10}}  \\ \cline{2-7}
     &  \multicolumn{2}{c|}{MLR}  & \multicolumn{2}{c|}{MLP} & \multicolumn{2}{c|}{CNN}  \\ \cline{2-7}
     & acc(\%) & t(s)  & acc(\%) & t(s) & acc(\%) & t(s)  \\
     \toprule
    FedAvg   & $40.84\pm0.01$ & 480 & $41.02\pm0.05$ & 205 & $38.65\pm0.05$ & 235 \\
    PerAvg  & $47.43\pm0.09$ & 576 & $60.25\pm0.07$ & 760 & $83.52\pm0.01$ & 2401 \\
    pFedMe  & $67.53\pm0.34$ & 1544 & $78.12\pm0.38$ & 1020 &  $83.56\pm0.05$ & 10440 \\
    Ditto  & $75.2 \pm 0.01$ & 675 & $81.37 \pm 0.13 $ & 778 & $83.86 \pm 0.02$ & 6566  \\
    APFL  & $75.17\pm0.32$ & 150 & $78.00\pm0.18$ & 165 & $66.23\pm0.03$ & 2106   \\
    $\mathtt{RWSADMM}\text{ (our method)}$  & \textbf{80.72 $\pm $ 0.11} & 392 & \textbf{84.99 $ \pm $ 0.20} & 760 & \textbf{87.08 $ \pm $ 0.03} & 11276  \\
    \bottomrule
    \end{tabular}
    }

    \caption{Performance comparisons of FedAvg, Per-FedAvg, pFedMe, Ditto, APFL, and RWSADMM frameworks on MNIST, Synthetic, and CIFAR10 datasets. Three models are utilized for each dataset, and each model's converged accuracy (\%) and time consumption (seconds) are reported. Each configuration is executed for ten iterations, and variance is calculated to compute the degree of confidence for test accuracy rates. }
    \label{tab: accuracies-times}
\end{table}


\section{Conclusion and Future Work} \label{sec:conclusions}
\vspace{-5pt}
This study proposes a novel approach called RWSADMM, designed for systems with isolated nodes connected via wireless links to the mobile server without relying on pre-existing communication infrastructure. The algorithm enables the server to move randomly toward a local client, establishing local proximity among adjacent clients based on hard inequality constraints, addressing the challenge of data heterogeneity. Theoretical and experimental results demonstrate that RWSADMM is fast-converging and communication-efficient, surpassing current state-of-the-art FL frameworks. This study primarily focuses on the methodological framework for RWSADMM. Future research directions should explore essential techniques such as incorporating differential privacy techniques and examining scalability in more extensive network and dataset scenarios. Further investigation is needed to assess the implementation in physical networks and evaluate the effect of communication delays in the real world.



\newpage
\bibliographystyle{unsrt}  

\newpage
\appendix

\title{Mobilizing Personalized Federated Learning in Infrastructure-Less and Heterogeneous Environments via Random Walk Stochastic ADMM}

\maketitle 
\section{RWSADMM: Algorithm} 
\label{appendix: algorithm}

The RWSADMM scheme is as presented in Algorithm \ref{alg:RWSADMM}. Client $i_k$ is selected by Random Walk via $P(k)$, and $\vect y'_{i_{k}}$ is the token from the previous update.
Note that we only use one client in each derivation iteration. Still, it is straightforward to generalize the algorithm to have multiple active clients in $\mathcal{S}(i_k)\subset\mathcal{N}(i_k)$ simultaneously to stabilize the computation better as follows:
\begin{equation}
\label{eq:y_solution3}
\begin{aligned}
\scalemath{0.86}{\vect y_{i_{k}} = \vect y'_{i_{k}}
+\frac{1}{n_{i_k}S_{i_k}}\sum _{i_k\in \mathcal{S}(i_k)}\big[\vect x_{i_k} -(\frac{\vect z_{i_k} }{\beta} + \vect \varepsilon_{i_k} ) \odot sgn(\vect t_{i_k})  \big] 
-\sum_{{i_k}\in \mathcal{S}(i_k)} \big[\vect x'_{i_k} -(\frac{\vect z'_{i_k}}{\beta} + \vect \varepsilon_{i_k} ) \odot sgn(\vect t'_{i_k})  \big] } 
\end{aligned}
\end{equation}
where $S_{i_k}$ represents the volume of $\mathcal{S}(i_k)$. 
\vspace{8pt}

\begin{algorithm}
\caption{RWSADMM}\label{alg:RWSADMM}
\begin{algorithmic}[1]
\STATE \textbf{Initialization:} \\
Initialize Markov transition matrices $\scalemath{0.85}{\{\matrx P(0),\matrx P(1),\ldots,\}}$. \\
Initialize $\{\vect{x}_i^0\}_{i=1}^n = 0$, $\{\vect{z}_i^0\}_{i=1}^n = 0$, and 
\vspace{-10pt}
\begin{equation}\label{y1 computation eq}
    \vect{y}^1 = \frac{1}{n}\sum^n_{i=1}\big( \vect{x}_i^0 - \frac{\vect{z}_i^0}{\beta}\big) = 0 
\end{equation}
\vspace{-10pt}
\STATE \textbf{RWSADMM({$\beta,\vect{y}_1$}):} 
\REPEAT
\FOR {$k \in {0, 1, 2, ...}$}
\STATE Client $i_k$ receives $\vect{y}'_{i_{k}}$ and
updates $\matrx{X}$, $\matrx{Z}$, and $\matrx{y}$ using equations \eqref{eq:closeform_x}, \eqref{z update eq}, and \eqref{eq:y_solution2};
\ENDFOR \\
$\kappa = 0.99\times\kappa$
\UNTIL the termination condition is TRUE. \\
\textbf{RETURN $\vect{X}^*,\vect{y}^*$}
\end{algorithmic}
\end{algorithm}


\section{RWSADMM: Convergence Analysis}
\label{appendix: convergence}

Our proof of convergence for the proposed stochastic ADMM-based federated learning algorithm is non-trivial and non-straightforward. It introduces novel techniques to address the challenge of integrating the mobilized server and stochasticity into our federated ADMM framework. Specifically, we carefully consider the movement of the server, which is realized by using a dynamic Markov matrix. This is a significant novelty and challenge in the proof, as it is the first method introduced in federated learning that considers this type of server movement. By introducing assumptions on the dynamic Markov matrix and accounting for the dynamic behavior of the server, we are able to guarantee the convergence of our algorithm under certain conditions. While there are a few existing works, such as \cite{hu2023stochastic} that address convergence rates and properties of LADMM algorithms, they are not directly applicable to our random walk mobilization and stochastic update setting and are not directly adaptable to the unique requirements of federated learning. As such, our work represents a significant contribution to the field and provides valuable insights into the convergence properties of stochastic ADMM-based federated learning algorithms. We believe that our novel approach and careful consideration of the movement of the server will inspire further research and development in this area, leading to more effective and efficient federated learning algorithms. The proof sketch of the Convergence Theorem (Theorem \ref{convergence theorem}) is as follows. 

\begin{proof}
    \label{proof-sketch}
    The proof sketch is summarized as follows.
    \begin{enumerate}
        \item Under Assumption \ref{f is lsmooth and l-lipschitz}, the sequence created by the RWSADMM, i.e., $(\vect{y}^k,\matrx{X}^k,\matrx{Z}^k)_{k>T}$, satisfies
        \vspace{-5pt}
            \begin{equation}
                \label{update dual bound update primal lemma-eq0} 
                \scalemath{0.9}{
                \norm{\matrx{Z}^{k+1}-\matrx{Z}^k} =\sum \norm{\vect{z}_{i_k}^{k+1}-\vect{z}_{i_k}^k}\leq L\norm{\matrx{X}^{\tau(k,i_k)+1}-\matrx{X}^{\tau(k,i_k)}} } \notag 
            \end{equation}
            \vspace{-12pt}
        \item Recall $L_\beta^k$ defined in Eq. \eqref{lyapEquations}. For $k\geq 0$, the RWSADMM iterates satisfy
        \vspace{-5pt}
            \begin{equation} \label{lemma - descent of augmented lagrangian - eq1}
            \scalemath{0.9}{     
            L_\beta^k -L_\beta\left(\vect{y}^{k+1},\matrx{X}^k;\matrx{Z}^k\right)\geq \norm{\vect{y}^k -\vect{y}^{k+1}}^2 } \notag
            \end{equation}
            \vspace{-12pt}
        \item Under Assumption \ref{f is lsmooth and l-lipschitz}, for $\beta > L$ and $\forall k > T$,
        \vspace{-5pt}
            \begin{align} \label{lemma - lower bound for augmented lagrangian update - eq1}
                \scalemath{0.90}{ L_\beta(\vect{y}^{k+1},\matrx{X}^k;\matrx{Z}^k) - L_\beta^{k+1} \geq 
                 \frac{\beta - L}{2n}\norm{\matrx{X}^k - \matrx{X}^{k+1}}^2 - \frac{L^2}{n\beta}\norm{\matrx{X}^{\tau(k,i_k)+1}-\matrx{X}^{\tau(k,i_k)}}^2 } \notag
            \end{align}
            \vspace{-12pt}
        \item Recall $M_\beta^k$ defined in Eq. \eqref{lyapEquations}; under Assumption \ref{f is lsmooth and l-lipschitz}, for $\beta>2L^2+L+2$ and $\forall k > T$,
            \vspace{-5pt}
            \begin{align} 
                \scalemath{0.90}{
                M_\beta^k - M_\beta^{k+1}  \geq 
                \frac{\beta}{2}\norm{\vect{y}^k - \vect{y}^{k+1}}^2
                +\frac{1}{n}\norm{\matrx{X}^k - \matrx{X}^{k+1}}^2 
                +\frac{L^2}{2n}\norm{\matrx{X}^{\tau(k,i_k)+1}-\matrx{X}^{\tau(k,i_k)}}^2 }
            \end{align}
            \vspace{-12pt}
        \item For $\beta>2L^2+L+2$, RWSADMM ensures a lower bounded sequence $(M_\beta^k)_{k\geq0}$.
    \end{enumerate}
\end{proof}

The proof details are provided in the following. 
Several steps are taken to prove Lemma \ref{convergence lemma}. We introduce several Lemmas to represent these steps (Lemma \ref{update dual bound update primal}-\ref{lemma - lower bound for lyapunov function}). Lemma \ref{update dual bound update primal} shows that the update of the primal variable can bound the update on the dual variable. 

\begin{lemma}
    \label{update dual bound update primal}
Under Assumption \ref{f is lsmooth and l-lipschitz}, the sequence created by RWSADMM, $(\vect{y}^k,\matrx{X}^k,\matrx{Z}^k)_{k>T}$, satisfies,

\begin{equation}\label{update dual bound update primal lemma-eq1}
    \mathbb{E}\norm{\matrx{Z}^{k+1}-\matrx{Z}^k} = \mathbb{E}\norm{\vect{z}_{i_k}^{k+1}-\vect{z}_{i_k}^k} \leq L \cdot \mathbb{E}\norm{\matrx{X}^{\tau(k,i_k)+1}-\matrx{X}^{\tau(k,i_k)}} 
\end{equation}
\end{lemma}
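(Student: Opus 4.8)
\emph{Plan.} The statement is the classical ``dual-change-bounded-by-primal-change'' estimate from ADMM analysis, adapted to the random-walk and stochastic setting. The first equality is immediate: at iteration $k$ only agent $i_k$ is active, so $\matrx Z^{k+1}$ and $\matrx Z^k$ differ only in their $i_k$-th block, giving $\norm{\matrx Z^{k+1}-\matrx Z^k}=\norm{\vect z_{i_k}^{k+1}-\vect z_{i_k}^k}$. The substance is the inequality, and the engine is an identity expressing the freshly updated dual as the sign-modulated stochastic gradient evaluated at the primal value that was current when $i_k$ was last touched.

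First I would write the first-order stationarity condition of the linearized $\vect x$-subproblem \eqref{eq:x_subproblem_initial2}. Differentiating its objective and using that a subgradient of $\abs{\vect y'_{i_k}-\vect x_{i_k}}$ in $\vect x_{i_k}$ is $-sgn(\vect y'_{i_k}-\vect x_{i_k})$, the optimality condition reads
\begin{equation}
\nonumber
g_{i_k}(\vect x'_{i_k},\vect{\xi}_{i_k}) = sgn(\vect y'_{i_k}-\vect x_{i_k})\odot\big[\vect z'_{i_k}+\beta(\abs{\vect y'_{i_k}-\vect x_{i_k}}-\vect{\varepsilon}_{i_k})\big].
\end{equation}
Taking the dual update \eqref{z update eq} in the exact ADMM form consistent with the augmented Lagrangian \eqref{problem formulation eq3} collapses the bracket to $\vect z_{i_k}$, so that $\vect z^{k+1}_{i_k}=sgn(\vect y^k_{i_k}-\vect x^{k+1}_{i_k})\odot g_{i_k}(\vect x^k_{i_k},\vect{\xi}^k)$. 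The crucial structural point is that the gradient inside is evaluated at the \emph{pre-update} primal iterate $\vect x^k_{i_k}$.

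Next I would exploit random-walk idleness. Between the last visit $\tau(k,i_k)$ and the current step $k$, agent $i_k$ is never selected, so both its dual and its primal are frozen: $\vect z^k_{i_k}=\vect z^{\tau(k,i_k)+1}_{i_k}$ and $\vect x^k_{i_k}=\vect x^{\tau(k,i_k)+1}_{i_k}$. Applying the identity at step $k$ and at step $\tau(k,i_k)$ realizes $\vect z^{k+1}_{i_k}$ as a sign-modulated gradient at $\vect x^{\tau(k,i_k)+1}_{i_k}$ and $\vect z^{k}_{i_k}$ as a sign-modulated gradient at $\vect x^{\tau(k,i_k)}_{i_k}$. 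Subtracting, and using that the sign vector is norm-preserving (once the two signs are shown to agree), reduces $\norm{\vect z^{k+1}_{i_k}-\vect z^k_{i_k}}$ to a stochastic-gradient difference between the points $\vect x^{\tau(k,i_k)+1}_{i_k}$ and $\vect x^{\tau(k,i_k)}_{i_k}$. Since $i_k$ is the only agent active at step $\tau(k,i_k)$, these two primal matrices differ only in their $i_k$-th column, so that difference is exactly $\norm{\matrx X^{\tau(k,i_k)+1}-\matrx X^{\tau(k,i_k)}}$. Taking expectations and invoking unbiasedness \eqref{Eg=gF} with the expectation form of $L$-smoothness \eqref{l-lipschitz for expectation of estimation - 1} supplies the factor $L$ and the claimed bound.

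The main obstacle is the stochastic bookkeeping in this last step. Inequality \eqref{l-lipschitz for expectation of estimation - 1} controls $\norm{\mathbb{E}g(\vect u)-\mathbb{E}g(\vect v)}$, the norm of a difference of \emph{expectations}, whereas the left-hand side is an expectation of the norm of a difference of two independently sampled stochastic gradients; closing that gap cleanly requires either a per-realization $L$-Lipschitz property of $g(\cdot,\xi)$ or a careful conditioning argument (conditioning on the past $\xi_1,\dots,\xi_{t-1}$ and on the walk, then applying unbiasedness coordinate-wise). A secondary point is justifying that the sign vectors $sgn(\vect y-\vect x)$ at the two visit times coincide so they factor out of the difference; I would handle this by conditioning on the event that the active coordinates retain their sign, or by directly absorbing the norm-preserving sign. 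Finally, the $\kappa$-decay in \eqref{z update eq} is harmless for this estimate, since with $\kappa\le 1$ it only shrinks the dual increment and can be absorbed without affecting the $L$-factor.
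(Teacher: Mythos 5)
Your proposal follows essentially the same route as the paper's proof: the key identity obtained by combining the $\vect x$-subproblem optimality condition with the $\vect z$-update (expressing the fresh dual as the stochastic gradient evaluated at the pre-update primal, the paper's Eq.~\eqref{update dual bound update primal lemma-eq3}), the random-walk idleness giving $\vect{z}_{i}^k = \vect{z}_{i}^{\tau(k,i)+1}$ and $\vect{x}_{i}^k = \vect{x}_{i}^{\tau(k,i)+1}$, and the final application of $L$-smoothness via unbiasedness, with the $i\neq i_k$ blocks vanishing. The two obstacles you flag --- passing between the expectation of a norm of stochastic-gradient differences and the norm of a difference of expectations, and the agreement of the sign factors --- are genuine, but the paper's own proof glosses over both in exactly the same way (it equates $\mathbb{E}\norm{\vect{z}_{i}^{k+1}-\vect{z}_{i}^{\tau(k,i)+1}}$ with a norm of differenced expectations), so your treatment is, if anything, more careful than the original.
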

\begin{proof}
    Note that client $i_k$ is activated at iteration $k$. Denote $\vect x_i^k$, $\vect z_i^k$, and $\vect x_i^k$ as the three groups of variables owned by any client $i$ ($1\leq\forall i\leq n$) at iteration $k$. Under the Assumption \ref{expectation of estimation}, the optimality condition of $\matrx{X}$ update for $i=i_k$ implies that
\begin{equation}\label{update dual bound update primal lemma-eq2}
    \mathbb{E}_\xi[sgn(\vect t')(g_i(\vect{x}_i^k,\xi)+\vect \varepsilon_{i}^k-\vect{z}_i^k) + \beta(\vect{y}^{k+1} - \vect{x}_{i_k}^{k+1})] = 0 
\end{equation}

Substituting Eq. \eqref{update dual bound update primal lemma-eq2} into Eq. (\ref{z update eq}) yields
\begin{equation}\label{update dual bound update primal lemma-eq3}
    \mathbb{E}_\xi[g_i(\vect{x}_i^k,\xi)] = \mathbb{E}_\xi[\vect{z}_{i_k}^{k+1}], \; \text{for}\; i=i_k 
\end{equation}

Hence, for $i=i_k$, we have

\begin{equation}\label{update dual bound update primal lemma-eq4}
\begin{split}
    &\mathbb{E}\norm{\vect{z}_{i}^{k+1}-\vect{z}_{i}^k} \overset{(a)}{=} \mathbb{E}\norm{\vect{z}_{i}^{k+1}-\vect{z}_{i}^{\tau(k,i)+1}} 
    \overset{(\ref{update dual bound update primal lemma-eq3})}{=}\norm{\mathbb{E}_\xi(g_i(\vect{x}_i^k,\xi)+\epsilon^{k+1}) - \mathbb{E}_\xi(g_i(\vect{x}_i^{\tau(k,i)},\xi)+\epsilon^{k})} \\
    &\overset{(b)}{=}\norm{\mathbb{E}_\xi(g_i(\vect{x}_i^{\tau(k,i)+1},\xi)) - \mathbb{E}_\xi(g_i(\vect{x}_i^{\tau(k,i)},\xi))} 
    \overset{(\ref{l-lipschitz}\&\ref{Eg=gF})}{\le}L \cdot \mathbb{E}_\xi\norm{\vect{x}_i^{\tau(k,i)+1}-\vect{x}_i^{\tau(k,i)}}
\end{split}
\end{equation}
where $\tau(k,i)$ is defined in Eq. \eqref{tau def}. The equality $(a)$ holds because $\vect{z}_{i}^k = \vect{z}_{i}^{\tau(k,i)+1}$, and equality $(b)$ holds because $\vect{x}_{i}^k = \vect{x}_{i}^{\tau(k,i)+1}$. On the other hand, when $i\neq i_k$, agent $i$ is not activated at iteration $k$, so $\scalemath{0.8}{\mathbb{E}_\xi[\norm{\vect{z}_{i_k}^{k+1}-\vect{z}_{i_k}^k}] = L\mathbb{E}_\xi[\norm{\vect x_i^{k+1}-\vect x_i^k}]=0}$. Therefore, we have the proof of Eq. \eqref{update dual bound update primal lemma-eq1}.
\end{proof}

Lemma \ref{lemma - descent of augmented lagrangian} shows that the $\vect{y}$-update in RWSADMM provides a sufficient descent of the augmented Lagrangian.

\begin{lemma}
    \label{lemma - descent of augmented lagrangian}
Recall $L_\beta^k$ defined in Eq. \eqref{lyapEquations}. For $k\ge 0$, RWSADMM iterates satisfies 
\begin{equation} \label{lemma - descent of augmented lagrangian - eq2}
    L_\beta^k -L_\beta\left(\vect{y}^{k+1},\matrx{X}^k;\matrx{Z}^k\right)\geq \norm{\vect{y}^k -\vect{y}^{k+1}}^2 
\end{equation}

\end{lemma}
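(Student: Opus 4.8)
The plan is to read Lemma~\ref{lemma - descent of augmented lagrangian} as a statement purely about the $\vect{y}$-minimization step: both sides use the same $\matrx{X}^k,\matrx{Z}^k$, and by construction $\vect{y}^{k+1}$ is the exact minimizer of $L_\beta(\cdot,\matrx{X}^k;\matrx{Z}^k)$ over the active block. First I would observe that at iteration $k$ only $\vect{y}_{i_k}$ changes, so $\norm{\vect{y}^k-\vect{y}^{k+1}}^2=\norm{\vect{y}_{i_k}^k-\vect{y}_{i_k}^{k+1}}^2$, and it suffices to analyze the single-block function $h(\vect{y}_{i_k}):=L_\beta(\vect{y}_{i_k},\matrx{X}^k;\matrx{Z}^k)$, whose $\vect{y}_{i_k}$-dependent part is, up to the global $\tfrac1n$ scaling, exactly the subproblem objective in Eq.~\eqref{eq:y_subproblem}. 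The closed form Eq.~\eqref{eq:y_solution1} was derived precisely by setting the subgradient of this objective to zero, so $\vect{y}_{i_k}^{k+1}$ is its minimizer.

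The mechanism that produces the descent is strong convexity coming from the quadratic penalty. I would freeze the sign pattern $sgn(\vect{t})$ appearing in Eq.~\eqref{eq:y_solution1}: on the region where $\vect{1}\otimes\vect{y}_{i_k}-\matrx{X}_{\mathcal{N}(i_k)}$ retains that sign pattern, each $\abs{\cdot}$ is affine in $\vect{y}_{i_k}$, so $h$ collapses to an exact convex quadratic. Summing the penalty over the $n_{i_k}$ copies generated by $\vect{1}\otimes$ gives Hessian $\frac{\beta n_{i_k}}{n}I_p$. The exact quadratic identity $h(\vect{y}_{i_k}^k)-h(\vect{y}_{i_k}^{k+1})=\tfrac12\big\langle \vect{y}_{i_k}^k-\vect{y}_{i_k}^{k+1},\,\tfrac{\beta n_{i_k}}{n}(\vect{y}_{i_k}^k-\vect{y}_{i_k}^{k+1})\big\rangle=\frac{\beta n_{i_k}}{2n}\norm{\vect{y}_{i_k}^k-\vect{y}_{i_k}^{k+1}}^2$ then yields the descent, and it only remains to confirm the effective modulus $\frac{\beta n_{i_k}}{2n}\ge 1$, which I would reduce to the standing lower bound on $\beta$ together with the neighborhood size $n_{i_k}$; converting this modulus into the stated coefficient $1$ is routine bookkeeping.

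The main obstacle is that the penalty term $\fnorm{\abs{\vect{1}\otimes\vect{y}_{i_k}-\matrx{X}_{\mathcal{N}(i_k)}}-\vect{1}\otimes\vect\varepsilon_{i_k}}^2$ is a double-well in each coordinate, so $h$ is only piecewise quadratic and globally non-convex; the quadratic identity above is legitimate only inside a single sign cell. I therefore expect the crux of the argument to be justifying that $\vect{y}_{i_k}^k$ and $\vect{y}_{i_k}^{k+1}$ lie in the same cell---i.e. that the step does not flip $sgn(\vect{1}\otimes\vect{y}_{i_k}-\matrx{X}_{\mathcal{N}(i_k)})$---or, failing that, that $h$ is convex along the segment joining them. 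A cleaner alternative that sidesteps the global-convexity worry is to argue directly from the subgradient optimality condition $0\in\partial h(\vect{y}_{i_k}^{k+1})$ and lower-bound $h(\vect{y}_{i_k}^k)-h(\vect{y}_{i_k}^{k+1})$ by the curvature of the penalty along the displacement $\vect{y}_{i_k}^k-\vect{y}_{i_k}^{k+1}$. Either way, controlling the sign behavior of $\abs{\vect{1}\otimes\vect{y}_{i_k}-\matrx{X}_{\mathcal{N}(i_k)}}$ across the update is where the genuine difficulty lies.
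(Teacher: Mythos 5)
Your core mechanism is the same one the paper uses: the descent comes from the curvature of the quadratic penalty in the $\vect{y}$-block, and the linear term is killed by the first-order optimality condition of the $\vect{y}$-update. The paper implements this via the completed-square rewriting in Eq.~\eqref{proof of lemma - descent of augmented lagrangian - eq1} and a cosine identity, with the cross term $\langle\vect{d}^k,\vect{y}^k-\vect{y}^{k+1}\rangle$ vanishing because $\vect{d}^k=0$ by the $\vect{y}$-update; your "exact quadratic identity at the minimizer" is an equivalent packaging of the same computation. So the route is essentially the paper's, but your proposal has two genuine gaps.

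First, the coefficient. Your block-wise reduction yields modulus $\frac{\beta n_{i_k}}{2n}$, and getting from there to the stated coefficient $1$ is \emph{not} routine bookkeeping: it requires $\beta\geq 2n/n_{i_k}$, which does not follow from the standing assumption $\beta>2L^2+L+2$ (take $L=O(1)$, $n$ large, $n_{i_k}=O(1)$, as in the paper's own experimental setup with neighborhoods of size $\geq 5$ and up to $100$ clients). The paper's proof avoids this dependence on $n/n_{i_k}$ because it works with the aggregate over all $n$ terms: in Eq.~\eqref{proof of lemma - descent of augmented lagrangian - eq2} the sum $\frac{\beta}{2n}\sum_{i=1}^n\norm{\vect{y}^k-\vect{y}^{k+1}}^2$ collapses to $\frac{\beta}{2}\norm{\vect{y}^k-\vect{y}^{k+1}}^2$, so $\beta\geq 2$ suffices. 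Under your strictly per-neighborhood reading the claimed constant simply is not reachable from the paper's hypotheses, so this step would fail as written. Second, the sign-cell issue: you correctly identify that the penalty $\norm{\abs{\cdot}-\vect\varepsilon}^2$ is only piecewise quadratic and that the quadratic identity is legitimate only if $\vect{y}^k_{i_k}$ and $\vect{y}^{k+1}_{i_k}$ lie in the same sign cell, but you leave that justification open, so what you have is a proof plan rather than a proof. (To be fair, the paper's own argument silently makes the same assumption: equality $(a)$ in Eq.~\eqref{proof of lemma - descent of augmented lagrangian - eq2} replaces $\abs{\vect{y}^k-\vect{x}_i^k}-\abs{\vect{y}^{k+1}-\vect{x}_i^k}$ by $\vect{y}^k-\vect{y}^{k+1}$, which holds only when the sign pattern is unchanged; otherwise $\norm{\abs{a}-\abs{b}}\leq\norm{a-b}$ points the wrong way for a lower bound. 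Your proposal at least names this difficulty explicitly, but naming it is not resolving it.)
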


\begin{proof}
    \label{proof of lemma - descent of augmented lagrangian}

We rewrite the augmented Lagrangian function $L$ in Eq. \eqref{problem formulation eq3} by adding and subtracting the term $\norm{\matrx{Z}}^2 / 2\beta$ to the RHS of the equation and rearranging the terms. One can obtain
\begin{equation} \label{proof of lemma - descent of augmented lagrangian - eq1}
    L_\beta\left(\vect{Y},\matrx{X};\matrx{Z}\right) = \frac{1}{n}\left( F(\matrx{X})+ \frac{\beta}{2} \norm{|\mathbf{1}\otimes \vect{y} - \matrx{X}|-\vect \epsilon + \frac{\matrx{Z}}{\beta}}^2
    - \frac{\norm{\matrx{Z}}^2}{2\beta} \right) 
\end{equation}

So, the augmented Lagrangian update is

\begin{equation} \label{proof of lemma - descent of augmented lagrangian - eq2}
\begin{split}
    &\scalemath{0.8}{L_\beta^k - L_\beta \left(\vect{y}^{k+1},\matrx{X}^k;\matrx{Z}^k \right)
    =\frac{\beta}{2n} \norm{\abs{\mathbf{1}\otimes \vect{y}^k - \matrx{X}^k}-\vect \epsilon^k + \frac{\matrx{Z}^k}{\beta}}^2 -\frac{\beta}{2n}\norm{\abs{\mathbf{1}\otimes \vect{y}^{k+1} - \matrx{X}^k }-\vect \epsilon^k + \frac{\matrx{Z}^k}{\beta}}^2}  \\
    &\scalemath{0.8}{\overset{(a)}{=}\frac{\beta}{2n}\sum^n_{i=1}\left(\norm{\vect{y}^k -\vect{y}^{k+1}}^2+2\left< \abs{\vect{y}^{k+1}-\vect{x}^k_i}-\vect \epsilon^k+\frac{\vect{z}^k_i}{\beta},\vect{y}^k-\vect{y}^{k+1}\right> \right)
    \geq\frac{\beta}{2}\norm{\vect{y}^k -\vect{y}^{k+1}}^2 - \left<\vect{d}^k, \vect{y}^k-\vect{y}^{k+1}\right>}
    \end{split}
\end{equation}

The equality (a) is achieved by applying the cosine identity $\norm{b+c}^2 - \norm{a+c}^2 = \norm{b-a}^2 + 2<a+c,b-a>$, and $\vect{d}^k$ is defined as

\begin{equation}
   \vect{d}^k := -\frac{\beta}{n}\sum^n_{i=1}\left(  \abs{\vect{y}^{k+1}-\vect{x}^k_i}-\vect \epsilon^k+\frac{\vect{z}^k_i}{\beta}\right) = 0 
\end{equation}

which results from $\vect{y}-$update in the RWSADMM algorithm.
\end{proof}


In Lemma \ref{lemma - lower bound for augmented lagrangian update}, the lower bound of descent in the augmented Lagrangian over the updates of $\matrx{X}$ and $\matrx{Z}$ is derived.

\begin{lemma}
    \label{lemma - lower bound for augmented lagrangian update}

Recall $L_\beta^k$ defined in Eq. \ref{lyapEquations}. Under Assumption \ref{f is lsmooth and l-lipschitz}, for $\beta > L$ and $\forall k > T$,

\begin{equation} 
\label{lemma - lower bound for augmented lagrangian update - eq2}
\begin{aligned}
    &\scalemath{0.8}{
    \mathbb{E}[L_\beta(\vect{y}^{k+1},\matrx{X}^k;\matrx{Z}^k) - L_\beta^{k+1}] } 
    &\scalemath{0.8}{
    \geq\frac{\gamma - L}{2n}\mathbb{E}\norm{\matrx{X}^k - \matrx{X}^{k+1}}^2 
    -\frac{L^2}{n\beta}\mathbb{E}\norm{\matrx{X}^{\tau(k,i_k)+1}-\matrx{X}^{\tau(k,i_k)}}^2 }
\end{aligned}
\end{equation}
\end{lemma}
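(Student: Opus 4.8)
The plan is to split the one-step change of the augmented Lagrangian into the contribution of the $\matrx{X}$-update and that of the $\matrx{Z}$-update by inserting the intermediate iterate $(\vect{y}^{k+1},\matrx{X}^{k+1};\matrx{Z}^k)$:
\[
L_\beta(\vect{y}^{k+1},\matrx{X}^k;\matrx{Z}^k) - L_\beta^{k+1}
= \underbrace{\big[L_\beta(\vect{y}^{k+1},\matrx{X}^k;\matrx{Z}^k) - L_\beta(\vect{y}^{k+1},\matrx{X}^{k+1};\matrx{Z}^k)\big]}_{\text{(I)}}
+ \underbrace{\big[L_\beta(\vect{y}^{k+1},\matrx{X}^{k+1};\matrx{Z}^k) - L_\beta^{k+1}\big]}_{\text{(II)}},
\]
and to bound the two pieces separately. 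Since only client $i_k$ is activated at iteration $k$, every sum over the $n$ clients collapses to the single block that actually moves, so the matrix differences are effectively single-block differences.

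For piece (I), I would use that, with $\vect{y}^{k+1}$ and $\matrx{Z}^k$ held fixed, the augmented Lagrangian is $\gamma$-strongly convex in $\matrx{X}$ up to the nonconvexity of $F$, where $\gamma$ is the modulus supplied by the quadratic penalty $\tfrac{\beta}{2}\norm{\abs{\vect{1}\otimes\vect{y}-\matrx{X}}-\vect\varepsilon}^2$ once the sign pattern of $\vect{1}\otimes\vect{y}^{k+1}-\matrx{X}$ is treated as locally constant. Recalling that $\matrx{X}^{k+1}$ is the minimizer of the stochastic linearized surrogate \eqref{eq:x_subproblem_initial2}, I would write the stationarity condition of that surrogate, pair it with $\matrx{X}^k-\matrx{X}^{k+1}$ to extract a $\tfrac{\gamma}{2n}\norm{\matrx{X}^k-\matrx{X}^{k+1}}^2$ term, and then replace the linear model $g_{i_k}(\matrx{X}^k-\matrx{X}^{k+1})$ by the true gap $F(\matrx{X}^k)-F(\matrx{X}^{k+1})$ via the $L$-smooth descent inequality \eqref{l-lipschitz for expectation of estimation - 2}; this conversion is exactly what turns $\gamma$ into $\gamma-L$. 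Taking $\mathbb{E}_\xi$ and invoking unbiasedness \eqref{Eg=gF} removes the stochastic gradient, yielding (I) $\geq \tfrac{\gamma-L}{2n}\,\mathbb{E}\norm{\matrx{X}^k-\matrx{X}^{k+1}}^2$.

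For piece (II), I would exploit that $L_\beta$ is affine in $\matrx{Z}$: the difference equals $-\tfrac{1}{n}\inproduct{\matrx{Z}^{k+1}-\matrx{Z}^k,\, \vect r^{k+1}}$, where $\vect r^{k+1}$ is the constraint residual $\abs{\vect{1}\otimes\vect{y}^{k+1}-\matrx{X}^{k+1}}-\vect\varepsilon$. The dual update \eqref{z update eq} gives $\matrx{Z}^{k+1}-\matrx{Z}^k=\kappa\beta\,\vect r^{k+1}$, so this term equals $-\tfrac{1}{\kappa\beta n}\norm{\matrx{Z}^{k+1}-\matrx{Z}^k}^2$. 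Applying Lemma \ref{update dual bound update primal}, i.e. $\norm{\matrx{Z}^{k+1}-\matrx{Z}^k}\leq L\norm{\matrx{X}^{\tau(k,i_k)+1}-\matrx{X}^{\tau(k,i_k)}}$, and taking $\kappa=1$ (or absorbing $\kappa\le 1$ into the constant), I obtain (II) $\geq -\tfrac{L^2}{n\beta}\,\mathbb{E}\norm{\matrx{X}^{\tau(k,i_k)+1}-\matrx{X}^{\tau(k,i_k)}}^2$. Adding the bounds for (I) and (II) then delivers the claim.

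The main obstacle I anticipate is piece (I): rigorously relating the sufficient descent of the actual nonconvex augmented Lagrangian to the minimizer of the stochastic linearized surrogate that the algorithm really solves. Two points require care. First, one must justify that the penalty yields the stated strong-convexity modulus $\gamma$ in $\matrx{X}$ despite the $\abs{\cdot}$/$sgn$ nonsmoothness, which means treating $sgn(\vect{y}^{k+1}-\matrx{X})$ as locally constant along the update, consistent with the closed form \eqref{eq:closeform_x}. Second, the expectation must be handled so that \eqref{Eg=gF} cleanly converts the linear surrogate term into the true objective gap without leaving residual stochastic terms. By comparison, the $\matrx{Z}$-update piece (II) is routine once Lemma \ref{update dual bound update primal} and the affine dependence on $\matrx{Z}$ are in hand.
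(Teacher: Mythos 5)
Your proposal is correct and ultimately relies on the same ingredients as the paper, but it is organized along a genuinely different route. The paper never introduces the intermediate iterate $(\vect{y}^{k+1},\matrx{X}^{k+1};\matrx{Z}^k)$: it expands $L_\beta(\vect{y}^{k+1},\matrx{X}^k;\matrx{Z}^k)-L_\beta^{k+1}$ in a single monolithic computation (Eq.~\eqref{proof of lemma - lower bound for augmented lagrangian update - eq1}), repeatedly applying the cosine identity and the $\vect{y}$-/$\vect{z}$-update recursions until the difference collapses to $\frac{1}{n}\big(f_{i_k}(\vect{x}_{i_k}^k)-f_{i_k}(\vect{x}_{i_k}^{k+1})+\frac{\beta}{2}\norm{\vect{x}_{i_k}^k-\vect{x}_{i_k}^{k+1}}^2-\frac{1}{\beta}\norm{\vect{z}_{i_k}^{k+1}-\vect{z}_{i_k}^k}^2-\langle\vect{z}_{i_k}^{k+1},\vect{x}_{i_k}^k-\vect{x}_{i_k}^{k+1}\rangle\big)$, and only then invokes the stochastic optimality identity $\mathbb{E}_\xi[g_{i_k}(\vect{x}_{i_k}^k,\xi)]=\mathbb{E}_\xi[\vect{z}_{i_k}^{k+1}]$ (Eq.~\eqref{update dual bound update primal lemma-eq3}), the descent inequality \eqref{l-lipschitz-2} (which produces the $-L$), and Lemma \ref{update dual bound update primal} (which produces the $-\frac{L^2}{n\beta}$ term). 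Your split into (I) and (II) isolates exactly these two mechanisms: your (I) is the paper's cosine-identity-plus-smoothness block --- the cosine identity is precisely your ``strong convexity with frozen sign pattern,'' and freezing $sgn(\vect{t}')$ is consistent with the closed form \eqref{eq:closeform_x} --- while your (II) reproduces the paper's $-\frac{1}{\beta}\norm{\vect{z}^{k+1}-\vect{z}^k}^2$ term, since the paper likewise runs the analysis with the pure ADMM dual step ($\kappa=1$) and the residual identity $\abs{\vect{y}^{k+1}-\vect{x}_{i_k}^{k+1}}-\vect{\varepsilon}_{i_k}=\frac{1}{\beta}(\vect{z}_{i_k}^{k+1}-\vect{z}_{i_k}^k)$. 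The modular route is cleaner and makes transparent why the two constants $\frac{\beta-L}{2n}$ and $\frac{L^2}{n\beta}$ decouple; it also clarifies that the $\gamma$ in the lemma statement should be read as the penalty modulus, i.e.\ $\beta$, which is what the paper's own proof actually derives. One caveat applies equally to both arguments: the cross term $\mathbb{E}_\xi\big[(g_{i_k}(\vect{x}_{i_k}^k,\xi)-\nabla f_{i_k}(\vect{x}_{i_k}^k))^T(\vect{x}_{i_k}^{k+1}-\vect{x}_{i_k}^k)\big]$ does not vanish by unbiasedness alone, because $\vect{x}_{i_k}^{k+1}$ depends on $\xi$ through \eqref{eq:closeform_x}; the paper silently pushes $\mathbb{E}_\xi$ inside the inner product, and your treatment of (I) inherits the same imprecision, so a fully rigorous version of either proof would need Assumption \ref{noise variance} together with Young's inequality to absorb this term at the cost of an extra additive constant.
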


\begin{proof}
    \label{proof of lemma - lower bound for augmented lagrangian update}

For the augmented Lagrangian $L_\beta$, we derive


\begin{equation}
\begin{split}
\label{proof of lemma - lower bound for augmented lagrangian update - eq1}
    &~~~~L_\beta(\vect{y}^{k+1},\matrx{X}^k;\matrx{Z}^k) - L_\beta^{k+1} \\
    &=\frac{1}{n} ( f_{i_k}(\vect{x}_{i_k}^k) +\inproduct{\vect{z}_{i_k}^k, \abs{\vect{y}^{k+1} -\vect{x}_{i_k}^k}-\vect \varepsilon_{i_k}^k} + \frac{\beta}{2}\norm{\abs{\vect{y}^{k+1} -\vect{x}_{i_k}^k}-\vect \varepsilon_{i_k}^k}^2 \\
    &-f_{i_k}(\vect{x}_{i_k}^{k+1}) -\inproduct{\vect{z}_{i_k}^{k+1}, \abs{\vect{y}^{k+1}-\vect{x}_{i_k}^{k+1}}-\vect \varepsilon_{i_k}^k} - \frac{\beta}{2}\norm{\abs{\vect{y}^{k+1}-\vect{x}_{i_k}^{k+1}}-\vect \varepsilon_{i_k}^k}^2 ) \\
    &=\frac{1}{n} ( f_{i_k}(\vect{x}_{i_k}^k) -\inproduct{\vect{z}_{i_k}^{k+1}, \abs{\vect{y}^{k+1}-\vect{x}_{i_k}^{k+1}}-\vect \varepsilon_{i_k}^k}-f_{i_k}(\vect{x}_{i_k}^{k+1}) \\
    &+\inproduct{\vect{z}_{i_k}^k, \abs{\vect{y}^{k+1} -\vect{x}_{i_k}^k}-\vect \varepsilon_{i_k}^k} -\frac{\beta}{2}\left[\norm{\abs{\vect{y}^{k+1}-\vect{x}_{i_k}^{k+1}}-\vect \varepsilon_{i_k}^k}^2 -\norm{\abs{\vect{y}^{k+1} -\vect{x}_{i_k}^k}-\vect \varepsilon_{i_k}^k}^2 \right] )  \\
    &\overset{(a)}{=}\frac{1}{n} ( f_{i_k}(\vect{x}_{i_k}^k) -\inproduct{\vect{z}_{i_k}^{k+1}, \abs{\vect{y}^{k+1}-\vect{x}_{i_k}^{k+1}}-\vect \varepsilon_{i_k}^k}-f_{i_k}(\vect{x}_{i_k}^{k+1}) +\inproduct{\vect{z}_{i_k}^k, \abs{\vect{y}^{k+1} -\vect{x}_{i_k}^k}-\vect \varepsilon_{i_k}^k} \\
    &-\frac{\beta}{2}\left[\norm{\vect{x}_{i_k}^{k+1} -\vect{x}_{i_k}^k}^2 +2\inproduct{\abs{\vect{y}^{k+1}-\vect{x}_{i_k}^{k+1}}-\vect \varepsilon_{i_k},\vect{x}_{i_k}^{k+1} -\vect{x}_{i_k}^k} \right]  ) \\
    &=\frac{1}{n} ( f_{i_k}(\vect{x}_{i_k}^k) -\inproduct{\vect{z}_{i_k}^{k+1}, \abs{\vect{y}^{k+1}-\vect{x}_{i_k}^{k+1}}-\vect \varepsilon_{i_k}^k}-f_{i_k}(\vect{x}_{i_k}^{k+1}) \\
    &+\inproduct{\vect{z}_{i_k}^k, \abs{\vect{y}^{k+1} -\vect{x}_{i_k}^k}-\vect \varepsilon_{i_k}^k} -\frac{\beta}{2}\norm{\vect{x}_{i_k}^{k+1} -\vect{x}_{i_k}^k}^2 \\
    &-\beta\inproduct{\abs{\vect{y}^{k+1}-\vect{x}_{i_k}^{k+1}}-\vect \varepsilon_{i_k},\vect{x}_{i_k}^{k+1} -\vect{x}_{i_k}^k}  ) \\
    &\overset{(b)}{=}\frac{1}{n}( f_{i_k}(\vect{x}_{i_k}^k) -f_{i_k}(\vect{x}_{i_k}^{k+1}) +\inproduct{\vect{z}_{i_k}^k,\frac{\vect{z}_{i_k}^{k+1}-\vect{z}_{i_k}^k}{\beta} +\vect{x}_{i_k}^{k+1}-\vect{x}_{i_k}^k} -\frac{\beta}{2}\norm{\vect{x}_{i_k}^{k+1} -\vect{x}_{i_k}^k}^2 \\
    &-\inproduct{\vect{z}_{i_k}^{k+1},\frac{\vect{z}_{i_k}^{k+1}-\vect{z}_{i_k}^k}{\beta}} -\beta\inproduct{\frac{\vect{z}_{i_k}^{k+1} -\vect{z}_{i_k}^k}{\beta},\vect{x}_{i_k}^{k+1} -\vect{x}_{i_k}^k} ) \\
    &=\frac{1}{n} ( f_{i_k}(\vect{x}_{i_k}^k) +\frac{\vect{z}_{i_k}^{k+1}\vect{z}_{i_k}^k}{\beta} -\frac{(\vect{z}_{i_k}^k)^2}{\beta} +\vect{z}_{i_k}^k(\vect{x}_{i_k}^{k+1}-\vect{x}_{i_k}^k) -f_{i_k}(\vect{x}_{i_k}^{k+1}) \\
    &+\frac{\beta}{2}\norm{\vect{x}_{i_k}^k-\vect{x}_{i_k}^{k+1}}^2 -\frac{(\vect{z}_{i_k}^{k+1})^2}{\beta} +\frac{\vect{z}_{i_k}^{k+1}\vect{z}_{i_k}^k}{\beta} -(\vect{z}_{i_k}^{k+1}-\vect{z}_{i_k}^k)(\vect{x}_{i_k}^k-\vect{x}_{i_k}^{k+1}) )   \\
    &= \frac{1}{n} ( f_{i_k}(\vect{x}_{i_k}^k) -f_{i_k}(\vect{x}_{i_k}^{k+1}) -\frac{1}{\beta}(\vect{z}_{i_k}^k)^2 +(\vect{z}_{i_k}^{k+1})^2 -2(\vect{z}_{i_k}^{k+1}\vect{z}_{i_k}^k) \\
    &+\frac{\beta}{2}\norm{\vect{x}_{i_k}^k -\vect{x}_{i_k}^{k+1}}^2 -\vect{z}_{i_k}^{k+1}(\vect{x}_{i_k}^k -\vect{x}_{i_k}^{k+1}) ) \\
    &\overset{(c)}{=}\frac{1}{n} ( f_{i_k}(\vect{x}_{i_k}^k) +\frac{\beta}{2}\norm{\vect{x}_{i_k}^k -\vect{x}_{i_k}^{k+1}}^2 -f_{i_k}(\vect{x}_{i_k}^{k+1}) -\frac{1}{\beta}\norm{\vect{z}_{i_k}^{k+1} -\vect{z}_{i_k}^k}^2 
    -\inproduct{\vect{z}_{i_k}^{k+1},\vect{x}_{i_k}^k -\vect{x}_{i_k}^{k+1}}  ) 
\end{split}
\end{equation}

Where equality (a) holds due to the cosine identity $\norm{b+c}^2 - \norm{a+c}^2 = \norm{b-a}^2 + 2<a+c,b-a>$, equality (b) holds because of $\vect{y}-$update in RWSADMM, and equality (c) holds due to recursion of $\vect{y}-$update in RWSADMM.
Next, we apply the stochastic property to Eq. (\ref{proof of lemma - lower bound for augmented lagrangian update - eq1}) using Eq. (\ref{update dual bound update primal lemma-eq3}),

\begin{equation}
\label{proof of lemma - lower bound for augmented lagrangian update - eq2}
\begin{aligned}
    &\mathbb{E}[L_\beta(\vect{y}^{k+1},\matrx{X}^k;\matrx{Z}^k) - L_\beta^{k+1}] \\
    &= \frac{1}{n} ( \mathbb{E}_\xi[f_{i_k}(\vect{x}_{i_k}^k) -f_{i_k}(\vect{x}_{i_k}^{k+1})] -<E_\xi(g_i(\vect{x}_i^k,\xi)+\epsilon^k),\vect{x}_{i_k}^k -\vect{x}_{i_k}^{k+1}> \\
    &+\frac{\beta}{2}E_\xi\norm{\vect{x}_{i_k}^k -\vect{x}_{i_k}^{k+1}}^2  -\frac{1}{\beta}\mathbb{E}_\xi\norm{\vect{z}_{i_k}^{k+1} -\vect{z}_{i_k}^k}^2  ) \\
    &\overset{(a)}{\geq}\frac{1}{n} \left( -\frac{L}{2}  \mathbb{E}_\xi \norm{\vect{x}_{i_k}^k -\vect{x}_{i_k}^{k+1}}^2 +\frac{\beta}{2}\mathbb{E}_\xi \norm{\vect{x}_{i_k}^k -\vect{x}_{i_k}^{k+1}}^2  -\frac{1}{\beta}\mathbb{E}_\xi \norm{\vect{z}_{i_k}^{k+1} -\vect{z}_{i_k}^k}^2  \right) \\
    &\overset{(b)}{\geq}\frac{1}{n} ( -\frac{L}{2}\mathbb{E}_\xi \norm{\vect{x}_{i_k}^k -\vect{x}_{i_k}^{k+1}}^2
    +  \frac{\beta}{2}\mathbb{E}_\xi\norm{\vect{x}_{i_k}^k -\vect{x}_{i_k}^{k+1}}^2 
    -\frac{L^2}{\beta}\mathbb{E}_\xi \norm{\vect{x}_{i_k}^{\tau(k,i_k)+1} -\vect{x}_{i_k}^{\tau(k,i_k)}}^2 ) \\
    &=\frac{\beta-L}{2n}\mathbb{E}_\xi \norm{\vect{x}_{i_k}^k -\vect{x}_{i_k}^{k+1}}^2
    -\frac{L^2}{n\beta} \mathbb{E}_\xi\norm{\vect{x}_{i_k}^{\tau(k,i_k)+1} -\vect{x}_{i_k}^{\tau(k,i_k)}}^2
\end{aligned}
\end{equation}

Where inequality (a) is achieved using Assumption \ref{f is lsmooth and l-lipschitz} (Eq. \eqref{l-lipschitz-2}), and inequality (b) is due to Lemma \ref{update dual bound update primal} (Eq. \eqref{update dual bound update primal lemma-eq1}).
\end{proof}


In Lemma \ref{lemma - sufficient descent in Lyapunov functions}, the sufficient descent in Lyapunov functions is established.

\begin{lemma}
    \label{lemma - sufficient descent in Lyapunov functions}

Recall $M_\beta^k$ defined in Eq. \eqref{lyapEquations}; under Assumption \ref{f is lsmooth and l-lipschitz}, for $\beta>2L^2+L+2$ and $\forall k > T$,
\begin{equation}
    \label{lemma - sufficient descent in Lyapunov functions - eq1}
    \begin{split}
    M_\beta^k - M_\beta^{k+1} \geq \frac{\beta}{2}\norm{\vect{y}^k - \vect{y}^{k+1}}^2
    +\frac{1}{n}\norm{\matrx{X}^k - \matrx{X}^{k+1}}^2 
    +\frac{L^2}{2n}\norm{\matrx{X}^{\tau(k,i_k)+1}-\matrx{X}^{\tau(k,i_k)}}^2 
    \end{split}
\end{equation}
\end{lemma}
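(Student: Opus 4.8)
The plan is to write $M_\beta^k - M_\beta^{k+1}$ as the decrease in the augmented Lagrangian, $L_\beta^k - L_\beta^{k+1}$, plus the change in the quadratic memory term of $M_\beta^k$, and then to bound each piece with the lemmas already established, finally absorbing the leftover terms through the standing condition $\beta > 2L^2 + L + 2$.

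First I would split one iteration into its $\vect{y}$-step and its $(\matrx{X},\matrx{Z})$-step by inserting the intermediate iterate $(\vect{y}^{k+1},\matrx{X}^k;\matrx{Z}^k)$, so that
\[
L_\beta^k - L_\beta^{k+1} = \big[L_\beta^k - L_\beta(\vect{y}^{k+1},\matrx{X}^k;\matrx{Z}^k)\big] + \big[L_\beta(\vect{y}^{k+1},\matrx{X}^k;\matrx{Z}^k) - L_\beta^{k+1}\big].
\]
Lemma \ref{lemma - descent of augmented lagrangian} (whose proof in fact yields the sharper bound $\tfrac{\beta}{2}\norm{\vect{y}^k-\vect{y}^{k+1}}^2$) controls the first bracket, while Lemma \ref{lemma - lower bound for augmented lagrangian update} controls the second in expectation, giving $\tfrac{\beta-L}{2n}\norm{\matrx{X}^k-\matrx{X}^{k+1}}^2 - \tfrac{L^2}{n\beta}\norm{\matrx{X}^{\tau(k,i_k)+1}-\matrx{X}^{\tau(k,i_k)}}^2$. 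Summing the two produces a lower bound on $L_\beta^k - L_\beta^{k+1}$ that contains all three target terms, but with an $\matrx{X}$-coefficient that is still too small and a spurious negative multiple of $\norm{\matrx{X}^{\tau(k,i_k)+1}-\matrx{X}^{\tau(k,i_k)}}^2$.

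The crux is the memory term $R^k := \tfrac{L^2}{n}\sum_{i=1}^n \norm{\vect{x}_i^{\tau(k,i)+1}-\vect{x}_i^{\tau(k,i)}}^2$ of $M_\beta^k$, which records each client's most recent primal increment. Since only client $i_k$ is activated at iteration $k$, the last-visit index is unchanged for every $i\neq i_k$, i.e. $\tau(k+1,i)=\tau(k,i)$, so those summands cancel in $R^k - R^{k+1}$ and only the $i_k$ term survives. For that term I would use $\tau(k+1,i_k)=k$ together with the fact that $\matrx{X}^{k+1}$ and $\matrx{X}^k$ differ only in their $i_k$-block, so $\norm{\vect{x}_{i_k}^{k+1}-\vect{x}_{i_k}^{k}} = \norm{\matrx{X}^{k+1}-\matrx{X}^k}$. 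This yields the telescoping identity
\[
R^k - R^{k+1} = \frac{L^2}{n}\norm{\matrx{X}^{\tau(k,i_k)+1}-\matrx{X}^{\tau(k,i_k)}}^2 - \frac{L^2}{n}\norm{\matrx{X}^{k+1}-\matrx{X}^k}^2.
\]

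Finally I would add the two contributions and collect coefficients. The coefficient of $\norm{\matrx{X}^k-\matrx{X}^{k+1}}^2$ becomes $\tfrac{\beta-L}{2n}-\tfrac{L^2}{n}=\tfrac{\beta-L-2L^2}{2n}$, which exceeds $\tfrac1n$ exactly when $\beta>2L^2+L+2$; the coefficient of $\norm{\matrx{X}^{\tau(k,i_k)+1}-\matrx{X}^{\tau(k,i_k)}}^2$ becomes $\tfrac{L^2}{n}\big(1-\tfrac1\beta\big)$, which is at least $\tfrac{L^2}{2n}$ as soon as $\beta\geq 2$. Both thresholds are implied by $\beta>2L^2+L+2$, so the three target terms emerge with precisely the stated coefficients. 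I expect the bookkeeping of $R^k-R^{k+1}$ — verifying $\tau(k+1,i_k)=k$, the cancellation of the inactive blocks, and the reduction of the single active block to the full primal difference — to be the main obstacle, since the two descent lemmas are already available and what remains is the coefficient arithmetic.
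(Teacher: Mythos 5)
Your proposal is correct and follows essentially the same route as the paper's proof: the same decomposition of $M_\beta^k - M_\beta^{k+1}$ into the Lagrangian descent (split across the $\vect{y}$-step and the $(\matrx{X},\matrx{Z})$-step via Lemmas \ref{lemma - descent of augmented lagrangian} and \ref{lemma - lower bound for augmented lagrangian update}) plus the telescoping identity for the memory term, followed by the same coefficient arithmetic yielding $\frac{\beta-L-2L^2}{2n}$ and $\frac{L^2(\beta-1)}{n\beta}$. You are in fact slightly more careful than the paper, which substitutes the stated coefficient-$1$ bound of Lemma \ref{lemma - descent of augmented lagrangian} yet claims the $\frac{\beta}{2}$ coefficient on $\norm{\vect{y}^k-\vect{y}^{k+1}}^2$; you correctly observe that the sharper $\frac{\beta}{2}$ bound established inside that lemma's proof (where $\vect{d}^k=0$) is what the final statement actually requires.
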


\begin{proof}
    \label{proof of lemma - sufficient descent in Lyapunov functions}

Using Eq. \eqref{lyapEquations}, we can attain
\begin{equation}
    \label{proof of lemma - sufficient descent in Lyapunov functions - eq1}
\begin{aligned}
&~~~~M_\beta^k - M_\beta^{k+1}\\
&= L_\beta^k - L_\beta^{k+1} +\frac{L^2}{n}\big( \norm{\vect{x}_{i}^{\tau(k,i_k)+1}-\vect{x}_{i}^{\tau(k,i_k)}}^2 
-\norm{\vect{x}_{i}^{\tau(k+1,i_k)+1}-\vect{x}_{i}^{\tau(k+1,i_k)}}^2 \big) \\
&\overset{(a)}{=} L_\beta^k - L_\beta^{k+1} +\frac{L^2}{n}\big( \norm{\vect{x}_{i_k}^{\tau(k,i_k)+1}-\vect{x}_{i_k}^{\tau(k,i_k)}}^2 - \norm{\vect{x}_{i_k}^{k+1}-\vect{x}_{i_k}^k}^2 \big) \\
&= L_\beta^k - L_\beta^{k+1} +\frac{L^2}{n}\big( \norm{\matrx{X}^{\tau(k,i_k)+1}-\matrx{X}^{\tau(k,i_k)}}^2 - \norm{\matrx{X}^{k+1} - \matrx{X}^k}^2 \big) 
\end{aligned}
\end{equation}

where inequality (a) is due to the following property,

\begin{equation}
    \label{proof of lemma - sufficient descent in Lyapunov functions - eq2}
    x_i^{\tau(k+1,i)+1} -x_i^{\tau(k+1,i)} =
    \begin{cases}
        x_i^{k+1} -x_i^k, \;\; i = i_k \\
        x_i^{\tau(k,i)+1} -x_i^{\tau(k,i)}, \; otherwise
    \end{cases} 
\end{equation}

Substituting Eq. \eqref{lemma - descent of augmented lagrangian - eq1}, one can obtain
\begin{equation}
    \label{proof of lemma - sufficient descent in Lyapunov functions - eq3}
\begin{aligned}
&\scalemath{0.9}{
M_\beta^k - M_\beta^{k+1} \geq L_\beta\left(\vect{y}^{k+1},\matrx{X}^k;\matrx{Z}^k\right) -L_\beta^{k+1} +\norm{\vect{y}^k -\vect{y}^{k+1}}^2  
+\frac{L^2}{n}\big( \norm{\matrx{X}^{\tau(k,i_k)+1}-\matrx{X}^{\tau(k,i_k)}}^2 - \norm{\matrx{X}^{k+1} - \matrx{X}^k}^2 \big) }
\end{aligned}
\end{equation}

One can also substitute Eq. \eqref{lemma - lower bound for augmented lagrangian update - eq1} into Eq. \eqref{proof of lemma - sufficient descent in Lyapunov functions - eq3}, which leads to
\begin{equation}
    \label{proof of lemma - sufficient descent in Lyapunov functions - eq4}
\begin{aligned}
&M_\beta^k - M_\beta^{k+1} \\
&\geq\frac{\beta - L}{2n}\norm{\matrx{X}^k - \matrx{X}^{k+1}}^2 -\frac{L^2}{n\beta}\norm{\matrx{X}^{\tau(k,i_k)+1} -\matrx{X}^{\tau(k,i_k)}}^2 \\
&+\norm{\vect{y}^k -\vect{y}^{k+1}}^2+\frac{L^2}{n}\big( \norm{\matrx{X}^{\tau(k,i_k)+1}-\matrx{X}^{\tau(k,i_k)}}^2 - \norm{\matrx{X}^{k+1} - \matrx{X}^k}^2 \big) \\
&=\frac{\beta - L -2L^2}{2n}\norm{\matrx{X}^k - \matrx{X}^{k+1}}^2 +\frac{L^2(\beta-1)}{n\beta}\norm{\matrx{X}^{\tau(k,i_k)+1} -\matrx{X}^{\tau(k,i_k)}}^2 
+\norm{\vect{y}^k -\vect{y}^{k+1}}^2 
\end{aligned}
\end{equation}

Using $\frac{\beta}{2} - \frac{L}{2} - L^2 \geq 1$, $1 -\frac{1}{\beta}>\frac{1}{2}$, and $\beta < 2$, we complete the proof of Eq. \eqref{lemma - sufficient descent in Lyapunov functions - eq1}.
\end{proof}


Lemma \ref{lemma - lower bound for lyapunov function} states that the Lyapunov function $M_\beta^k$ is lower bounded.

\begin{lemma}
    \label{lemma - lower bound for lyapunov function}
For $\beta>2L^2+L+2$, RWSADMM ensures a lower bounded sequence $(M_\beta^k)_{k\geq0}$ in expectation.
\end{lemma}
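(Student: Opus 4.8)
The plan is to reduce the claim to a lower bound on $L_\beta^k$ alone, then to bound $L_\beta^k$ by rewriting the augmented Lagrangian in its completed‑square form and taming the single negative contribution, the $-\norm{\matrx Z^k}^2/(2\beta)$ term, through the dual–gradient identity and the descent inequality for smooth functions. First I would note that, by the definition in \eqref{lyapEquations}, $M_\beta^k = L_\beta^k + \frac{L^2}{n}\sum_{i}\norm{\vect y_i^{\tau(k,i)+1}-\vect y_i^{\tau(k,i)}}^2 \ge L_\beta^k$, since the added term is a sum of squared norms. Hence it suffices to bound $\mathbb E[L_\beta^k]$ from below.

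Next I would use the reformulation \eqref{proof of lemma - descent of augmented lagrangian - eq1} and discard the nonnegative middle square:
\[
L_\beta^k = \frac{1}{n}\Big(F(\matrx X^k) + \frac{\beta}{2}\norm{\abs{\mathbf 1\otimes \vect y^k - \matrx X^k} - \vect\epsilon^k + \frac{\matrx Z^k}{\beta}}^2 - \frac{\norm{\matrx Z^k}^2}{2\beta}\Big) \ge \frac{1}{n}\Big(F(\matrx X^k) - \frac{\norm{\matrx Z^k}^2}{2\beta}\Big).
\]
The crux is then to control $\mathbb E\norm{\matrx Z^k}^2$ by the objective values. Here I would invoke the dual–gradient identity from the proof of Lemma \ref{update dual bound update primal}, namely \eqref{update dual bound update primal lemma-eq3}, which (using $\vect z_i^k = \vect z_i^{\tau(k,i)+1}$ and $\mathbb E_\xi[g_i]=\grad f_i$) gives $\mathbb E_\xi[\vect z_i^{k}] = \grad f_i(\vect x_i^{\tau(k,i)})$ for each $i$, together with the standard smooth‑function inequality $\norm{\grad f_i(\vect u)}^2 \le 2L\,(f_i(\vect u) - \underline{f_i})$, which holds for $L$‑smooth $f_i$ bounded below by $\underline{f_i}$ (Assumptions \ref{f coercive and bounded below} and \ref{f is lsmooth and l-lipschitz}); the stochastic second moment is reconciled with the squared expectation through a bounded fluctuation term (controlled, e.g., by the bounded‑variance Assumption \ref{noise variance}). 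Combining these yields $\mathbb E\norm{\matrx Z^k}^2 \le \sum_i 2L\,\mathbb E[f_i(\vect x_i^k) - \underline{f_i}] + c_0$ for a constant $c_0$.

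Substituting back, I obtain $\mathbb E[M_\beta^k] \ge \frac{1}{n}\big((1-\tfrac{L}{\beta})\,\mathbb E[F(\matrx X^k)] + \tfrac{L}{\beta}\sum_i \underline{f_i} - \tfrac{c_0}{2\beta}\big)$. Since the hypothesis $\beta>2L^2+L+2$ forces $\beta>L$, the coefficient $1-\tfrac{L}{\beta}$ is strictly positive, and because $F$ is bounded below by Assumption \ref{f coercive and bounded below}, the right‑hand side is a finite constant independent of $k$. This establishes the uniform lower bound on $\mathbb E[M_\beta^k]$.

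I expect the main obstacle to be the bookkeeping around the last‑visit index $\tau(k,i)$: the stored dual $\vect z_i^k$ and primal $\vect x_i^k$ carry the timestamp $\tau(k,i)+1$ rather than $k$, so the identity evaluates the gradient at $\vect x_i^{\tau(k,i)}$ while the objective term is $f_i(\vect x_i^k)=f_i(\vect x_i^{\tau(k,i)+1})$. Reconciling these two points — their discrepancy is a single active step, whose displacement is already controlled by $L$‑smoothness and by the per‑step bound in Lemma \ref{lemma - sufficient descent in Lyapunov functions} — and cleanly absorbing the stochastic‑gradient fluctuation are where the genuine effort lies; the remaining manipulations are routine algebraic substitution.
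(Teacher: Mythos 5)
Your overall route --- lower-bounding $M_\beta^k$ by $L_\beta^k$, rewriting $L_\beta^k$ in the completed-square form \eqref{proof of lemma - descent of augmented lagrangian - eq1}, discarding the square, and controlling $\norm{\matrx{Z}^k}^2$ through the dual--gradient identity --- is genuinely different from the paper's, but it has a gap that the paper's argument is specifically structured to avoid. Both of your key inequalities require each \emph{individual} $f_i$ to be bounded below: the smoothness bound $\norm{\grad f_i(\vect{u})}^2 \le 2L\,(f_i(\vect{u})-\underline{f_i})$ is vacuous when $\inf f_i=-\infty$, and your concluding step ``because $F$ is bounded below by Assumption \ref{f coercive and bounded below}'' is not licensed by that assumption: it only states that the \emph{average} $\frac{1}{n}\sum_i f_i$ is bounded below and coercive when all $f_i$ are evaluated at a \emph{common} point, whereas $F(\matrx{X}^k)=\sum_i f_i(\vect{x}_i^k)$ evaluates each $f_i$ at its own local iterate. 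These are not equivalent: take $f_1(\vect{x})=2\norm{\vect{x}}^2$ and $f_2(\vect{x})=-\norm{\vect{x}}^2$; both are $L$-smooth, their average $\frac{1}{2}\norm{\vect{x}}^2$ is coercive and bounded below, yet $f_2$ has no lower bound and $F(\vect{x}_1,\vect{x}_2)=2\norm{\vect{x}_1}^2-\norm{\vect{x}_2}^2$ is unbounded below. So once the quadratic penalty term has been thrown away, no constant lower bound on $\mathbb{E}[L_\beta^k]$ can be extracted from the paper's stated assumptions.

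The paper's proof keeps exactly the term you discard and uses it to repair this. It first applies the descent inequality \eqref{l-lipschitz-2} to shift every evaluation $f_j(\vect{x}_j^k)$ to the common point $\vect{y}^k$, at the cost of $-\frac{L}{2}\norm{\vect{y}^k-\vect{x}_j^k}^2$ per client plus cross terms handled by Young's inequality; these costs are absorbed by the retained penalty $\frac{\beta}{2n}\norm{\mathbf{1}\otimes\vect{y}^k-\matrx{X}^k}^2$ precisely because $\beta>2L^2+L+2$. What survives is $\min_{\vect{y}}\big\{\frac{1}{n}\sum_j f_j(\vect{y})\big\}$ plus nonnegative terms, and that quantity is finite by the aggregate Assumption \ref{f coercive and bounded below}. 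Your argument can be rescued either by following this route, or by explicitly strengthening the assumption to ``each $f_i$ is bounded below,'' but as written the final step fails. A secondary point: your bookkeeping for $\mathbb{E}\norm{\matrx{Z}^k}^2$ implicitly treats $\vect{z}_i^k$ as equal pathwise to the stochastic gradient at the last visit, while the paper's identity \eqref{update dual bound update primal lemma-eq3} is stated only in expectation over $\xi$; that identification (or a bound on the variance of $\vect{z}_i^k$ itself) needs justification before Assumption \ref{noise variance} can be invoked to supply your constant $c_0$.
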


\begin{proof}
     \label{proof of lemma - lower bound for lyapunov function}
For $k>T$, we have

\begin{equation}
    \label{proof of lemma - lower bound for lyapunov function - eq1}
\begin{aligned}
    &\mathbb{E}[M_\beta^k]\\
    &=\mathbb{E} [L_\beta^k] +\frac{L^2}{n} \sum^n_{i=1}\mathbb{E}\norm{\matrx{x}_i^{\tau(k,i_k)+1}-\matrx{x}_i^{\tau(k,i_k)}}^2\\
    &=\frac{1}{n}\sum^n_{j=1} \mathbb{E} \bigg(f_j(\vect{x}_j^k) +\inproduct{\vect{z}_j^k, \abs{\vect{y}^k -\vect{x}_j^k}-\vect \varepsilon_j^k}\bigg)
    +\frac{\beta}{2n}\sum^n_{j=1} \mathbb{E} \norm{ \vect{y}_j^k - \matrx{X}_j^k}^2  \\
    &+\frac{L^2}{n} \sum^n_{i=1}\mathbb{E} \norm{\matrx{x}_i^{\tau(k,i_k)+1}-\matrx{x}_i^{\tau(k,i_k)}}^2\\
    &\scalemath{0.9}{\overset{(a)}{=}\frac{1}{n}\sum^n_{j=1} \mathbb{E} \bigg(f_j(\vect{x}_j^k) +\inproduct{\mathbb{E}_\xi(g_j(\vect{x}_j^{\tau(k,j)},\xi)), \vect{y}^k -\vect{x}_j^k}\bigg)
    +\frac{\beta}{2n}\sum^n_{j=1} \mathbb{E} \norm{ \vect{y}_j^k - \matrx{X}_j^k}^2  
    +\frac{L^2}{n} \sum^n_{i=1}\mathbb{E} \norm{\matrx{x}_i^k-\matrx{x}_i^{\tau(k,i_k)}}^2 } \\
    &\overset{(b)}{\geq}\frac{1}{n}\sum^n_{j=1} \mathbb{E} \bigg(f_j(\vect{y^k}) +\inproduct{\mathbb{E}_\xi(g_j(\vect{x}_j^{\tau(k,j)},\xi)) - \mathbb{E}_\xi(g_j(\vect{x}_i^k,\xi)), \vect{y}^k -\vect{x}_j^k}\bigg)\\
    &+\frac{\beta -L}{2n}\sum^n_{j=1} \mathbb{E} \norm{ \vect{y}_j^k - \matrx{X}_j^k}^2
    +\frac{L^2}{n} \sum^n_{i=1}\mathbb{E} \norm{\matrx{x}_i^k-\matrx{x}_i^{\tau(k,i_k)}}^2 \\
    &\overset{(c)}{\geq}\frac{1}{n}\sum^n_{j=1} \mathbb{E} \bigg(f_j(\vect{y^k}) +\norm{\mathbb{E}_\xi(g_j(\vect{x}_i^{\tau(k,j)},\xi)) - \mathbb{E}_\xi(g_j(\vect{x}_i^k,\xi)}^2\bigg)\\
    &+\frac{\beta -L -2}{2n} \mathbb{E} \norm{\mathbf{1}\otimes \vect{y}^k - \matrx{X}^k}^2
    +\frac{L^2}{n} \sum^n_{i=1}\mathbb{E} \norm{\matrx{x}_i^k-\matrx{x}_i^{\tau(k,i_k)}}^2 \\
    &\overset{(d)}{\geq}\min_{\vect y}\bigg\{\frac{1}{n}\sum^n_{j=1}\mathbb{E}  f_j(\vect{y}^k)\bigg\}
    +\frac{2L^2}{2n} \mathbb{E} \norm{\mathbf{1}\otimes \vect{y}^k - \matrx{X}^k}^2  \\
    &\overset{(e)}{\geq} -\infty 
\end{aligned}
\end{equation}

where (a) holds due to Eq. \eqref{update dual bound update primal lemma-eq3}. (b) holds because $f_j$ is Lipschitz differentiable (Eq. \eqref{l-lipschitz for expectation of estimation - 2}). (c) holds due to Young's inequality. (d) follows from the assumption $\beta>2L^2+L+2$ and the Lipschitz smoothness of each $f_j$, and (e) holds due to Assumption \ref{f coercive and bounded below}. Therefore, $M_\beta^k$ is bounded from below in expectation. \end{proof}


Now we can prove Lemma \ref{convergence lemma} with the above lemmas.

\begin{proof}

Recall that the maximum hitting time $T$ is almost surely finite. For Statement 1, the monotonicity of Lyapunov function $(M_\beta^k)_{k>T}$ in Lemma \ref{lemma - sufficient descent in Lyapunov functions} and their lower boundedness in Lemma \ref{lemma - lower bound for lyapunov function} ensure convergence of $(M_\beta^k)_{k\geq0}$.
For Statement 2, consider Statement 1 and the lower boundedness of $(M_\beta^k)_{k>T}$ in Lemma \ref{lemma - lower bound for lyapunov function} (Eq. \eqref{proof of lemma - lower bound for lyapunov function - eq1}). $\frac{1}{n}F(\vect{y}^k)$ is upper bounded by $\scalemath{0.8}{\max\{ \max_{t\in\{0,\ldots,T\}}\{\frac{1}{n}F(\vect{y}^t)\},M_\beta^{T+1} \}}$;
and $\scalemath{0.8}{\norm{\mathbf{1}\otimes \vect{y}_j^k - \matrx{X}_j^k}^2}$ is upper bounded by $\scalemath{0.8}{\max\{ \max_{\{t\in{0,\ldots,T}\}}\{\norm{\mathbf{1}\otimes \vect{y}_j^t - \matrx{X}_j^t}^2\},L_\beta^{T+1} \}}$. By Assumption \ref{f coercive and bounded below}, the sequence $\{\vect{y}|k=\{0,1,\ldots,\}\}$ is bounded. The boundedness of $\scalemath{0.8}{\norm{\mathbf{1}\otimes \vect{y}_j^k - \matrx{X}_j^k}^2}$ further leads to that of $\scalemath{0.8}{\{\matrx{X}_j^k|k=\{0,1,\ldots,\}\}}$. Finally, Eq. \eqref{update dual bound update primal lemma-eq3} and Assumption \ref{f is lsmooth and l-lipschitz} ensure $(\matrx{Z}^k)$ is bounded as well. Altogether, $(\vect{y}^k, \matrx{X}^k, \matrx{Z}^k)$ is bounded.
\end{proof}


Based on Lemma \ref{convergence lemma}, the convergence of the subgradients of $L_\beta^k$ can be established as follows.
\vspace{+2pt}

\begin{lemma}
    \label{convergence lemma2}
With Assumption \ref{noise variance} and $\beta$ given in Lemma \ref{convergence lemma}, for any given subsequence (including the whole sequence) with its index $(k_s)_{s\geq0}$, there exists a sequence $(g^k)_{k\geq0}$ with $(g^k)\in \partial L_\beta^{k+1}$ containing an almost surely convergent subsequence $(g^{k_{s_j}})_{j\geq0}$, that is,

\begin{equation*}
   Pr \left( \underset{j\rightarrow\infty}{\lim} \norm{g^{k_{s_j}} = 0} \right) = 1 
\end{equation*}
\end{lemma}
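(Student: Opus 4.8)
The plan is to combine a telescoping of the sufficient-descent estimate with a Borel--Cantelli extraction. First I would sum the inequality of Lemma \ref{lemma - sufficient descent in Lyapunov functions} over $k=k_0,\dots,K$ for a fixed $k_0>T$; since the right-hand side is a sum of nonnegative increment terms and the left-hand side telescopes to $\mathbb{E}[M_\beta^{k_0}]-\mathbb{E}[M_\beta^{K+1}]$, the lower boundedness of $M_\beta^k$ in expectation (Lemma \ref{lemma - lower bound for lyapunov function}) bounds the partial sums uniformly in $K$. Letting $K\to\infty$ then shows that the three series $\sum_k\mathbb{E}\norm{\vect{y}^k-\vect{y}^{k+1}}^2$, $\sum_k\mathbb{E}\norm{\matrx{X}^k-\matrx{X}^{k+1}}^2$, and $\sum_k\mathbb{E}\norm{\matrx{X}^{\tau(k,i_k)+1}-\matrx{X}^{\tau(k,i_k)}}^2$ all converge, so each individual expected increment tends to $0$.

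Next I would write down an explicit element $g^k\in\partial L_\beta^{k+1}$ and bound $\mathbb{E}\norm{g^k}^2$ by these increments. The block of $\partial L_\beta^{k+1}$ in $\vect{y}$ vanishes identically because of the optimality of the $\vect{y}$-update (the vector $\vect{d}^k=0$ that appears in the proof of Lemma \ref{lemma - descent of augmented lagrangian}); the block in $\matrx{Z}$ is the constraint residual, which is a multiple of the dual increment $\vect{z}_{i_k}^{k+1}-\vect{z}_{i_k}^{k}$ and hence bounded by a primal increment through Lemma \ref{update dual bound update primal}; and the block in $\matrx{X}$ is $\grad f_{i_k}(\vect{x}_{i_k}^{k+1})$ minus the (stochastic) quantity actually annihilated in the $\matrx{X}$-update, eq \eqref{update dual bound update primal lemma-eq2}. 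Replacing the stochastic gradient by the dual variable via the identity $\mathbb{E}_\xi[g_{i_k}(\vect{x}_{i_k}^k,\xi)]=\mathbb{E}_\xi[\vect{z}_{i_k}^{k+1}]$ of eq \eqref{update dual bound update primal lemma-eq3}, and using unbiasedness (Assumption \ref{expectation of estimation}) together with $L$-smoothness (Assumption \ref{f is lsmooth and l-lipschitz}), the $\matrx{X}$-block decomposes into the Lipschitz term $\grad f_{i_k}(\vect{x}_{i_k}^{k+1})-\grad f_{i_k}(\vect{x}_{i_k}^{k})$ and the dual increment, both already controlled. Collecting the three blocks I expect an estimate of the form
\begin{equation*}
\mathbb{E}\norm{g^k}^2\le C\left(\mathbb{E}\norm{\vect{y}^k-\vect{y}^{k+1}}^2+\mathbb{E}\norm{\matrx{X}^k-\matrx{X}^{k+1}}^2+\mathbb{E}\norm{\matrx{X}^{\tau(k,i_k)+1}-\matrx{X}^{\tau(k,i_k)}}^2\right),
\end{equation*}
so that $\mathbb{E}\norm{g^k}^2\to0$.

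Finally, for an arbitrary index subsequence $(k_s)_{s\ge0}$ I would extract a sparse subsequence $(k_{s_j})_{j\ge0}$ with $\mathbb{E}\norm{g^{k_{s_j}}}^2\le 2^{-j}$, which is possible precisely because $\mathbb{E}\norm{g^{k_s}}^2\to0$ along the subsequence. Markov's inequality gives $\sum_j Pr(\norm{g^{k_{s_j}}}>\epsilon)\le \epsilon^{-2}\sum_j\mathbb{E}\norm{g^{k_{s_j}}}^2<\infty$ for each $\epsilon>0$, so the Borel--Cantelli lemma \cite{chung1952application} forces the event $\norm{g^{k_{s_j}}}>\epsilon$ to happen only finitely often almost surely; intersecting over a countable sequence $\epsilon\downarrow0$ yields $\norm{g^{k_{s_j}}}\to0$ almost surely, which is exactly the assertion.

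The step I expect to be the main obstacle is the bound on $\mathbb{E}\norm{g^k}^2$ in the second paragraph. The difficulty is that $g^k$ is an element of the subdifferential of the \emph{deterministic} augmented Lagrangian evaluated at the iterate, whereas the updates were produced by stochastic and delayed (random-walk) information, so the naive $\matrx{X}$-block still carries the stochastic error $\grad f_{i_k}(\vect{x}_{i_k}^k)-g_{i_k}(\vect{x}_{i_k}^k,\xi)$, whose variance does not by itself vanish. Removing this variance floor requires the unbiasedness identity eq \eqref{update dual bound update primal lemma-eq3}, the bounded-variance condition (Assumption \ref{noise variance}), and the diminishing dual coefficient $\kappa$ of eq \eqref{z update eq}, and the delayed visit indices $\tau(k,i_k)$ must be matched to the correct increments through Lemma \ref{update dual bound update primal} and the telescoping structure built into $M_\beta^k$.
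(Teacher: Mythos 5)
Your overall architecture (summability of the expected squared increments from Lemmas \ref{lemma - sufficient descent in Lyapunov functions} and \ref{lemma - lower bound for lyapunov function}, an explicit subgradient element bounded by those increments, then Markov's inequality plus Borel--Cantelli) matches the paper's skeleton, and your first and third paragraphs are essentially the paper's own argument. But your second paragraph --- which you yourself flag as the crux --- has a genuine gap, and it is not the one you anticipate. The subgradient $g^k \in \partial L_\beta^{k+1}$ has blocks for \emph{all} $n$ agents: by Eqs.~\eqref{proof of convergence lemma2 - eq4}--\eqref{proof of convergence lemma2 - eq6}, $g^k$ stacks $\grad_{\vect{x}_j} L_\beta^{k+1}$ and $\grad_{\vect{z}_j} L_\beta^{k+1}$ over every $j \in \{1,\dots,n\}$, whereas at iteration $k$ only agent $i_k$ updates its variables. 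The iteration-$k$ increments therefore control only the active block $q_{i_k}^k$ (and, after widening the window by the mixing time, the delayed active block $q_{i_k}^{k-\tau(\delta)-1}$, Eq.~\eqref{eq:bound_of_q}); for an inactive agent $j \neq i_k$ the block $\grad_{\vect{x}_j} L_\beta^{k+1}$ is stale --- unchanged since the last visit $\tau(k,j)$ --- and nothing in your estimate touches it. Consequently your central claimed inequality $\mathbb{E}\norm{g^k}^2 \le C\left(\mathbb{E}\norm{\vect{y}^k-\vect{y}^{k+1}}^2 + \mathbb{E}\norm{\matrx{X}^k-\matrx{X}^{k+1}}^2 + \mathbb{E}\norm{\matrx{X}^{\tau(k,i_k)+1}-\matrx{X}^{\tau(k,i_k)}}^2\right)$ is false in general: the right-hand side can vanish while an agent that has not been visited recently keeps $\norm{g^k}$ bounded away from zero.

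The missing idea is the paper's Step 2: use the ergodicity of the random walk (Assumption \ref{irreducible aperiodic markov chain}) to convert control of the \emph{active} block into control of the \emph{full} gradient. Conditioning on the filtration $\chi^{k-\tau(\delta)}$, the mixing-time bound Eq.~\eqref{mixing time ineq2} guarantees that each agent $j$ is the active agent at time $k$ with conditional probability at least $(1-\delta)\pi_*$, whence $\mathbb{E}\big(\norm{q_{i_k}^{k-\tau(\delta)-1}}^2 \,\big|\, \chi^{k-\tau(\delta)}\big) \geq (1-\delta)\pi_*\norm{g^{k-\tau(\delta)-1}}^2$ (Eq.~\eqref{eq:bound_of_q_no_E}). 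Only after this averaging step does $\mathbb{E}\norm{q_{i_k}^{k-\tau(\delta)-1}}^2 \to 0$ imply $\mathbb{E}\norm{g^k}^2 \to 0$, which is what feeds your (otherwise correct) Borel--Cantelli extraction; without it the lemma is not proved. Two smaller inaccuracies: the $\vect{y}$-block of $\partial L_\beta^{k+1}$ does not vanish --- the optimality condition $\vect{d}^k = 0$ holds at $(\vect{y}^{k+1}, \matrx{X}^k, \matrx{Z}^k)$, but $L_\beta^{k+1}$ is evaluated after the subsequent $\matrx{X}$- and $\matrx{Z}$-updates, so this block equals the increment expression $w^k$ of Eq.~\eqref{proof of convergence lemma2 - eq3} (still summable, hence harmless); and the obstacle you predict, a variance floor from the stochastic gradient, is indeed removed via the identity Eq.~\eqref{update dual bound update primal lemma-eq3} as you suggest, but it is secondary to the stale-block issue above.
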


\begin{proof}
    \label{proof of convergence lemma2-sketch}
The proof sketch is summarized as follows.
    \begin{enumerate}
        \item We construct the sequence $g^k\in \partial L_\beta^{k+1}$ and show that its subvector
        $q_i^k := \big(g^k_{\vect{y}^i}, g^k_{\vect{x}^i}, g^k_{\vect{z}^i}\big)$ satisfies
        $\underset{k\rightarrow\infty}{\lim} \mathbb{E}\norm{q_{i_k}^{k-\tau(\delta)-1}}^2 = 0$, where the mixing time $\tau(\delta)$ is defined in Eq. \eqref{mixing time ineq3}.

        \item For $k\geq0$, define the filtration of sigma algebras:

        $\Scale[0.9]{\chi^k =\sigma\left( \vect{y}^0, \ldots, \vect{y}^k, \matrx{X}^0, \ldots, \matrx{X}^k, \matrx{Z}^0, \ldots, \matrx{Z}^k, i_0, \ldots, i_k \right)}$.
        We show that

        $ \Scale[0.95]{\mathbb{E} \bigg( \norm{q_{i_k}^{k-\tau(\delta)-1}}^2 \bigg| \chi^{k-\tau(\delta)} \bigg) \geq (1-\delta)\pi_* \norm{g^{k-\tau(\delta)-1}}^2 }$,

        where $\pi_*$ is the minimal value in the Markov chain's stationary distribution. From this bound and the result in Step 1, we can get $\underset{k\rightarrow\infty}{\lim}\norm{g^k} = 0$.

        \item From the result in Step 2, we use the Borel-Cantelli lemma \cite{chung1952application} to obtain an almost unquestionably convergent subsubsequence of $g^k$.

    \end{enumerate}

\end{proof}


\vspace{-12pt}
The details of these steps are given as follows. 

\begin{proof}\label{proof of convergence lemma2-detailed}

First, recall Lemma \ref{lemma - sufficient descent in Lyapunov functions} and $T<\infty$, we have

\begin{equation}\label{proof of convergence lemma2 - eq1}
\sum^\infty_{k=0} \bigg( \mathbb{E}\norm{\vect{y}^k - \vect{y}^{k+1}}^2 +\mathbb{E}\norm{\matrx{X}^k - \matrx{X}^{k+1}}^2
+\mathbb{E}\norm{\matrx{X}^{\tau(k,i_k)} - \matrx{X}^{\tau(k,i_k)+1}}^2_2\bigg) < +\infty 
\end{equation}

Hence, by Lemma \ref{lemma - sufficient descent in Lyapunov functions}, one can infer
\begin{equation}\label{proof of convergence lemma2 - eq2}
\sum^\infty_{k=0} \bigg( \mathbb{E}\norm{\vect{y}^k - \vect{y}^{k+1}}^2 +\mathbb{E}\norm{\matrx{X}^k - \matrx{X}^{k+1}}^2
+\mathbb{E}\norm{\matrx{Z}^k - \matrx{Z}^{k+1}}^2_2\bigg) < +\infty 
\end{equation}

\textbf{Step 1:} The proof starts with computing the subgradients of the augmented Lagrangian Eq. \eqref{proof of lemma - descent of augmented lagrangian - eq1} with the updates in RWSADMM,

\begin{equation}
    \label{proof of convergence lemma2 - eq3}
\begin{aligned}
    \pdv{L^{k+1}_\beta}{\vect{y}} \ni -\frac{\beta}{n} (\vect{x}_{i_k}^{k+1} - \vect{x}_{i_k}^k) +\frac{1}{n} (\vect{z}_{i_k}^{k+1} - \vect{z}_{i_k}^k) =: w^k 
\end{aligned}
\end{equation}

\begin{equation}
    \label{proof of convergence lemma2 - eq4}
\begin{aligned}
    \grad_{\vect{x}_j} L^{k+1}_\beta = \frac{1}{n}\bigg( \grad f_j(\vect{x}_{j}^{k+1})
    -\vect{z}_j^{k+1} +\beta(\vect{x}_j^{k+1} - \vect{y}^{k+1})\bigg) 
\end{aligned}
\end{equation}

\begin{equation}
    \label{proof of convergence lemma2 - eq5}
\begin{aligned}
    \scalemath{0.8}{\grad_{\vect{z}_j} L^{k+1}_\beta = \frac{1}{n}(\vect{y}^{k+1} - \vect{x}_j^{k+1})}
\end{aligned}
\end{equation}

We define $g^k$ and $q_i^k$ as

\begin{equation}
    \label{proof of convergence lemma2 - eq6}
    g^k :=
    \begin{bmatrix}
        w^k\\
        \grad_{\matrx{X}}L^{k+1}_\beta \\
        \grad_{\matrx{Z}}L^{k+1}_\beta\\
    \end{bmatrix}
    , q^k_i :=
    \begin{bmatrix}
        w^k \\
        \grad_{\vect{x}_i}L^{k+1}_\beta \\
        \grad_{\vect{z}_i}L^{k+1}_\beta\\
    \end{bmatrix} 
\end{equation}

where $i\in\matrx{V}$ is the index of the agent and $g^k$ is the gradient of $L^\beta_k$. For $\delta\in(0,1)$ and $k\geq \tau(\delta) + 1$,

\begin{equation}
    \label{proof of convergence lemma2 - eq7}
\begin{aligned}
   \norm{q_{i_k}^{k-\tau(\delta)-1}}^2 = \norm{q_{i_k}^{k-\tau(\delta)-1} -q_{i_k}^k + q_{i_k}^k}^2 
   \overset{(a)}{\leq} 2\underbrace{\norm{q_{i_k}^{k-\tau(\delta)-1} -q_{i_k}^k}^2}_\text{A} + 2\underbrace{\norm{q_{i_k}^k}^2}_\text{B} 
\end{aligned}
\end{equation}

We upper bound A and B separately. A has three parts corresponding to the three parts of $g$. Its first part is

\begin{equation}
    \label{proof of convergence lemma2 - eq8}
\begin{aligned}
   &\norm{w^{k-\tau(\delta)-1} - w^k}^2 \\
   & \leq 2\norm{w^{k-\tau(\delta)-1}}^2 + 2\norm{w^k}^2 \\
   & \scalemath{0.8}{\overset{\eqref{proof of convergence lemma2 - eq3}}{\leq} \frac{4}{n^2} \bigg( \beta^2\norm{\matrx{X}^{k+1} - \matrx{X}^k}^2 +\beta^2\norm{\matrx{X}^{k-\tau(\delta)} - \matrx{X}^{k-\tau(\delta)-1}}^2} \\
   & \scalemath{0.8}{+\norm{\matrx{Z}^{k+1} - \matrx{Z}^k}^2 +\norm{\matrx{Z}^{k-\tau(\delta)} - \matrx{Z}^{k-\tau(\delta)-1}}^2 \bigg)}
\end{aligned}
\end{equation}

Then by Eq. \eqref{proof of convergence lemma2 - eq4}, we bound the 2nd part of $A$

\begin{equation}
    \label{proof of convergence lemma2 - eq9}
\begin{aligned}
   & \norm{\grad_{\vect{x}_{i_k}}L_\beta^{k-\tau(\delta)-1} - \grad_{\vect{x}_{i_k}}L_\beta^{k+1}}^2 \\
   & \overset{(a)}{\leq}\frac{4L^2+4\beta^2}{n^2}\norm{\vect{x}_{i_k}^{k-\tau(\delta)-1} - \vect{x}_{i_k}^{k+1}}^2+\frac{4}{n^2}\norm{\vect{z}_{i_k}^{k-\tau(\delta)-1} - \vect{z}_{i_k}^{k+1}}^2 \\
   & + \frac{4\beta^2}{n^2}\norm{\vect{y}^{k-\tau(\delta)-1} - \vect{y}^{k+1}}^2 \\
   & \leq\frac{D}{n^2}\sum^k_{t=k-\tau(\delta)-1}\big(
   \norm{\vect{y}^t - \vect{y}^{t+1}}^2 + \norm{\matrx{X}^t - \matrx{X}^{t+1}}^2
   +\norm{\matrx{Z}^t - \matrx{Z}^{t+1}}^2 \big)
\end{aligned}
\end{equation}

where $\scalemath{0.9}{D = (\tau(\delta) + 2)(4 + 4\beta^2 + 4L^2)}$, and (a) uses the inequality of arithmetic and geometric means and Lipschitz differentiability of $f_j$ in Assumption \ref{f is lsmooth and l-lipschitz}. From Eq. \eqref{proof of convergence lemma2 - eq5}. The third part of $A$ can be bounded as

\begin{equation}
    \label{proof of convergence lemma2 - eq10}
\begin{aligned}
   & \norm{\grad_{\vect{z}_{i_k}}L_\beta^{k-\tau(\delta)-1} - \grad_{\vect{z}_{i_k}}L_\beta^{k+1}}^2 \\
   & \leq  \frac{2}{n^2}\big( \norm{\vect{y}^{k-\tau(\delta)-1} - \vect{y}^{k+1}}^2 + \norm{\vect{x}_{i_k}^{k-\tau(\delta)-1} - \vect{x}_{i_k}^{k+1}}^2   \big) \\
   & \leq \frac{2(\tau(\delta)+2)}{n^2}\sum^k_{t=k-\tau(\delta)-1} \big( \norm{\vect{y}^t - \vect{y}^{t+1}}^2 +\norm{\matrx{X}^t - \matrx{X}^{t+1}}^2  \big)
\end{aligned}
\end{equation}

Plugging Eq. \eqref{proof of convergence lemma2 - eq8}, Eq. \eqref{proof of convergence lemma2 - eq9}, and Eq. \eqref{proof of convergence lemma2 - eq10} into term $A$, we get a constant $C_1 \sim O(\frac{\tau(\delta)+1}{n^2}) $, depending on $\tau(\delta)$, $\beta$, $L$, and $n$, such that

\begin{equation}
    \label{proof of convergence lemma2 - eq11}
\begin{aligned}
    A \leq C_1\sum^k_{t=k-\tau(\delta)-1} \big( \norm{\vect{y}^t - \matrx{y}^{t+1}}^2 +\norm{\matrx{x}^t - \matrx{x}^{t+1}}^2 +\norm{\matrx{z}^t - \matrx{z}^{t+1}}^2    \big)
\end{aligned}
\end{equation}

To bound the term $B$, using Eq. \eqref{proof of convergence lemma2 - eq5} and $\matrx{Z}$-update (Eq. \eqref{z update eq}), we have
\begin{equation}
    \label{proof of convergence lemma2 - eq12}
\begin{aligned}
    \grad_{\vect{z}_{i_k}} L^{k+1}_\beta = \frac{1}{n\beta}(\vect{z}^{k+1}_{i_k} - \vect{z}_{i_k}^k)
\end{aligned}
\end{equation}

Applying Eq. \eqref{proof of convergence lemma2 - eq4} and Eq. \eqref{update dual bound update primal lemma-eq3}, we drive $\grad_{\vect{x}_{i_k}}L_\beta^{k+1}$:
\begin{equation}
    \label{proof of convergence lemma2 - eq13}
\begin{aligned}
    \grad_{\vect{x}_{i_k}} L^{k+1}_\beta = \frac{1}{n\beta} \big( \grad f_{i_k}(\vect{x}_{i_k}^{k+1}) -\grad f_{i_k}(\vect{x}_{i_k}^k) +\vect{z}_{i_k}^k -\vect{z}^{k+1}_{i_k} \big)
\end{aligned}
\end{equation}
So we have
\begin{equation}
    \label{proof of convergence lemma2 - eq14}
\begin{aligned}
   B \leq C_2\big( \norm{\matrx{x}^t - \matrx{x}^{t+1}}^2 +\norm{\matrx{z}^t - \matrx{z}^{t+1}}^2    \big)
\end{aligned}
\end{equation}

for a constant $C_2$ depending on $L$, $\beta$, and $n$, in the order of $O(\frac{1}{n^2})$. Then substituting Eq. \eqref{proof of convergence lemma2 - eq11} and Eq. \eqref{proof of convergence lemma2 - eq14} into Eq. \eqref{proof of convergence lemma2 - eq7} and taking expectations, it yields

\begin{equation}
    \label{eq:bound_of_q}
\begin{aligned}
   &\mathbb{E}\norm{q_{i_k}^{k-\tau(\delta)-1}}^2 \\
   &\leq C\sum^k_{t=k-\tau(\delta)-1} \big( \mathbb{E}\norm{\vect{y}^t - \matrx{y}^{t+1}}^2 +\mathbb{E}\norm{\matrx{x}^t - \matrx{x}^{t+1}}^2 +\mathbb{E}\norm{\matrx{z}^t - \matrx{z}^{t+1}}^2 \big) 
\end{aligned}
\end{equation}

where $C = C_1 + C_2$, and $C \sim O(\frac{\tau(\delta)+1}{n^2})$. Recalling Eq. \eqref{proof of convergence lemma2 - eq2}, we get the convergence

\begin{equation}
\begin{aligned}
   \underset{k\rightarrow \infty}{\lim} \mathbb{E}\norm{q_{i_k}^{k-\tau(\delta)-1}}^2 = 0
\end{aligned}
\end{equation}

which completes the proof of \textbf{Step 1}.

\textbf{Step 2:} We compute the expectation:
\begin{equation}
    \label{eq:bound_of_q_no_E}
\begin{aligned}
    &\mathbb{E} \big( \norm{q_{i_k}^{k-\tau(\delta)-1}}^2 \bigg| \chi^{k-\tau(\delta)} \big)\\
    &= \sum^n_{j=1}[\matrx{P}(k)^{\tau(\delta)}]_{i_{k-\tau(\delta)},j}\big( \norm{\grad_{\vect{y}}L^{k-\tau(\delta)}_\beta}^2   +\norm{\grad_{\vect{x_j}}L^{k-\tau(\delta)}_\beta}^2\\
    &+\norm{\grad_{\vect{z_j}}L^{k-\tau(\delta)}_\beta}^2  \big) \overset{(a)}{\geq} (1-\delta)\pi_* \norm{g^{k-\tau(\delta)-1}}^2
\end{aligned}
\end{equation}

where (a) follows from Eq. \eqref{mixing time ineq2} and the definition of $g_k$ in Eq. \eqref{proof of convergence lemma2 - eq6}. Then, with Eq. \eqref{eq:bound_of_q}, one can derive

\begin{equation}
    \label{eq:g_limit0}
\begin{aligned}
   \underset{k\rightarrow \infty}{\lim} \mathbb{E}\norm{g^k}^2 = \underset{k\rightarrow \infty}{\lim} \mathbb{E}\norm{g^{k-\tau(\delta)-1}}^2 = 0,
\end{aligned}
\end{equation}

By the Schwarz inequality $ (\mathbb{E}\norm{g^k})^2 \leq \mathbb{E}\norm{g^k}^2$, we have

\begin{equation}
    \label{eq:g_limit0_v2}
\begin{aligned}
   \underset{k\rightarrow \infty}{\lim} \mathbb{E}\norm{g^k} = 0,
\end{aligned}
\end{equation}

\textbf{Step 3:} By Markov's inequality, for each $\omega>0$, it holds that

\begin{equation}
\begin{aligned}
   &Pr(\norm{g_k}>\omega) \leq \frac{\mathbb{E}\norm{g^k}}{\omega} \overset{Eq. \eqref{eq:g_limit0}}{\Longrightarrow} \underset{k\rightarrow \infty}{\lim} Pr(\norm{g_k}>\omega) = 0
\end{aligned}
\end{equation}

when a subsequence $(k_s)_{s\geq0}$ is provided, Eq. \eqref{eq:g_limit0_v2} implies,

\begin{equation}
\begin{aligned}
   Pr(\norm{g_{k_s}}>\omega) = 0
\end{aligned}
\end{equation}

Then, for $j\in\mathbb{N}$, select $\omega=2^{-j}$ and we can find a nondecreasing subsubsequence $(k_{s_j})$, such that

\begin{equation}
\begin{aligned}
   Pr(\norm{g_{k_s}}>2^{-j}) \leq 2^{-j}, \;\; \forall k_s\geq k_{s_j}
\end{aligned}
\end{equation}
We have
\begin{equation}
\begin{aligned}
   \sum^\infty_{j=1}Pr(\norm{g_{k_s}}>2^{-j}) \leq \sum^\infty_{j=1}2^{-j} = 1
\end{aligned}
\end{equation}

The Borel-Cantelli lemma yields

\begin{equation}
\begin{aligned}
   Pr\bigg(\underset{j}{\lim\sup}\{\norm{g_{k_s}}>2^{-j}\} \bigg) = 0
\end{aligned}
\end{equation}

and thus

\begin{equation}
\begin{aligned}
   Pr\bigg(\underset{j}{\lim}\norm{g_{k_{s_j}}}=0 \bigg) = 1
\end{aligned}
\end{equation}

This completes \textbf{Step 3} and thus the entire \textbf{Lemma \ref{convergence lemma2}}.

\end{proof}

\paragraph{Proof of Convergence}
Finally, we present the proof of the convergence theorem (Theorem \ref{convergence theorem}) as follows. 

\begin{proof}
    \label{proof of convergence theorem}
By statement 2 of Lemma \ref{convergence lemma}, the sequence $(\vect{y}^k, \matrx{X}^k, \matrx{Z}^k)$ is bounded, so there exists a convergent subsequence $(\vect{y}^{k_s}, \matrx{X}^{k_s}, \matrx{Z}^{k_s})$ converging to a limit point $(\vect{y}^*, \matrx{X}^*, \matrx{Z}^*)$ as $s\rightarrow\infty$. By continuity, we have
\begin{equation}
    \label{proof of convergence theorem eq1}
\begin{aligned}
\scalemath{0.9}{ L_\beta(\vect{y}^*, \matrx{X}^*, \matrx{Z}^*) = \underset{s\rightarrow\infty}{\lim} L_\beta(\vect{y}^{k_s}, \matrx{X}^{k_s}, \matrx{Z}^{k_s})}
\end{aligned}
\end{equation}
Lemma \ref{convergence lemma2} finds a subsubsequence $g^{k_{s_j}}\in\partial L_\beta^{k+1}$ such that $Pr\big(\lim_{j\rightarrow\infty}\norm{g^{k_{s_j}}}=0\big) = 1$. By the definition of general subgradient (\cite{rockafellar2009variational}, def 8.3), we have $0\in\partial L_\beta(\vect{y}^*, \matrx{X}^*, \matrx{Z}^*)$. Hence, Theorem \ref{convergence theorem} is proved. 
\end{proof}


\section{RWSADMM: Convergence Rate Analysis}
\label{appendix: convergence rate}

The detailed proof of the convergence rate theorem is as follows.

\begin{proof}
    \label{proof of convergence rate theorem}
    It can be verified that under specific initialization, Eq. \eqref{update dual bound update primal lemma-eq3} holds for all $k\geq 0$. Consequently, Lemmas \ref{update dual bound update primal}-\ref{lemma - sufficient descent in Lyapunov functions} hold for all $k\geq 0$. For $g^k$ defined in Eq. \eqref{proof of convergence lemma2 - eq6}, Eq. \eqref{eq:bound_of_q} and Eq. \eqref{eq:bound_of_q_no_E} hold. Jointly applying Eq. \eqref{eq:bound_of_q} and Eq. \eqref{eq:bound_of_q_no_E}, for any $k > \tau(\delta)+1$, one has 
    \begin{equation} \label{convergence rate proof eq1}
        \begin{aligned}
            &\mathbb{E}\norm{g^{k-\tau(\delta)-1}}^2 \leq \frac{C}{(1-\delta)\pi_{*}}  \\
            &\sum_{t=k-\tau(\delta)-1}^k \big( \mathbb{E}\norm{\vect y^t-\vect y^{t+1}}^2
            + \mathbb{E}\norm{\vect X^t-\vect X^{t+1}}^2 + \mathbb{E}\norm{\vect Z^t-\vect Z^{t+1}}^2 \big)
        \end{aligned}
    \end{equation}  

According to Lemmas \ref{update dual bound update primal} and \ref{lemma - sufficient descent in Lyapunov functions}, for $k \geq \tau(\delta) + 1$, it holds, 

\begin{equation} \label{convergence rate proof eq2}
    \begin{aligned}
        &\sum_{t=k-\tau(\delta)-1}^k \big( \mathbb{E}\norm{\vect y^t-\vect y^{t+1}}^2 + \mathbb{E}\norm{\vect X^t-\vect X^{t+1}}^2 + \mathbb{E}\norm{\vect Z^t-\vect Z^{t+1}}^2 \big)  \\
        &\leq \text{max} \big\{ \frac{2}{\beta-\gamma}, (1 + L^2)n \big\}
        \big(  \mathbb{E}L_\beta^{k-\tau(\delta)-1} - \mathbb{E}L_\beta^{k+1} \big)
    \end{aligned}
\end{equation}  

It implies that for any $k\geq0$, it holds

\begin{equation} \label{convergence rate proof eq3}
    \begin{aligned}
         \mathbb{E}\norm{g^k}^2 \leq C' \big( \mathbb{E}L^k_\beta - \mathbb{E}L_\beta^{k+\tau(\delta)+2} \big)
    \end{aligned}
\end{equation}  

where $C' := \text{max} \{ \frac{2}{\beta-\gamma}, (1 + L^2)n \}\frac{C}{(1-\delta)\pi_*}$. It can be verified that $C' = O ( \frac{\tau(\delta)+1}{(1-\delta)n\pi_*} )$. Let $\tau' := \tau(\delta) + 2$; for any $K > \tau'$, summing Eq. \eqref{convergence rate proof eq3} over $k \in \{ K-\tau', \ldots, \text{mod}_{\tau'}K \}$ gives

\begin{equation} \label{convergence rate proof eq4}
    \begin{aligned}
        \sum_{l=1}^{\lfloor\frac{K}{\tau'}\rfloor} \mathbb{E}\norm{g^{K-l\tau'}}^2 \leq C' \bigg(
        \mathbb{E}L_\beta^{\text{mod}_{\tau'}K} - 
        \mathbb{E}L_\beta^{K} \bigg)
        \leq C' ( L_\beta^0 - \underset{-}{f} ) 
    \end{aligned}
\end{equation}  

where the last inequality follows from the non-decreasing property of the sequence $(L_\beta^k)_{k\geq0}$ and the fact that $(L_\beta^k)_{k\geq0}$ is lower bounded by $\underset{-}{f}$. According to Eq. \eqref{convergence rate proof eq4},

\begin{equation} \label{convergence rate proof eq5}
    \begin{aligned}
        &\underset{k\leq K}{min}\mathbb{E}\norm{g^k}^2 \leq
        \underset{1\leq l\leq \lfloor\frac{K}{\tau'}\rfloor}{min} \mathbb{E}\norm{g^{K-l\tau'}}^2  \\
        &\leq \frac{1}{\lfloor \frac{K}{\tau'} \rfloor }\sum_{l=1}^{\lfloor\frac{K}{\tau'}\rfloor} \mathbb{E}\norm{g^{K-l\tau'}}^2 \leq \frac{\tau'C'}{K - \tau'} ( L_\beta^0 - \underset{-}{f} )  \\
        &\leq \frac{C'(\tau'+1)}{K} ( L_\beta^0 - \underset{-}{f} )
    \end{aligned}
\end{equation}  

where the constant $C'(\tau'+1) = O\big( \frac{\tau(\delta)^2 + 1}{(1-\delta)n\pi_*} \big)$.
\end{proof}

We consider a reversible Markov chain on an undirected graph. Using the definition of $\tau(\delta)$ in Eq. \eqref{mixing time ineq3} and setting $\delta = 1/2$, one has $\tau(\delta)\sim\frac{\text{ln}n}{1-\lambda_2(P(k))}$. To guarantee $\underset{k\leq K}{min} \mathbb{E}\norm{g^k}^2 \leq \omega$, certain number of iterations is sufficient,

\begin{equation} \label{convergence rate proof eq6}
    \begin{aligned}
      O\bigg(\frac{1}{\omega}.\bigg(\frac{\text{ln}^2 n}{(1-\lambda_2(P(k)))^2}\bigg)+1 \bigg)
    \end{aligned}
\end{equation} 

%


\section{Experiments}\label{appendix: experiments-rest}

\subsection{Models}\label{appendix: models}

One strongly convex model and two non-convex models are utilized in our implementations. An MLR model with a logistic regression classifier is implemented for the strongly convex setting.
For the first non-convex setting, an MLP model with two hidden dense layers of vector image size (resizing the 2D image as 1D vector) and 100 hidden nodes in the hidden layer are implemented. The cross-entropy loss is employed for this network. 
For the second non-convex setting, a CNN model with two convolutional layers with convolution operations of size $5\times 5$ and one fully-connected layer of 512 followed by a Softmax layer is implemented. The cross-entropy loss is utilized in the network, and dropout rates of 25\% and 50\% are applied after convolutional layers.

\subsection{Graph Construction}\label{appendix:graph_construction}
To meet this requirement on Assumption \ref{irreducible aperiodic markov chain}, we propose using a Markov transition matrix $\matrx P$ with a maximum eigenvalue of less than $1 - 1/m^{2/3}$, where $m$ is the number of edges. To fulfill this inequality, we can increase the number of edges in the network. In our experiments, we have addressed this issue by requiring each client to be a neighbor of at least $M$ other clients, which ensures a sufficient number of edges to satisfy the assumption on $\matrx P$. By incorporating this requirement into our implementation, we can guarantee the validity of our results and ensure that our algorithm performs optimally under realistic conditions.

\subsection{Hyperparameter tuning }\label{appendix: paramTuning}

The two hyperparameters of RWSADMM ($\beta$ and $\kappa$) must be fine-tuned to optimize the performance.  
In the first stage of the experiments, we set $\kappa=0.001$ and search for the optimal values of $\beta$. The fine-tuning process is performed for each dataset and each model separately. In Fig. \ref{fig:betatune}, the effect of $\beta$ values on the performance of RWSADMM for the MNIST dataset is presented. 

\begin{figure}[ht!]
    \centering
    \begin{subfigure}{0.3\textwidth}
        \includegraphics[trim=5pt 5pt 5pt 5pt, clip,width=\textwidth]{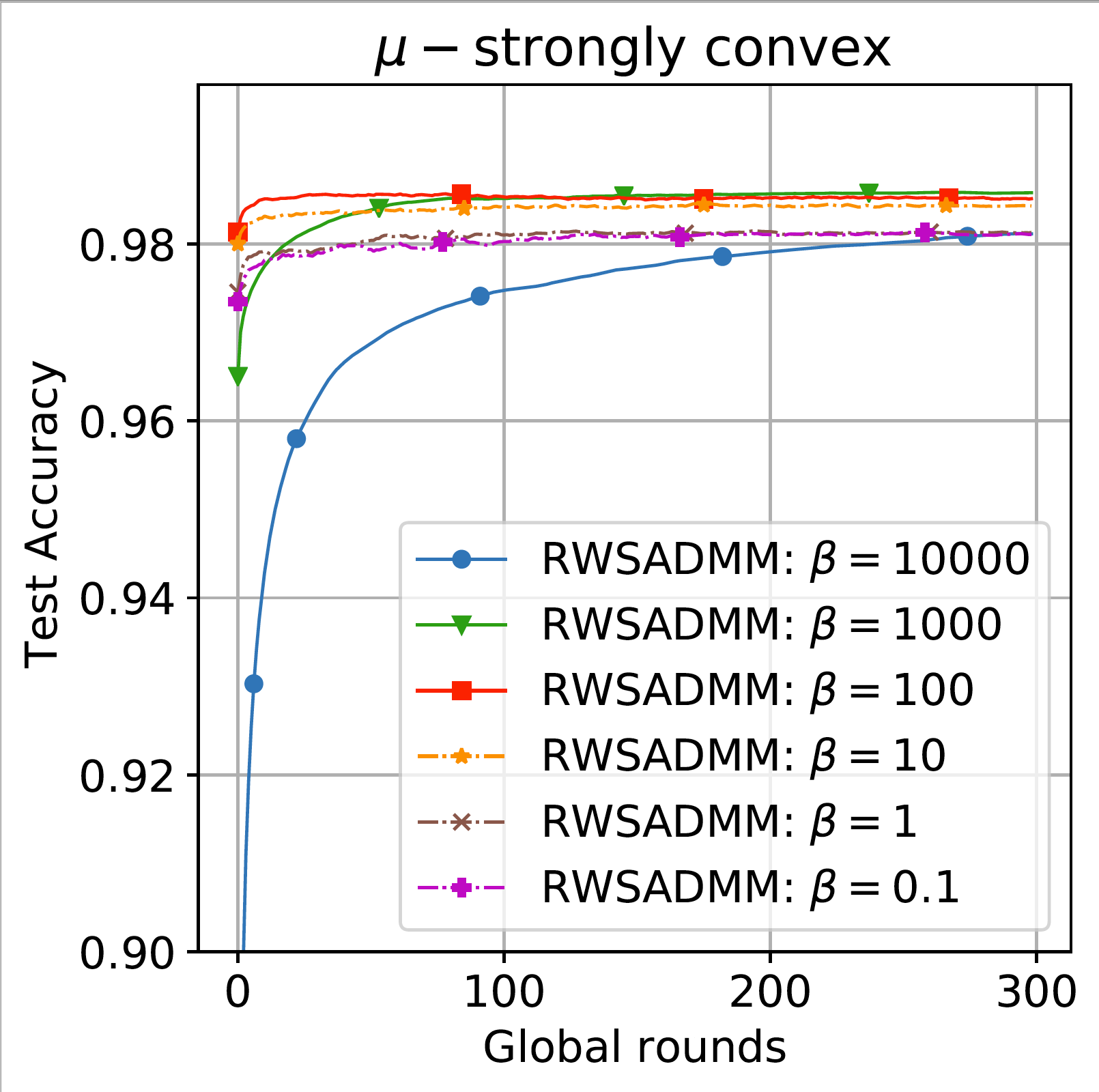}
        \caption{MLR acc}
        \label{fig:mlr-acc-betatune-mnist}
    \end{subfigure}
    \begin{subfigure}{0.3\textwidth}
        \includegraphics[trim=5pt 5pt 5pt 5pt, clip,width=\textwidth]{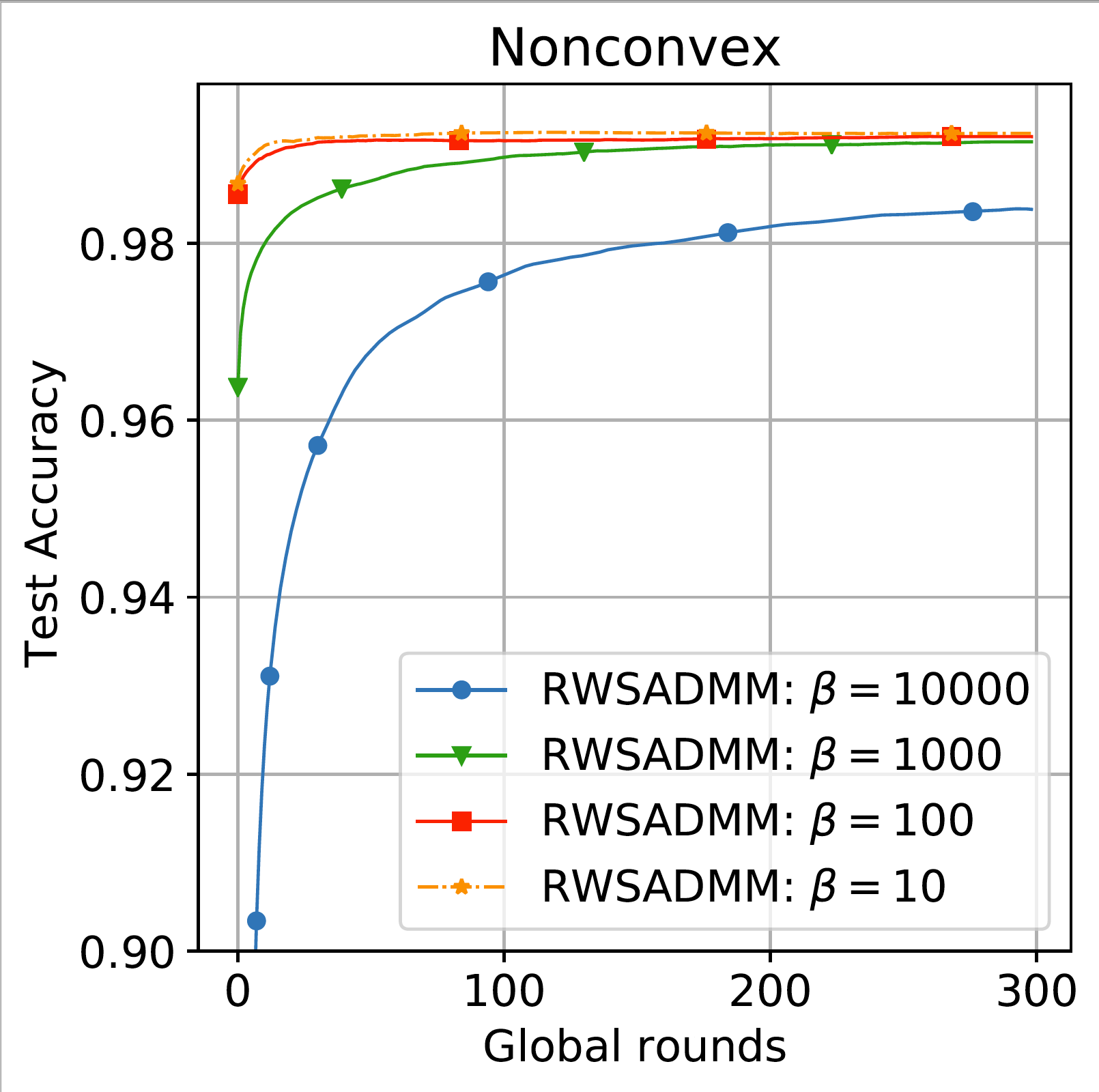}
        \caption{MLP acc}
        \label{fig:mlp-acc-betatune-mnist}
    \end{subfigure}
    \vskip\baselineskip
    \begin{subfigure}{0.3\textwidth}
        \includegraphics[trim=5pt 5pt 5pt 5pt, clip, width=\textwidth]{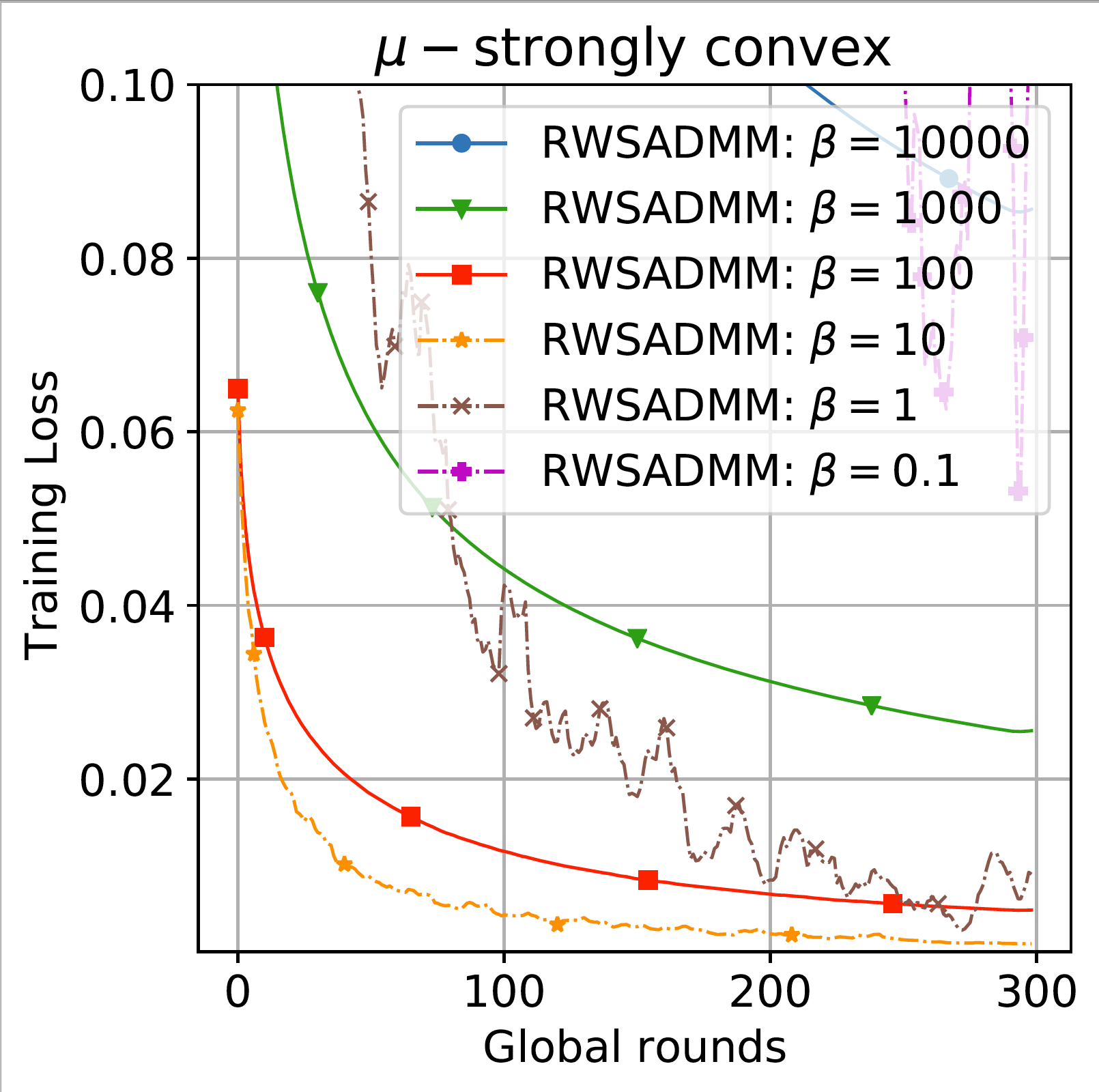}
        \caption{MLR loss}
        \label{fig:mlr-loss-betatune-mnist}
    \end{subfigure}
    \begin{subfigure}{0.3\textwidth}
        \includegraphics[trim=5pt 5pt 5pt 5pt, clip, width=\textwidth]{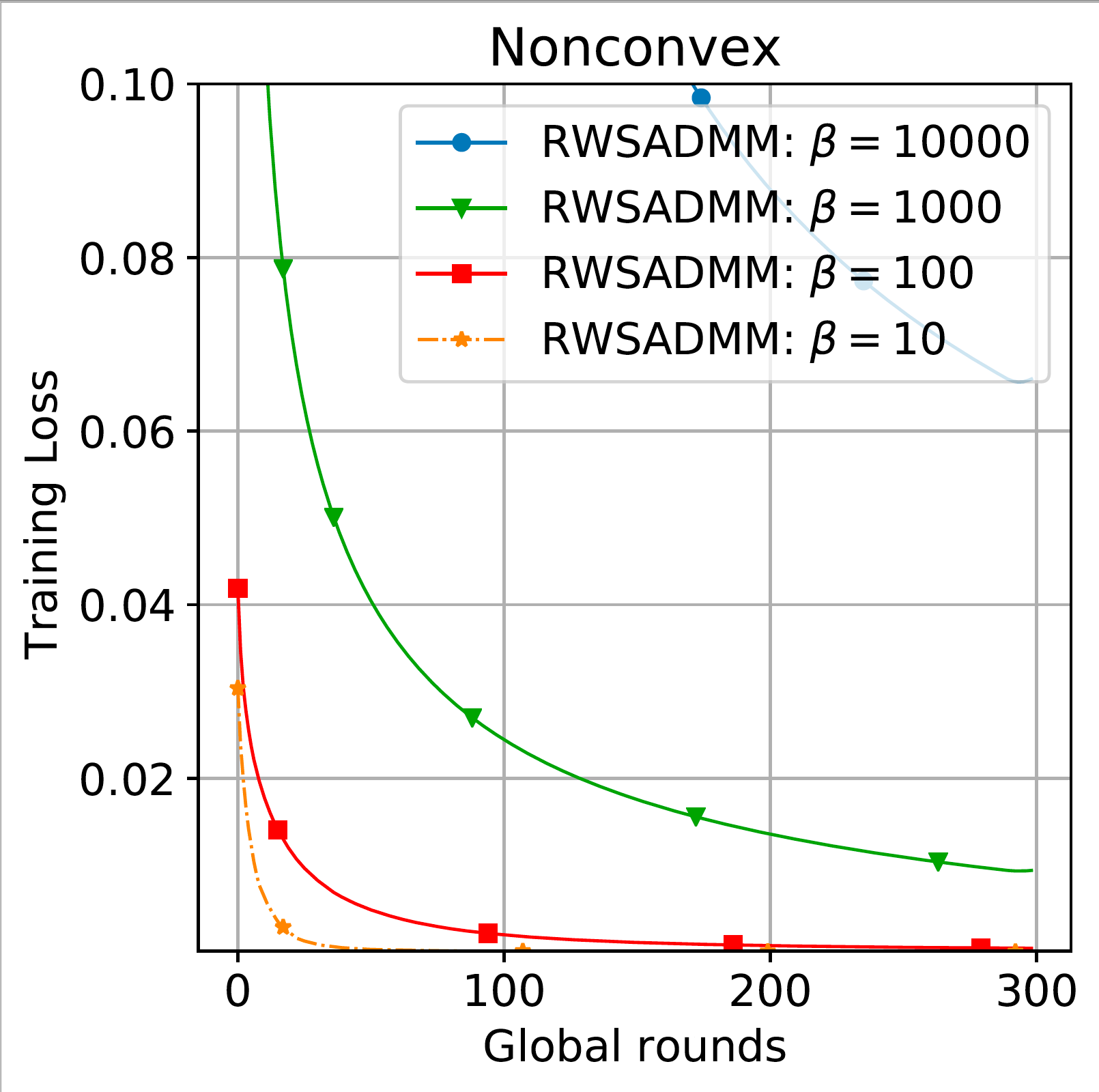}
        \caption{MLP loss}
        \label{fig:mlp-loss-betatune-mnist}
    \end{subfigure}
  \caption{Effect of $\beta$ on the convergence of RWSADMM in the MLR (\ref{fig:mlr-acc-betatune-mnist}, \ref{fig:mlr-loss-betatune-mnist}) and MLP (\ref{fig:mlp-acc-betatune-mnist}, \ref{fig:mlp-loss-betatune-mnist}) models for MNIST dataset. }
    \label{fig:betatune}
\end{figure}

The $\kappa$ parameter, which affects the dual variable $\matrx{Z}$, is also fine-tuned in the second stage. Same as the first stage, $\kappa$ is fine-tuned for the MNIST dataset for the MLR, MLP, and CNN models, and the results are shown in Fig. \ref{fig:kappatune}. 
The optimal values of $\{\kappa = 0.001, 0.01\}$ have the best performances for MLR and MLP, respectively.

\begin{figure}[ht!]
    \centering
    \begin{subfigure}{0.3\textwidth}
        \includegraphics[trim=5pt 5pt 5pt 5pt, clip, width=\textwidth]{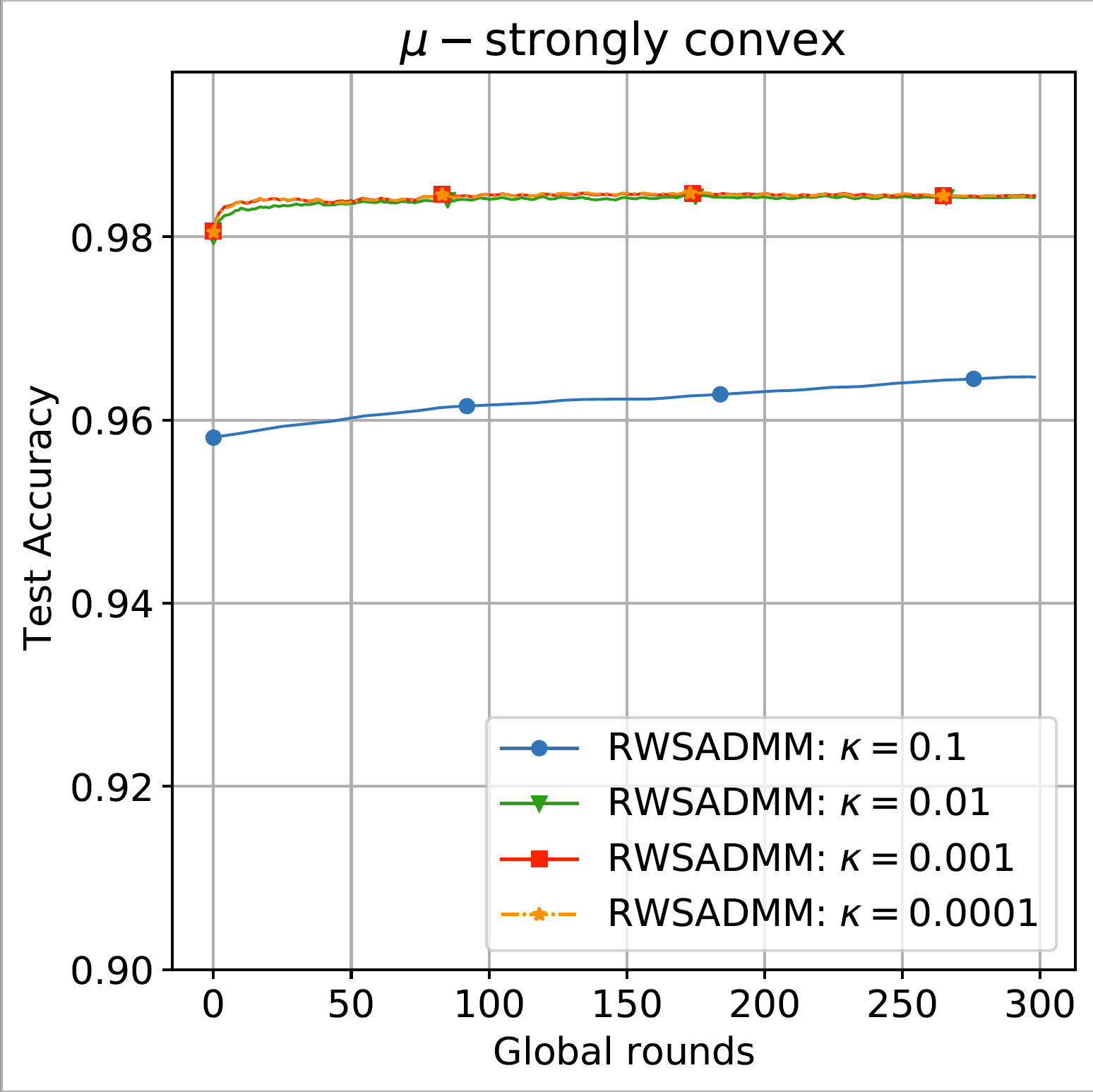}
        \caption{MLR acc}
        \label{fig:mlr-acc-kappatune-mnist}
    \end{subfigure}
    \begin{subfigure}{0.3\textwidth}
        \includegraphics[trim=5pt 5pt 5pt 5pt, clip, width=\textwidth]{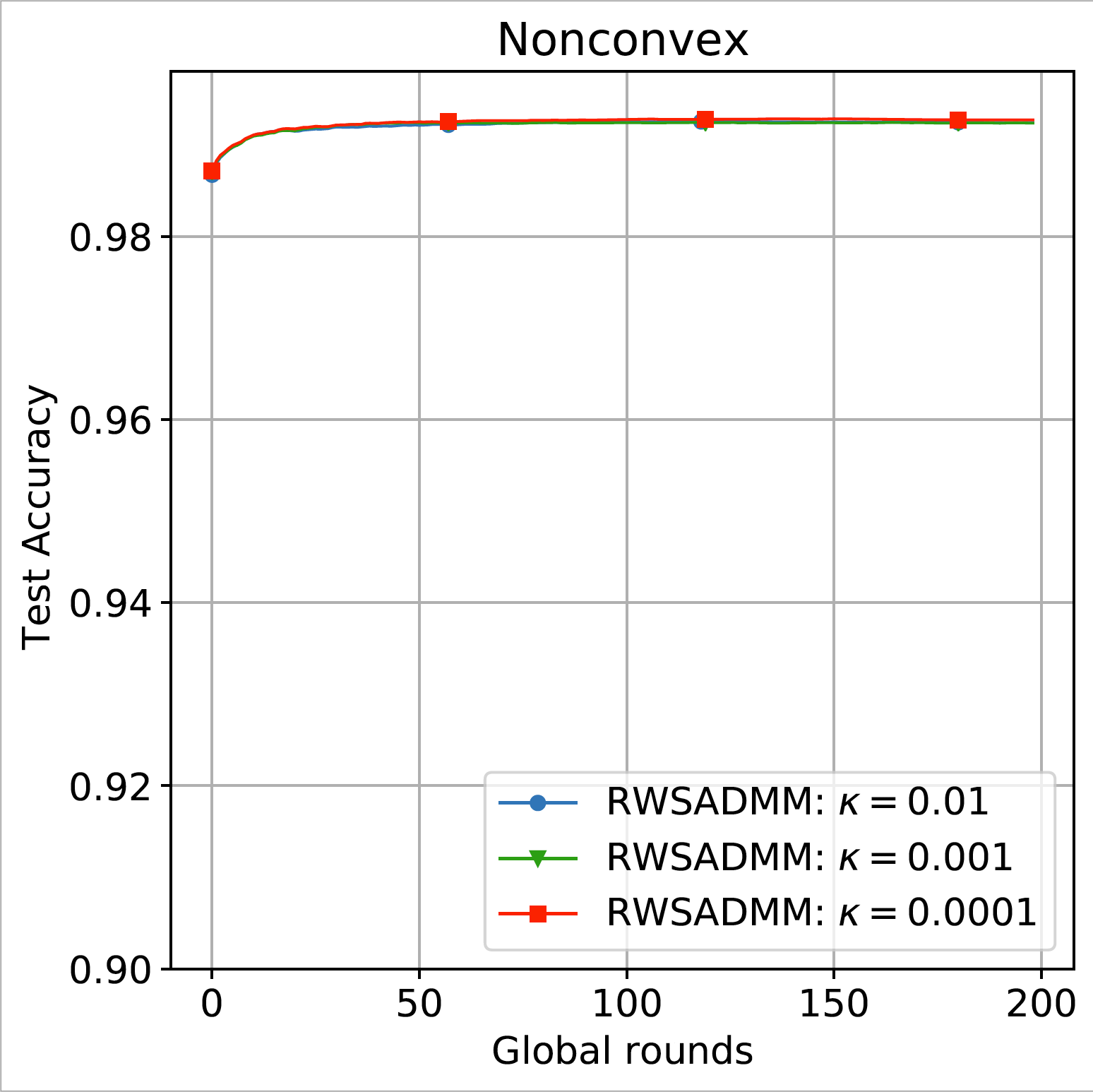}
        \caption{MLP acc}
        \label{fig:mlp-acc-kappatune-mnist}
    \end{subfigure}
    \vskip\baselineskip
    \begin{subfigure}{0.32\textwidth}
        \includegraphics[trim=5pt 5pt 25pt 5pt, clip, width=\textwidth]{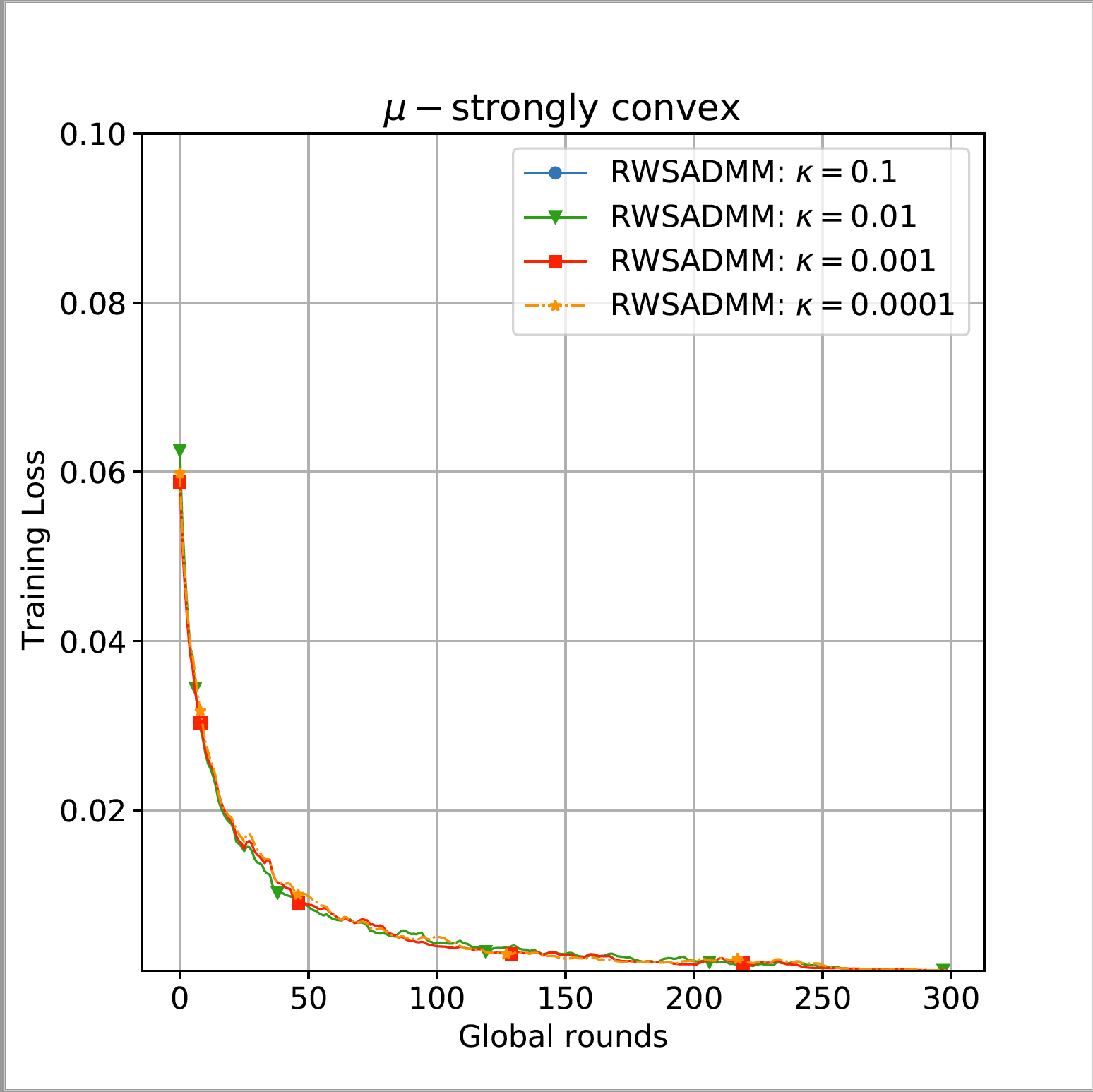}
        \caption{MLR loss}
        \label{fig:mlr-loss-kappatune-mnist}
    \end{subfigure}
    \begin{subfigure}{0.3\textwidth}
        \includegraphics[trim=5pt 5pt 5pt 5pt, clip, width=\textwidth]{images/kappatune_mnist_acc_mlp.png}
        \caption{MLP loss}
        \label{fig:mlp-loss-kappatune-mnist}
    \end{subfigure}
  \caption{Effect of $\kappa$ on the convergence of RWSADMM in the MLR (\ref{fig:mlr-acc-kappatune-mnist}, \ref{fig:mlr-loss-kappatune-mnist}) and MLP (\ref{fig:mlp-acc-kappatune-mnist}, \ref{fig:mlp-loss-kappatune-mnist}) models for MNIST dataset. }
    \label{fig:kappatune}
\end{figure}

The proximal parameter, $\epsilon$, is set to a fixed value of $\{\epsilon = 1e-5\}$ for all the experiments. For the MNIST dataset, the fine-tuned parameter values of $\beta = 10$ and $\kappa = 0.001$ for MLR, $\beta = 10$ and $\kappa = 0.01$ for MLP are used. 
For the Synthetic dataset, $\beta = 10$ and $\kappa = 0.01$ for MLR, $\beta = 100$ and $\kappa = 0.01$ for MLP models are utilized. 
Finally, for CIFAR10 dataset, $\beta = 100$ and $\kappa = 0.001$ for MLR, $\beta = 100$ and $\kappa = 1$ for MLP are used.


\subsection{CIFAR10 figures}\label{appendix: cifar10}

The performance comparison of RWSADMM, FedAvg, Per-FedAvg, pFedMe, APFL, and Ditto for the Cifar10 dataset are depicted in Fig. \ref{fig:cifar10-performance}. The fine-tuned values of $\beta=0.001$ and $\kappa=0.001$ are used for RWSADMM. The RWSADMM has a steep curve nearly reaching the optimal values from the first few rounds in strongly convex and non-convex models, indicating a faster convergence process than the benchmark algorithms. Also, RWSADMM shows a clear advantage for MLR or DNN models for accuracy.

\begin{figure}[ht!]
    \centering
    \begin{subfigure}{0.3\textwidth}
        \includegraphics[trim=5pt 5pt 5pt 5pt, clip, width=\textwidth]{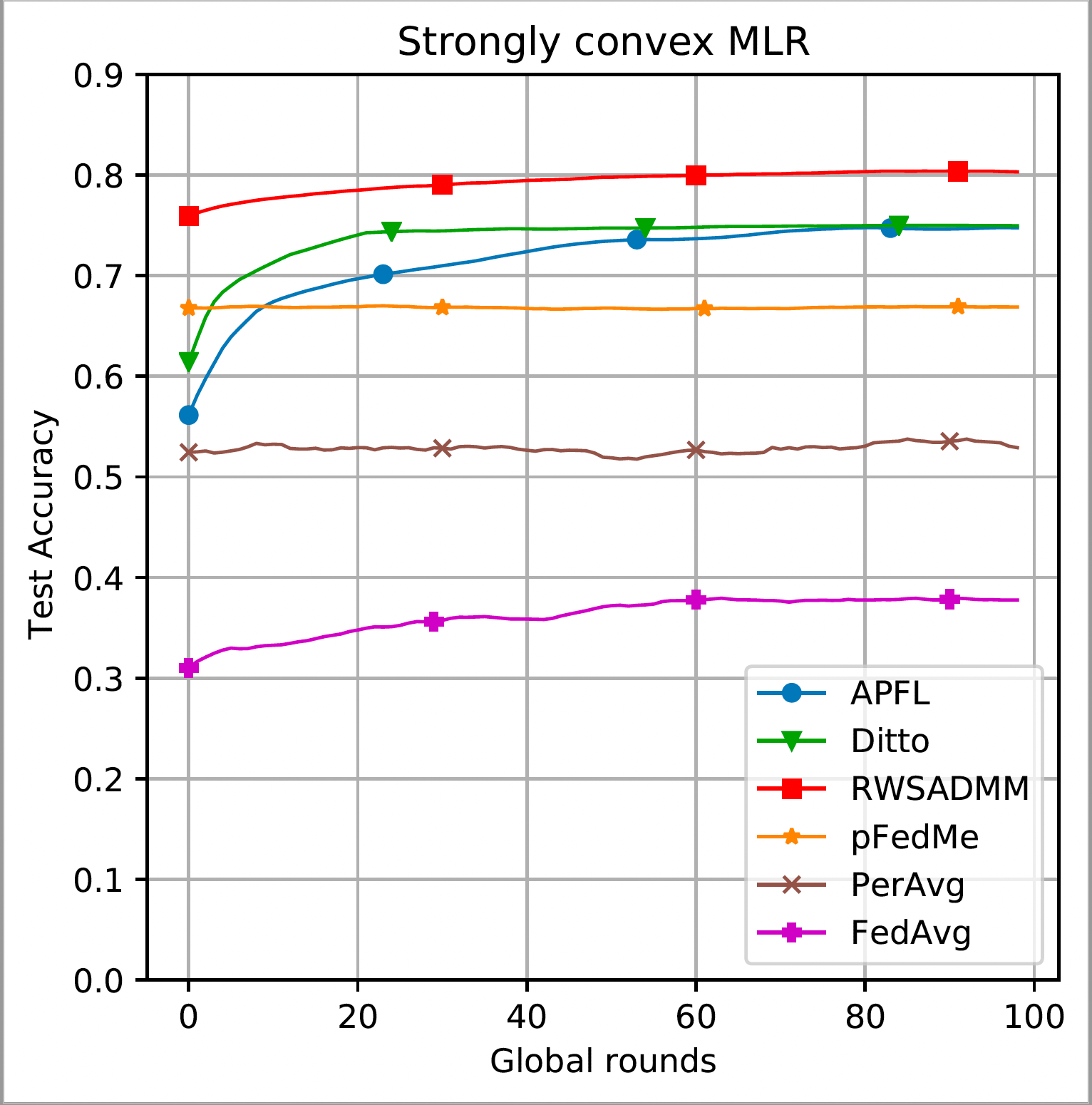}
        \caption{MLR acc}
        \label{fig:mlr-acc-cifar10-all}
    \end{subfigure}
    \begin{subfigure}{0.3\textwidth}
        \includegraphics[trim=5pt 5pt 5pt 5pt, clip, width=\textwidth]{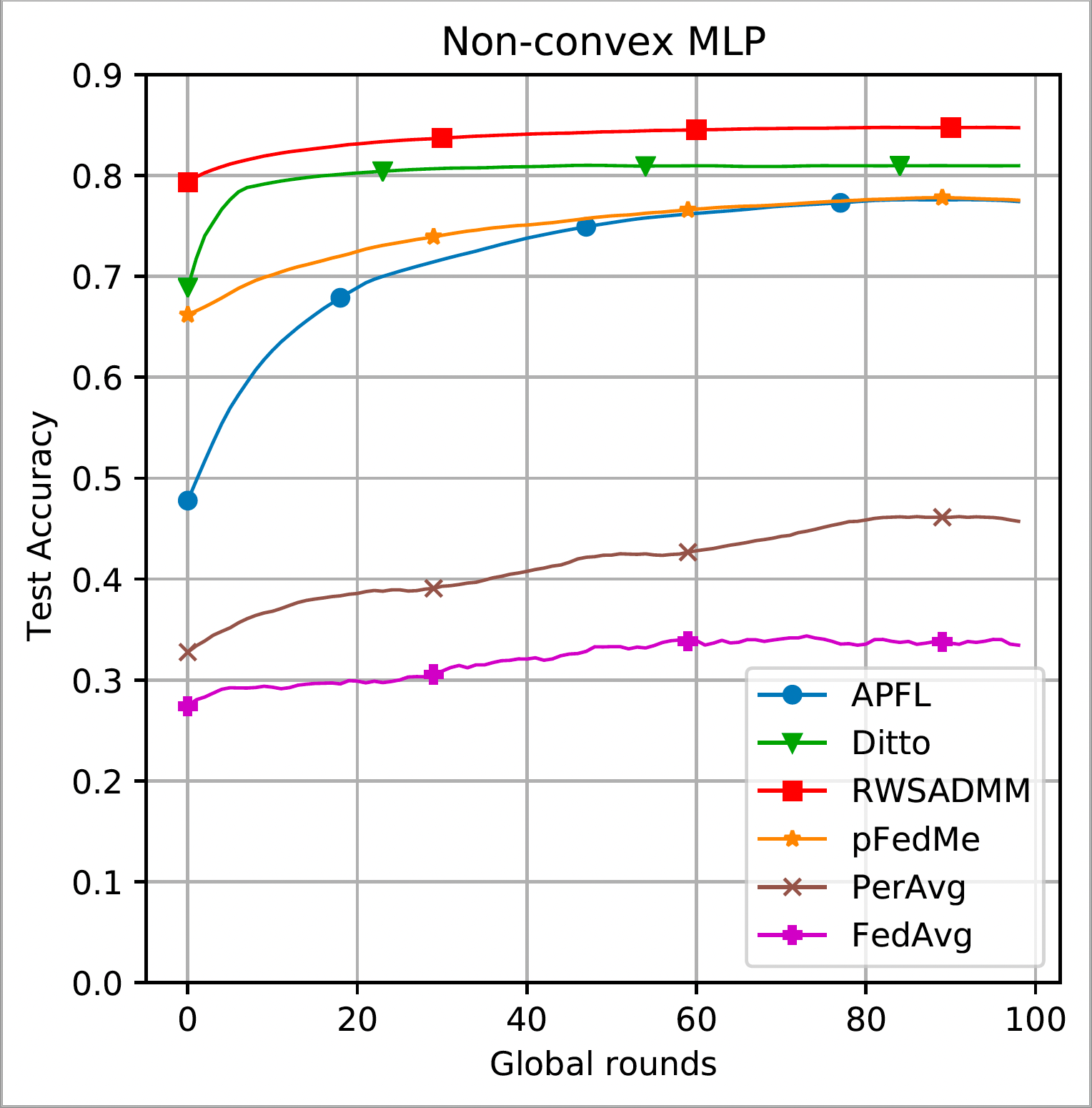}
        \caption{MLP acc}
        \label{fig:mlp-acc-cifar10-all}
    \end{subfigure}
    \begin{subfigure}{0.3\textwidth}
        \includegraphics[trim=5pt 5pt 5pt 5pt, clip, width=\textwidth]{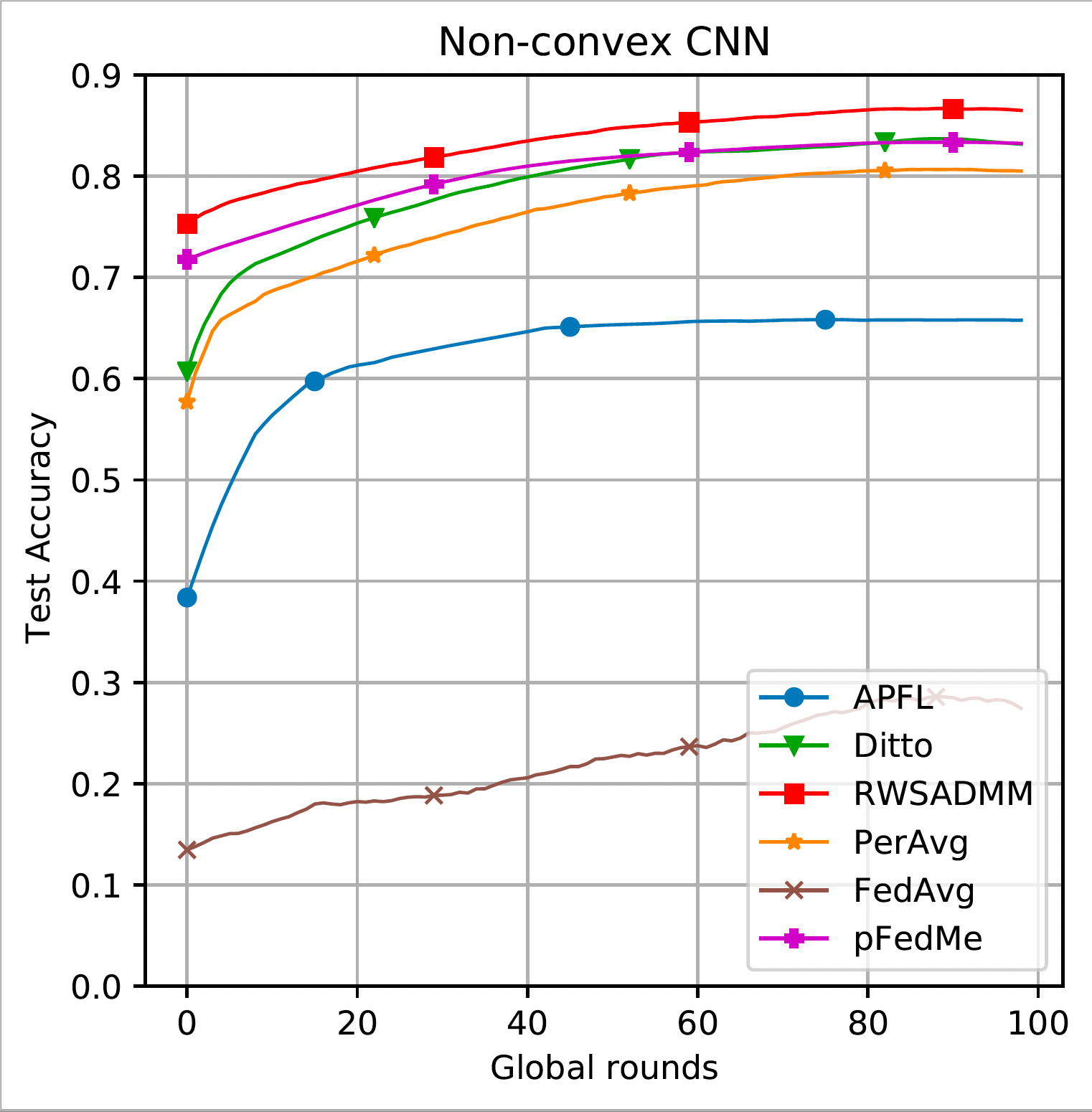}
        \caption{CNN acc}
        \label{fig:cnn-acc-cifar10-all}
    \end{subfigure}
    \vskip\baselineskip
    
    \begin{subfigure}{0.3\textwidth}
        \includegraphics[trim=5pt 5pt 5pt 5pt, clip, width=\textwidth]{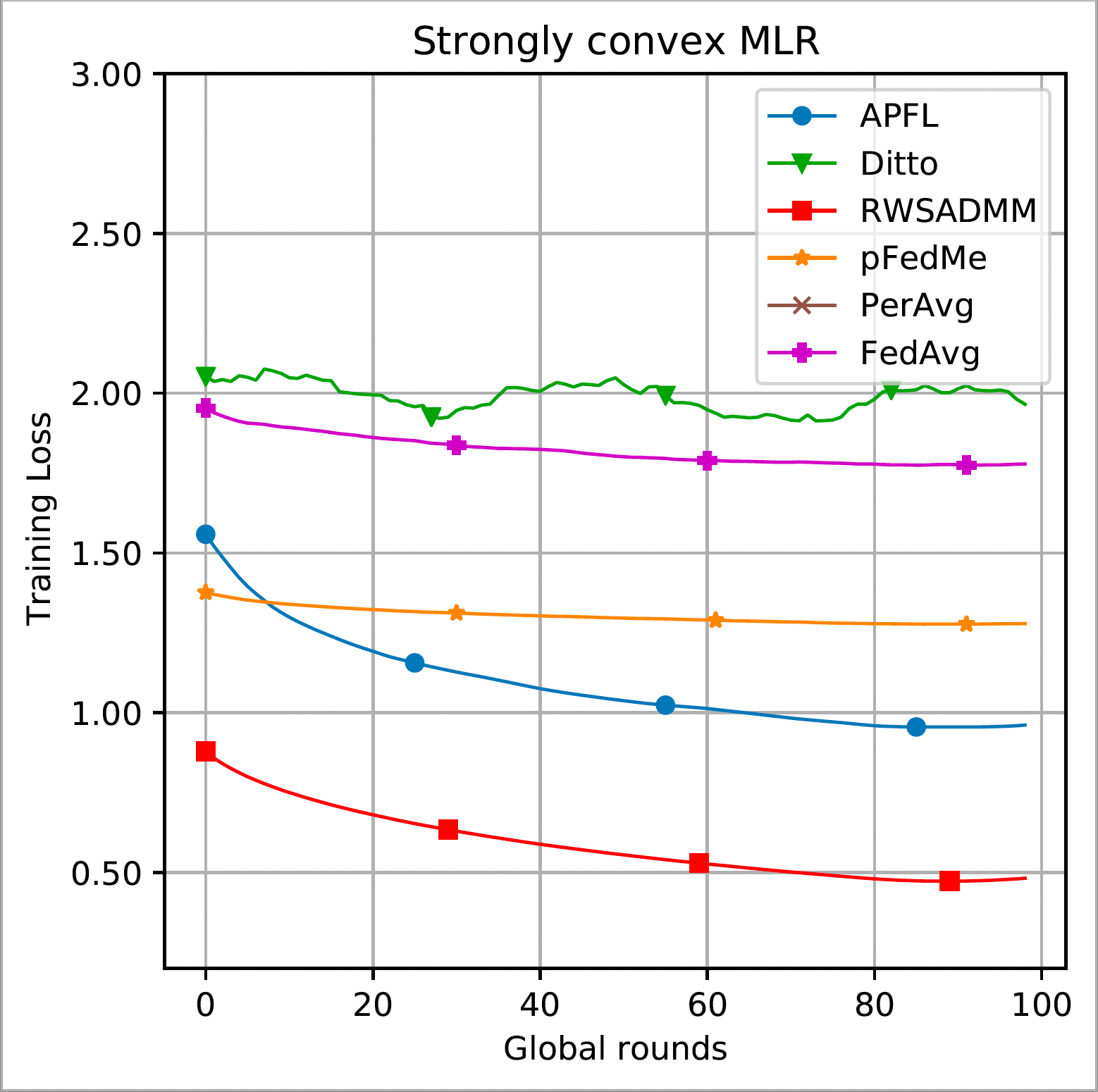}
        \caption{MLR loss}
        \label{fig:mlr-loss-cifar10-all}
    \end{subfigure}
    \begin{subfigure}{0.3\textwidth}
        \includegraphics[trim=5pt 5pt 5pt 5pt, clip, width=\textwidth]{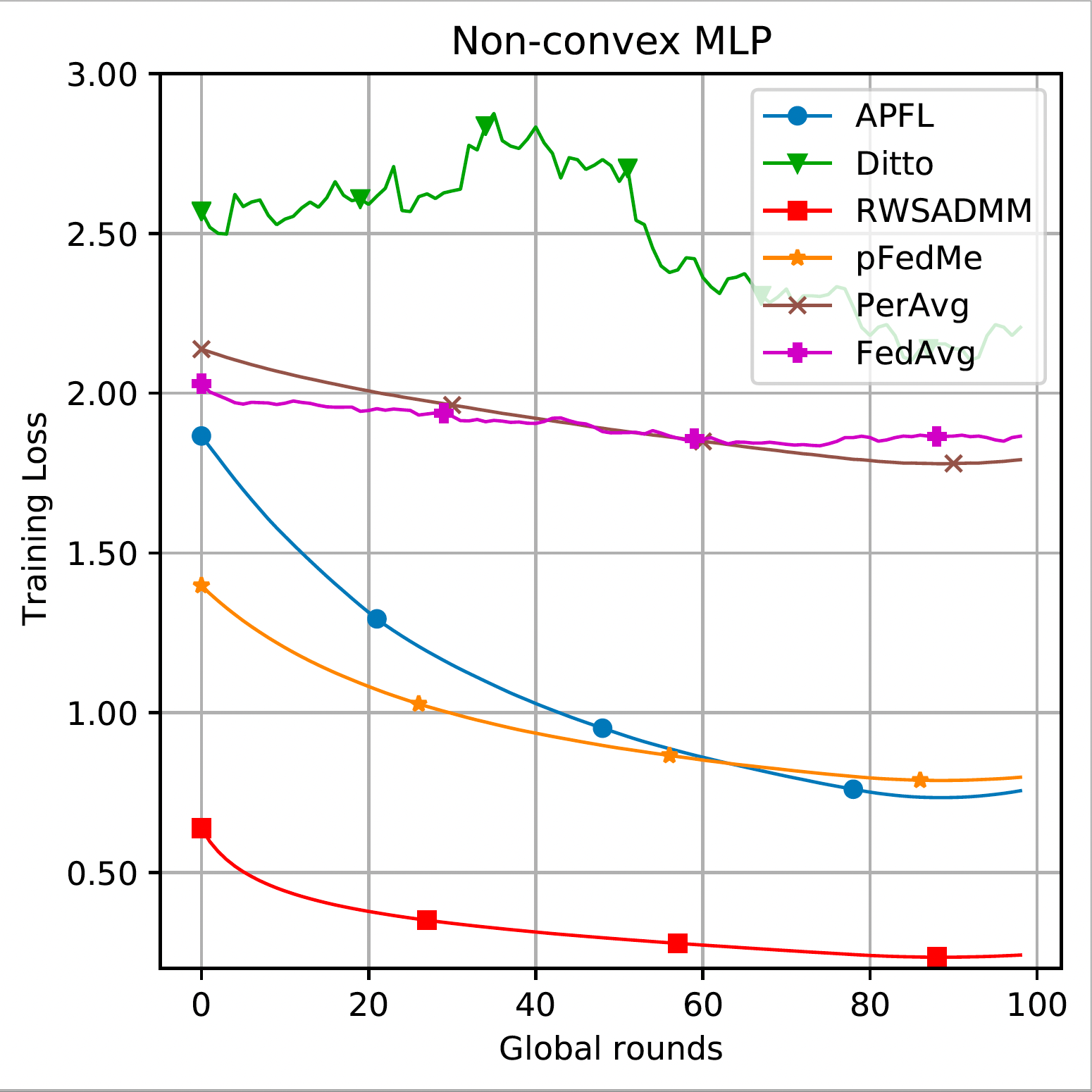}
        \caption{MLP loss}
        \label{fig:mlp-loss-cifar10-all}
    \end{subfigure}
    \begin{subfigure}{0.3\textwidth}
        \includegraphics[trim=5pt 5pt 5pt 5pt, clip, width=\textwidth]{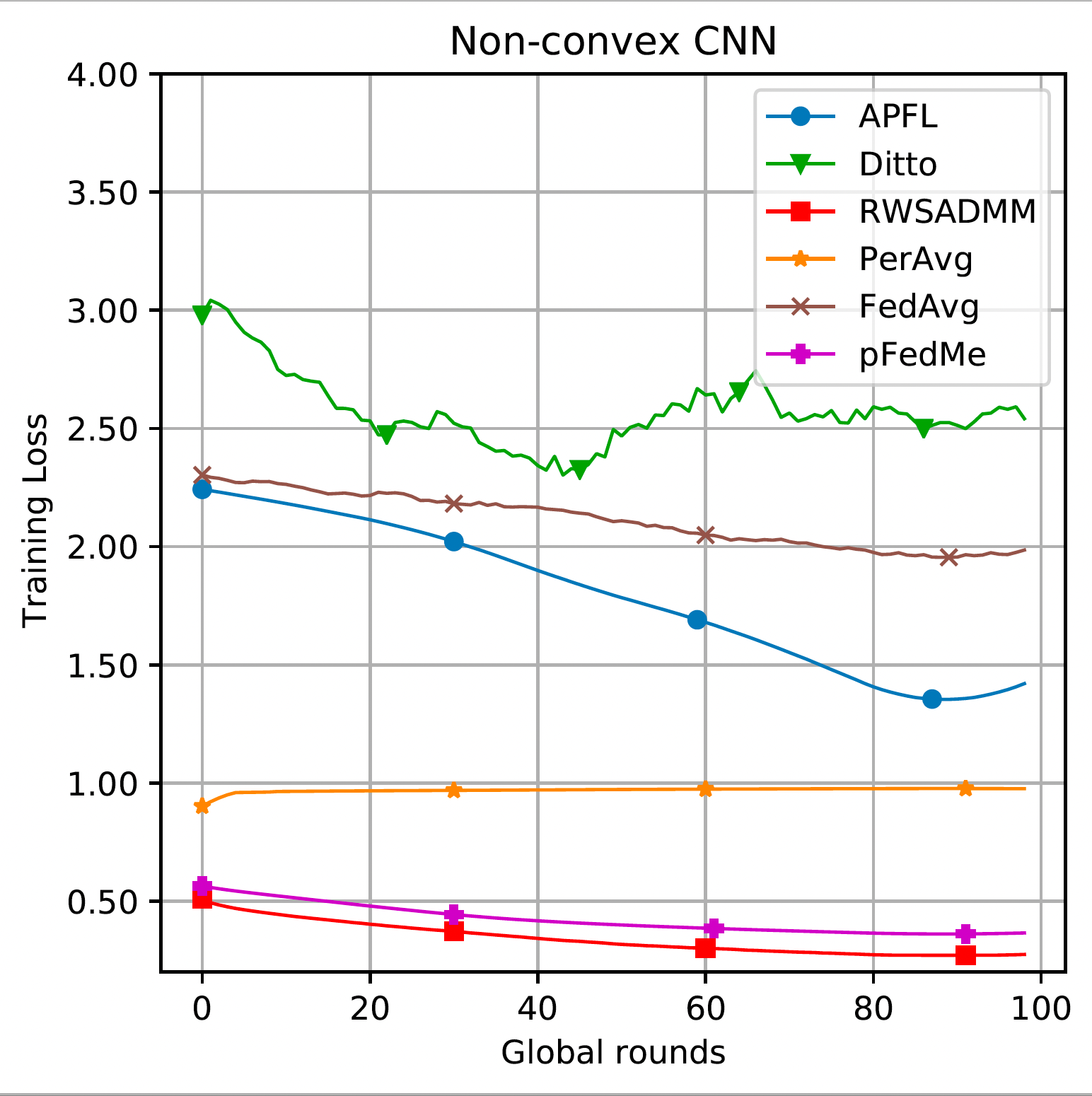}
        \caption{CNN loss}
        \label{fig:cnn-loss-cifar10-all}
    \end{subfigure}
    
    \caption{Performance comparison (test accuracy and training loss) of RWSADMM, pFedMe, Per-Avg, FedAvg, APFL, and Ditto for CIFAR10 dataset for strongly convex MLR, non-convex MLP, and non-convex CNN models. }
    \label{fig:cifar10-performance}
\end{figure}

\subsection{Synthetic figures}\label{appendix: synthetic}

Due to the 1D nature of the synthetic dataset, only MLR and MLP models are utilized for it. The performance comparison of RWSADMM, FedAvg, Per-FedAvg, pFedMe, APFL, and Ditto for the Synthetic dataset are depicted in Fig. \ref{fig:synthetic-performance}. The fine-tuned values of $\beta=100$ and $\kappa=0.001$ are used for RWSADMM for all the settings. By comparing the accuracy and loss diagrams, RWSADMM performs visibly better than the rest of the algorithms in both strongly convex and non-convex settings. The accuracy rate of RWSADMM shows accuracy improvement compared with the benchmark algorithms by the margin of $14\%$ for both MLR and MLP models.

\begin{figure}[ht!]
    \centering
    \begin{subfigure}{0.3\textwidth}
        \includegraphics[trim=5pt 5pt 5pt 5pt, clip, width=\textwidth]{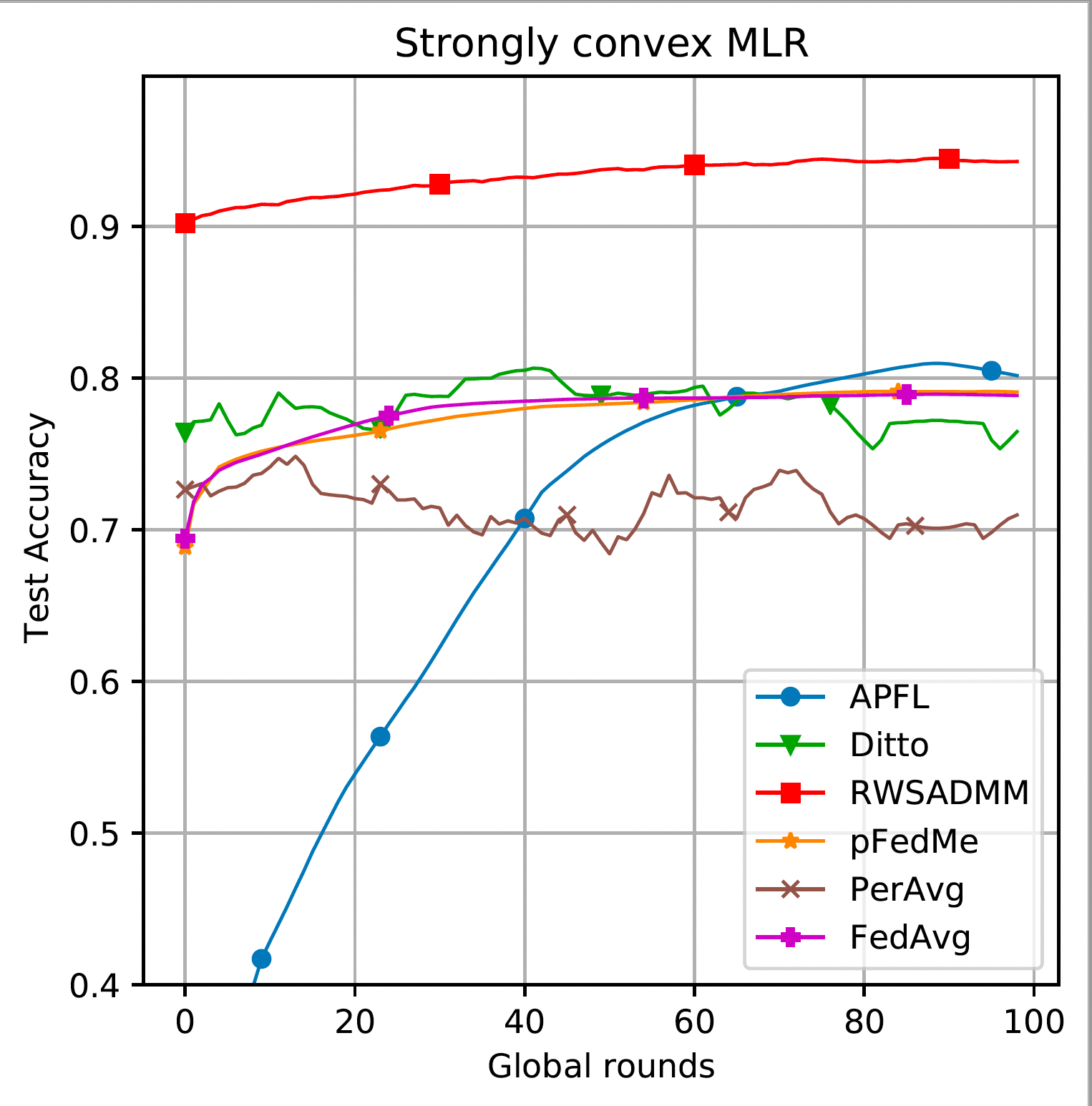}
        \caption{MLR acc}
        \label{fig:mlr-acc-synthetic-all}
    \end{subfigure}
    \begin{subfigure}{0.3\textwidth}
        \includegraphics[trim=5pt 5pt 5pt 5pt, clip, width=\textwidth]{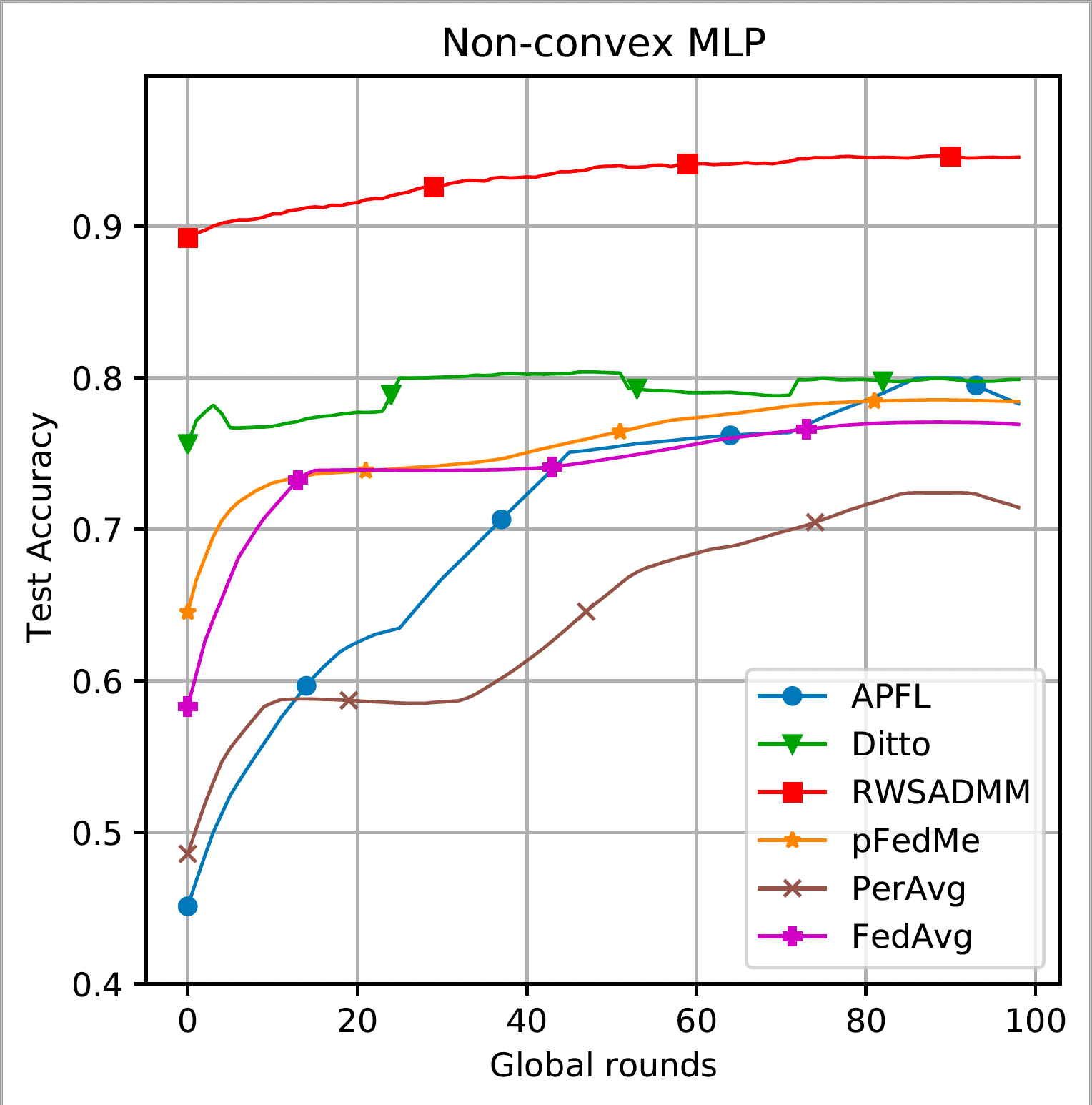}
        \caption{MLP acc}
        \label{fig:mlp-acc-synthetic-all}
    \end{subfigure}
    \vskip\baselineskip
    \begin{subfigure}{0.3\textwidth}
        \includegraphics[trim=5pt 5pt 5pt 5pt, clip, width=\textwidth]{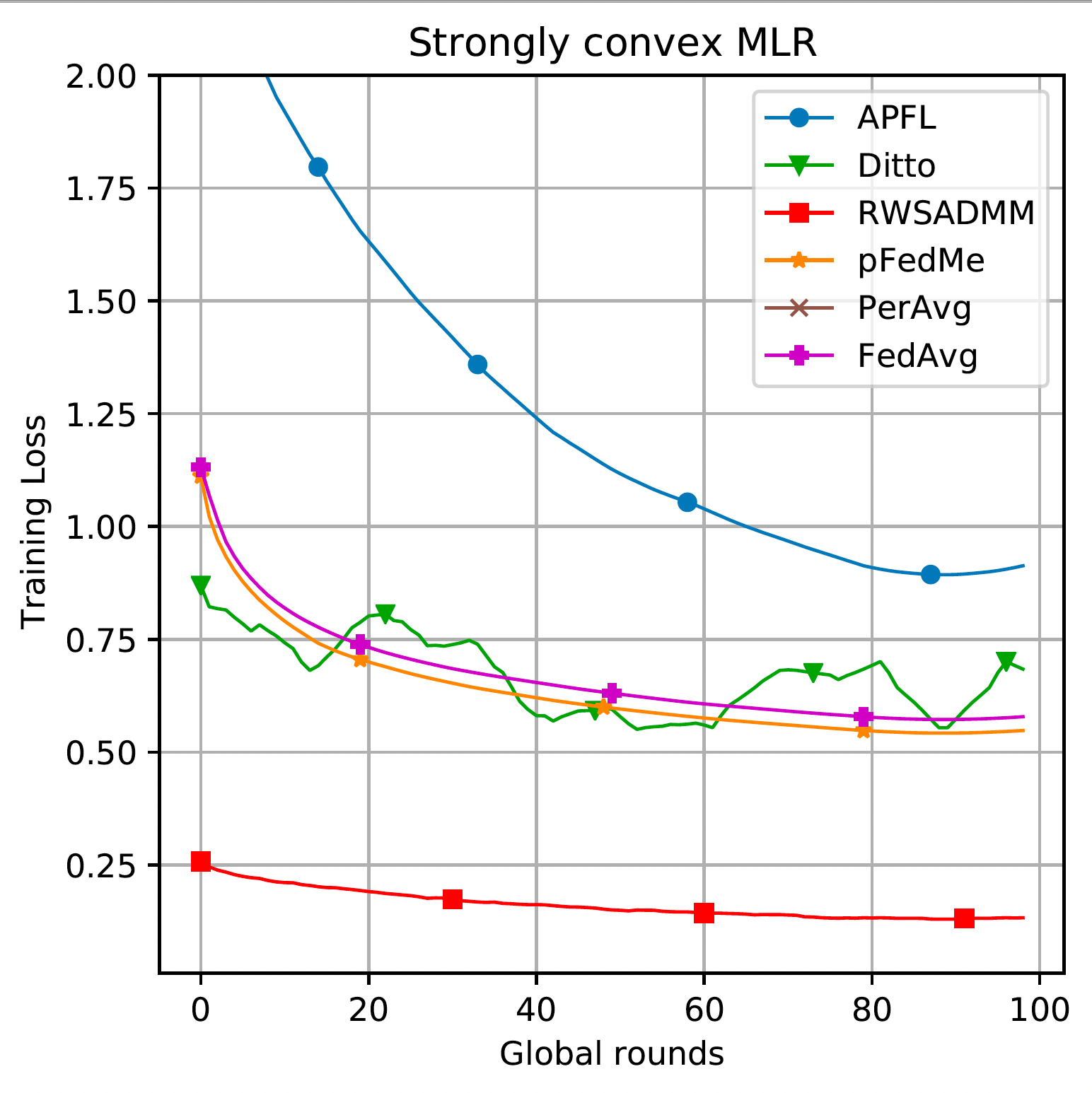}
        \caption{MLR loss}
        \label{fig:mlr-loss-synthetic-all}
    \end{subfigure}
    \begin{subfigure}{0.3\textwidth}
        \includegraphics[trim=5pt 5pt 5pt 5pt, clip, width=\textwidth]{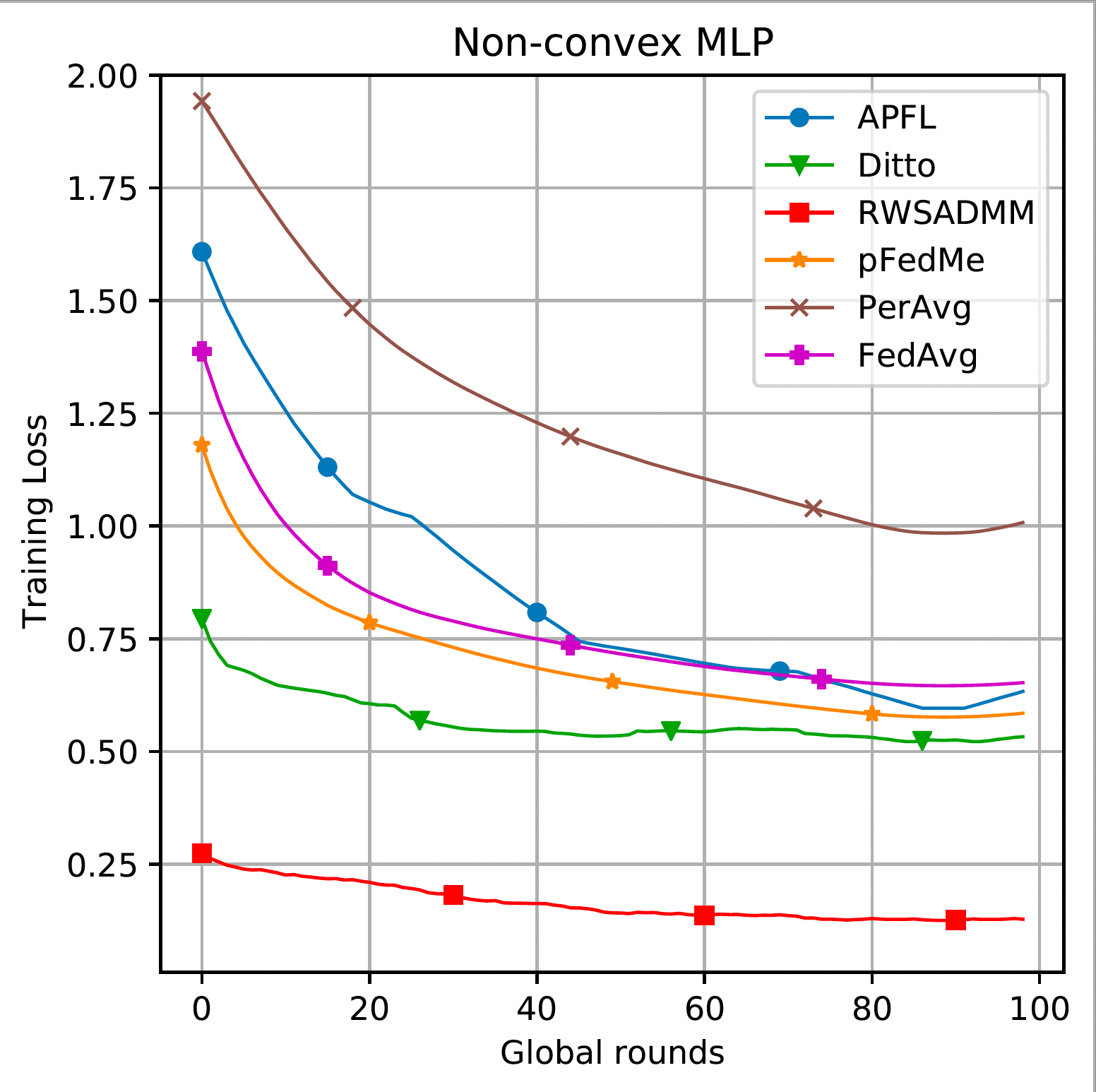}
        \caption{MLP loss}
        \label{fig:mlp-loss-synthetic-all}
    \end{subfigure}
    
    \caption{Performance comparison (test accuracy and training loss) of RWSADMM, pFedMe, Per-Avg, FedAvg, APFL, and Ditto for Synthetic dataset and different settings: strongly convex MLR and non-convex MLP. }
    \label{fig:synthetic-performance}
\end{figure}

\subsection{Different Number of Users} \label{appn:diff_users}

This appendix examines the effect of modifying the number of users/clients in the graph.  The configuration values are all the same as the optional configuration for the RWSADMM algorithm with 20 agents. We keep the size of the neighborhood and the overall graph configurations the same for all the experiments. The batch size is decreased to 5 due to memory limitations, and the number of iterations is increased to 500. The total of users tested is 20, 50, and 100 users. The test accuracy and train loss progress curves for different numbers of clients are shown in Fig. \ref{fig:mnist-performance-diff-users}.

\begin{figure}[ht!]
    \centering
    \begin{subfigure}{0.3\textwidth}
        \includegraphics[trim=5pt 5pt 5pt 5pt, clip, width=\textwidth]{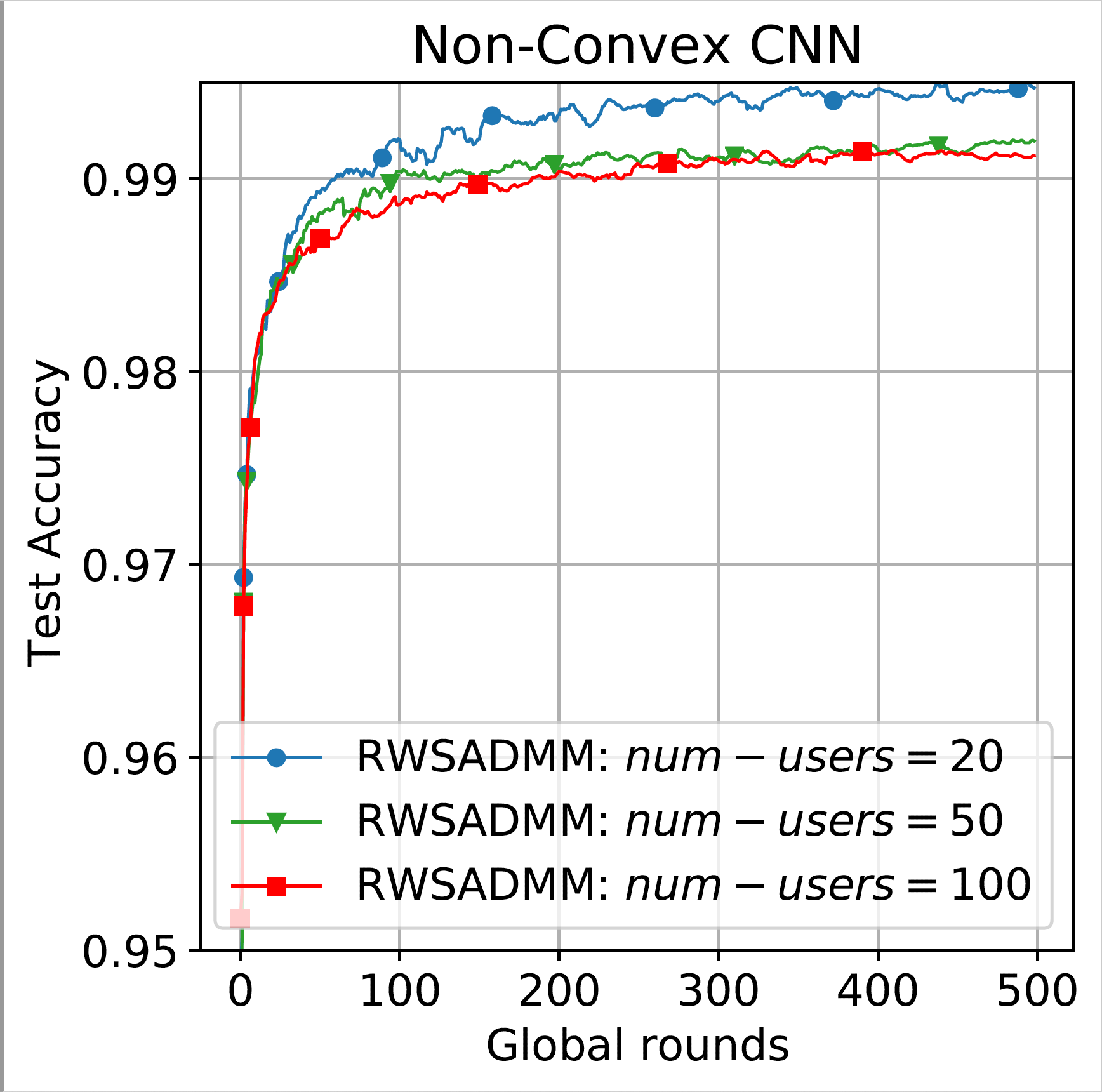}
        \caption{CNN acc}
        \label{fig:cnn-acc-mnist-diff-users}
    \end{subfigure}
    \begin{subfigure}{0.3\textwidth}
        \includegraphics[trim=5pt 5pt 5pt 5pt, clip, width=\textwidth]{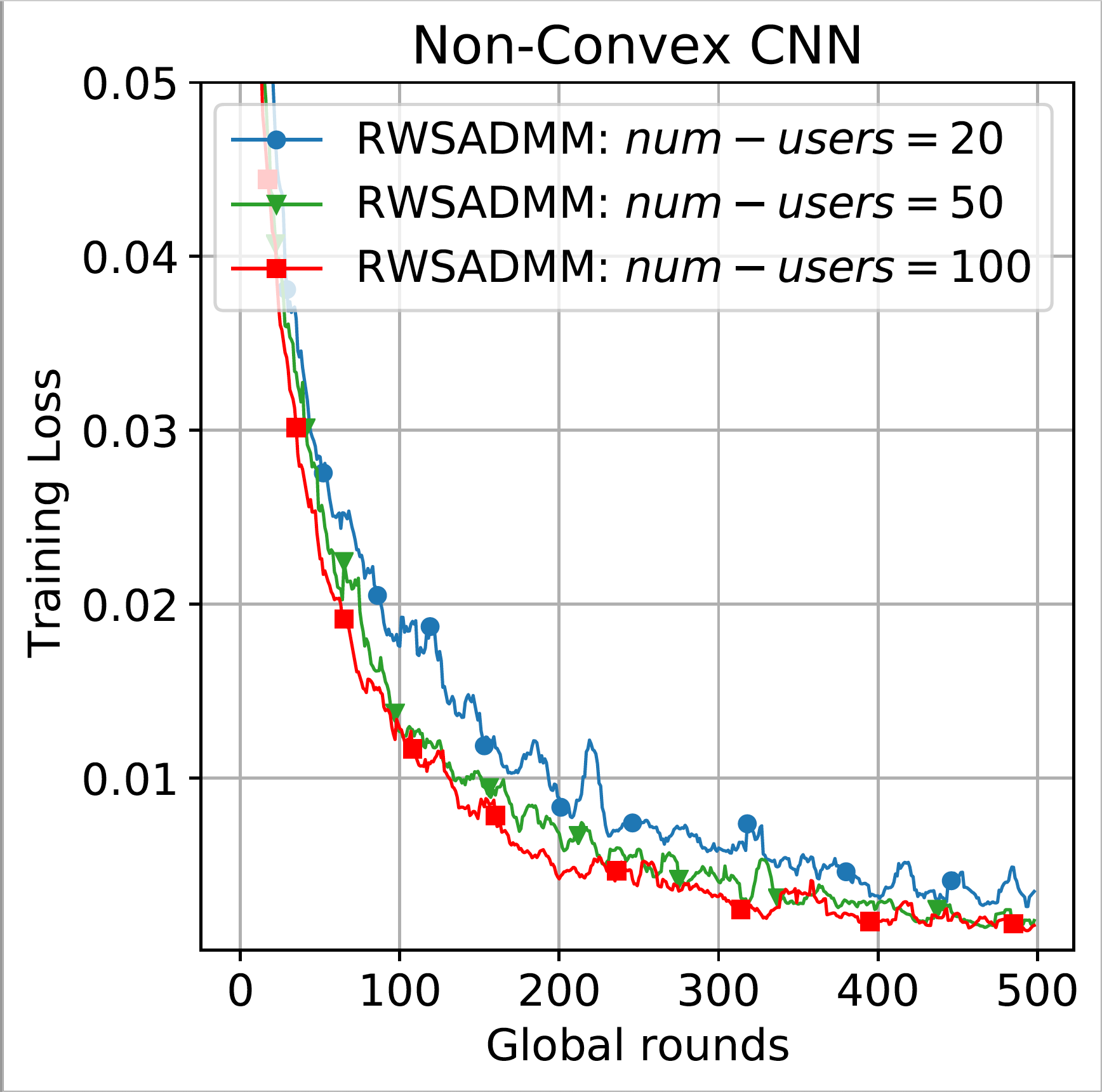}
        \caption{CNN loss}
        \label{fig:cnn-loss-mnist-diff-users}
    \end{subfigure}
    \caption{Performance comparison (test accuracy and training loss) of RWSADMM for different graphs with 20, 50, and 100 users/nodes in the graph. }
    \label{fig:mnist-performance-diff-users}
\end{figure}

As the number of users increases and the graph expands, the convergence gets more challenging, the test accuracy rates slightly decrease, and the time duration of the algorithm increases. The final test accuracy rates and time consumption of different graphs are presented in Table \ref{tab: accuracies-times-different-users}.

\begin{table}[ht!] 
    \centering 
    \scalebox{0.85}{%
    \begin{tabular}{lccccc}
    \toprule
    \multirow{2}{*}{RWSADMM} & \multicolumn{2}{c}{\textbf{MNIST}} \\ \cline{2-3}
    & \multicolumn{2}{c}{CNN}  \\ \cline{2-3}
    \# of users & acc(\%) & t(s)   \\
     \toprule
    20  & $99.57$  & 2929 \\
    50  & $99.25$ & 6994 \\
    100  & $99.19$  & 13878 \\
    \bottomrule
    \end{tabular}
    }
    \vspace{5pt}
    \caption{Test accuracy rate and time duration comparison of RWSADMM for different graphs with 20, 50, and 100 users/nodes in the graph.  }
    \label{tab: accuracies-times-different-users}
\end{table}






\end{document}